\newtheorem{assum}{Assumption} 
\definecolor{light-gray}{gray}{0.89}
\newcommand*{\colorboxed}{}
\def\colorboxed#1#{%
	\colorboxedAux{#1}%
}
\newcommand*{\colorboxedAux}[3]{%
	\begingroup
	\colorlet{cb@saved}{.}%
	\color#1{#2}%
	\boxed{%
		\color{cb@saved}%
		#3%
	}%
	\endgroup
}
\DeclareMathOperator*{\argmin}{arg\,min}
\begin{document}

\title{Lower Bounds and Accelerated Algorithms for Bilevel Optimization}

\author{\name Kaiyi\ Ji \email ji.367@osu.edu \\
       \addr Department of Electrical and Computer Engineering\\
       The Ohio State University\\
      Columbus, OH 98195-4322, USA
       \AND
        \name Yingbin\ Liang \email liang.889@osu.edu \\
       \addr Department of Electrical and Computer Engineering\\
       The Ohio State University\\
      Columbus, OH 98195-4322, USA}

\editor{}

\maketitle

\begin{abstract}
Bilevel optimization has recently attracted growing interests due to its wide applications in modern machine learning problems. Although recent studies have characterized the convergence rate for several such popular algorithms, it is still unclear how much further these convergence rates can be improved. In this paper, we address this fundamental question from two perspectives. First, we provide the first-known lower complexity bounds of $\widetilde{\Omega}(\frac{1}{\sqrt{\mu_x}\mu_y})$ and $\widetilde \Omega\big(\frac{1}{\sqrt{\epsilon}}\min\{\frac{1}{\mu_y},\frac{1}{\sqrt{\epsilon^{3}}}\}\big)$ respectively for strongly-convex-strongly-convex and convex-strongly-convex bilevel optimizations. Second, we propose an accelerated bilevel optimizer named AccBiO, for which we provide the first-known complexity bounds without the gradient boundedness assumption (which was made in existing analyses) under the two aforementioned geometries. We also provide significantly tighter upper bounds than the existing complexity when the bounded gradient assumption does hold. We show that AccBiO achieves the optimal results (i.e., the upper and lower bounds match up to logarithmic factors)
when the inner-level problem takes a quadratic form with a constant-level condition number. Interestingly, our lower bounds under both geometries are larger than the corresponding optimal complexities of minimax optimization, establishing that bilevel optimization is provably more challenging than minimax optimization. 
\end{abstract}

\begin{keywords}
Bilevel optimization, lower bounds, accelerated algorithms, computational complexity, convergence rate, optimality. 
\end{keywords}

\section{Introduction}
Bilevel optimization was first introduced by~\cite{bracken1973mathematical}, and since then has been studied for decades~\citep{hansen1992new,shi2005extended,moore2010bilevel}. Recently, bilevel optimization  has attracted growing interests due to its important role in various machine learning applications including meta-learning~\citep{franceschi2018bilevel,rajeswaran2019meta}, hyperparameter optimization~\citep{franceschi2018bilevel,feurer2019hyperparameter}, imitation learning~\citep{arora2020provable}, and network architecture search~\citep{liu2018darts,he2020milenas}. A general formulation of unconstrained bilevel optimization can be written as follows. 
\begin{align}\label{obj}
&\min_{x\in\mathbb{R}^p} \Phi(x):=f(x, y^*(x)),\quad \mbox{s.t.}\;\;y^*(x)= \argmin_{y\in\mathbb{R}^{q}} g(x,y),
\end{align} 

\vspace{-0.2cm}
\noindent where $f$ and $g$ are continuously differentiable functions. The problem~\cref{obj} contains two optimization procedures: at the inner level we search  $y^*(x)$ as the minimizer of $g(x,y)$ with respect to (w.r.t.) $y$ given $x$, and 
at the outer level we minimize the objective function $\Phi(x)$ w.r.t. $x$, which includes the compositional dependence on $x$ via $y^*(x)$.


Most theoretical studies of bilevel optimization algorithms have focused on {\em the asymptotic} 
 analysis without the convergence rate characterization. For example, \cite{franceschi2018bilevel,shaban2019truncated} established the asymptotic convergence for gradient-based approaches when there is one single solution for the inner-level problem, and \cite{liu2020generic,li2020improved} extended the analysis to the setting where the inner-level problem allows multiple solutions. 
 The {\em finite-time} analysis that characterizes the convergence rate  of bilevel optimization algorithms is rather limited except a few studies recently.  \cite{grazzi2020iteration} provided the iteration complexity of two dominant types of strategies, i.e., approximate implicit differentiation (AID) and iterative differentiation (ITD), 
  for approximating the hypergradient $\nabla \Phi(x)$,  but did not characterize the finite-time convergence for the entire execution of algorithms.  \cite{ghadimi2018approximation} proposed an AID-based bilevel approximation (BA) algorithm as well as an accelerated variant ABA, and analyzed their finite-time complexities under different loss geometries. In particular, the complexity upper bounds of BA  and ABA are given by $\mathcal{\widetilde O}(\frac{1}{\mu_y^{6}\mu_x^{2}})$ and $\mathcal{ \widetilde O}(\frac{1}{\mu_y^{3}\mu_x })$ for the strongly-convex-strongly-convex setting where $\Phi(\cdot)$ is $\mu_x$-strongly-convex and $g(x,\cdot)$ is $\mu_y$-strongly-convex,  $\mathcal{O}\big(\frac{1}{\mu_y^{11.25}\epsilon^{1.25}}\big)$ and $\mathcal{ O}\big(\frac{1}{\mu_y^{6.75}\epsilon^{0.75}} \big)$ for the convex-strongly-convex setting, and $\mathcal{ O}\big(\frac{1}{\mu_y^{6.25}\epsilon^{1.25}} \big)$ for the nonconvex-strongly-convex setting.  \cite{ji2021bilevel} further improved the bound for the nonconvex-strongly-convex setting to $\mathcal{O}\big(\frac{1}{\mu_y^{4}\epsilon }\big)$.

In this paper, we address several open and important questions about bilevel optimization. We first observe 
 that the existing complexity results on  {\bf bilevel} optimization are much worse than those on {\bf minimax} optimization, which is a special case of bilevel optimization with $f(x,y)=g(x,y)$. For example, for the convex-strongly-convex case, it was shown in~~\cite{lin2020near}  that the optimal complexity for {\bf minimax} optimization is given by $\mathcal{\widetilde O}\big(\frac{1}{\epsilon^{0.5}\mu_y^{0.5}}\big)$, which is much smaller than the best known $\mathcal{ \widetilde O}\big(\frac{1}{\mu_y^{6.75}\epsilon^{0.75} }\big)$ for {\bf bilevel} optimization. Similar observations hold for the strongly-convex-strongly-convex setting. Therefore, one fundamental question arises.
\begin{list}{$\bullet$}{\topsep=0.1in \leftmargin=0.2in \rightmargin=0.1in \itemsep =0.01in}
 \item[1.] \textit{What is the performance limit of bilevel optimization in terms of computational complexity? Whether bilevel optimization is provably more challenging (i.e., requires more computations) than minimax optimization?}
 \end{list}
Furthermore, existing analyses reply on a strong assumption on the boundedness of the outer-level gradient $\nabla_y f(x,\cdot)$\footnote{\cite{grazzi2020iteration} assume that the inner-problem solution $y^*(x)$ is uniformly bounded for all $x$ so that $\nabla_y f(x,y^*(x))$ is bounded.} to guarantee that the smoothness parameter of $\Phi(\cdot)$ and the hyperparameter estimation error are bounded as the algorithm runs. Then the following question needs to be addressed. 
 \begin{list}{$\bullet$}{\topsep=0.1in \leftmargin=0.2in \rightmargin=0.1in \itemsep =0.01in}
 \item[2.] \textit{Can we design a new bilevel optimization algorithm, which provably converges without the gradient boundedness? If so, whether such an algorithm achieves the optimal computational complexity?}
 \end{list} 
 In addition, even when the boundedness assumption holds, existing complexity bounds show pessimistic complexity dependences on the condition numbers, e.g., $\mathcal{O}(\frac{1}{\mu_y^{6.75}})$ for the convex-strongly-convex case. Then, the following question arises. 
 \begin{list}{$\bullet$}{\topsep=0.1in \leftmargin=0.2in \rightmargin=0.1in \itemsep =0.01in}
 \item[3.] \textit{Under the bounded gradient assumption, can we provide new upper bounds with tighter complexity dependences on the condition numbers for strongly-convex-strongly-convex and convex-strongly-convex
 bilevel optimizations?}
 \end{list} 
 In this paper, we provide affirmative answers to the above questions. 

\begin{table*}[!t]
\renewcommand{\arraystretch}{1.1}
\centering
\small
\caption{Comparison of computational complexities for finding an $\epsilon$-approximate point {\bf without the gradient boundedness assumption}. All listed results are from this paper, as all existing results were developed under the gradient boundedness condition, which we compare in \Cref{tab:results_withB}.  
The complexity is measured by $\tau (n_J+n_H) + n_G$ (\Cref{complexity_measyre}), where $n_G, n_J,n_H$ are the numbers of gradients, Jacobian- and Hessian-vector products, and $\tau$ is a universal constant. 
In the `references' column, quadratic $g(x,y)$ means that $g$ takes a quadratic form as {\small $g(x,y)=y^T H y +  x^T J y + b^Ty+h(x)$} for the constant matrices $H,J$ and a constant vector $b$.
In the `computational complexity' column, $\widetilde L_y$ denotes the smoothness parameter of $g(x,\cdot)$, $\rho_{xy}$ and $\rho_{yy}$ are the Lipschitz parameters of {\small $\nabla_y^2g(\cdot,\cdot)$} and {\small$\nabla_x\nabla_yg(\cdot,\cdot)$} (see \cref{def:three}), {\small $\Delta^*_{\text{\normalfont\tiny SCSC}}=\|\nabla_y f( x^*,y^*(x^*))\|+\frac{\|x^*\|}{\mu_y}+\frac{\sqrt{\Phi(0)-\Phi(x^*)}}{\sqrt{\mu_x}\mu_y}$}, and $\Delta^*_{\text{\normalfont\tiny CSC}}$ takes the same form as $\Delta^*_{\text{\normalfont\tiny SCSC}}$ but with $\mu_x$ replaced by {\small $\frac{\epsilon}{(\|x^*\|+1)^2}$}. The lower bounds hold for both the general and quadratic $g(x,y)$ cases. 
}
\label{tab:results}

\vspace{0.3cm}
\begin{tabular}{|c|c|c|} \hline
 \textbf{Types} & \textbf{References} & \textbf{Computational Complexity} \\ \hline 
\multirow{3}{*}{\shortstack{Strongly-\\Convex-\\Strongly-\\Convex}} 
&  \cellcolor{light-gray} \textbf{AccBiO} (\Cref{upper_srsr_withnoB}) &\cellcolor{light-gray} {\scriptsize $\mathcal{\widetilde O}\Big(\sqrt{\frac{\widetilde L_y}{\mu_x\mu_y^{3}}}+\Big(\sqrt{\frac{ \rho_{yy}\widetilde L_y}{\mu_x\mu_y^{4}} }+ \sqrt{ \frac{\rho_{xy}\widetilde L_y}{\mu_x\mu_y^{3}}}\Big)\sqrt{\Delta^*_{\text{\normalfont \tiny SCSC}}}\Big)$} \\ \cline{2-3}
& \cellcolor{light-gray} \textbf{AccBiO} (quadratic $g$, \Cref{coro:quadaticSr}) & \cellcolor{light-gray} {\small $ \mathcal{\widetilde O}\Big(\sqrt{\frac{\widetilde L_y}{\mu_x\mu_y^{3}}}\Big)$} \\ \cline{2-3} 
&  \cellcolor{blue!15} \textbf{Lower bound} (\Cref{thm:low1}) & \cellcolor{blue!15} {$\widetilde{\Omega}\big(\sqrt{\frac{1}{\mu_x\mu_y^2}}\big)$} \\ \hline \hline
\multirow{4}{*}{\shortstack{Convex-\\Strongly-\\Convex} }  
& \cellcolor{light-gray} \textbf{AccBiO} ( \Cref{th:upper_csc1sc})& \cellcolor{light-gray} {\scriptsize
$\mathcal{\widetilde O}\Big( \sqrt{\frac{\widetilde L_y}{\epsilon\mu_y^3}}+\Big(\sqrt{\frac{\rho_{yy}\widetilde L_y}{\epsilon\mu_y^{4}}} +  \sqrt{\frac{\rho_{xy}\widetilde L_y}{\epsilon\mu_y^3}}\Big)\sqrt{\Delta^*_{\text{\normalfont\tiny CSC}}}\Big)$
}\\ \cline{2-3}
& \cellcolor{light-gray} \textbf{AccBiO} (quadratic $g$, \Cref{coro:quadaticConv})& \cellcolor{light-gray} {\small$\mathcal{\widetilde O}\Big(\sqrt{\frac{\widetilde L_y}{\epsilon\mu_y^3}}\Big)$}\\ \cline{2-3}
& \cellcolor{blue!15} \textbf{Lower bound} $($\Cref{co:co1}, {\scriptsize $\widetilde L_y\leq \mathcal{O}(\mu_y)$}$)$& \cellcolor{blue!15} {$\widetilde \Omega \Big(\sqrt{\frac{1}{\epsilon\mu_y^2}}\Big)$}\\ \cline{2-3}
& \cellcolor{blue!15} \textbf{Lower bound} $($\Cref{co:co2}, {\scriptsize $\widetilde L_y\leq \mathcal{O}(1)$}$)$& \cellcolor{blue!15} {\small $\widetilde \Omega\big(\frac{1}{\sqrt{\epsilon}}\min\{\frac{1}{\mu_y},\frac{1}{\epsilon^{1.5}}\}\big)$}\\ \hline 
\end{tabular}
\end{table*}

\begin{table*}[h]
\renewcommand{\arraystretch}{1.1}
\centering
\small
\caption{Comparison of computational complexities for finding an $\epsilon$-approximate point {\bf with the gradient boundedness assumption}. }
\label{tab:results_withB}

\vspace{0.3cm}
\begin{tabular}{|c|c|c|} \hline
 \textbf{Types} & \textbf{References} & \textbf{Computational Complexity} \\ \hline 
\multirow{3}{*}{\shortstack{Strongly-Convex-\\Strongly-Conconvex}} 
&BA~\citep{ghadimi2018approximation} & {\small $\mathcal{\widetilde O}\Big(\max\Big\{\frac{1}{\mu_x^2\mu_y^6},\frac{\widetilde L^2_y}{\mu^2_y}\Big\}\Big)$}  \\ \cline{2-3}
&ABA~\citep{ghadimi2018approximation} & {\small $\mathcal{\widetilde O}\Big(\max\Big\{\frac{1}{\mu_x\mu_y^3},\frac{\widetilde L^2_y}{\mu^2_y}\Big\}\Big)$}  \\ \cline{2-3}& \cellcolor{light-gray} \textbf{AccBiO-BG} (this paper, \Cref{upper_srsr}) & \cellcolor{light-gray} {\small $ \mathcal{\widetilde O}\Big(\sqrt{\frac{\widetilde L_y}{\mu_x\mu_y^4}}\Big)$} \\ \hline \hline
\multirow{3}{*}{\shortstack{Convex-\\Strongly-Convex} }  
& BA~\citep{ghadimi2018approximation} & {\small $\mathcal{\widetilde O}\Big(\frac{1}{\epsilon^{1.25}}\max\Big\{\frac{1}{\mu_y^{3.75}},\frac{\widetilde L^{10}_y}{\mu^{11.25}_y}\Big\}\Big)$} \\ \cline{2-3}
&ABA~\citep{ghadimi2018approximation} &{\small $\mathcal{\widetilde O}\Big(\frac{1}{\epsilon^{0.75}}\max\Big\{\frac{1}{\mu_y^{2.25}},\frac{\widetilde L^{6}_y}{\mu^{6.75}_y}\Big\}\Big)$}  \\ \cline{2-3}
& \cellcolor{light-gray} \textbf{AccBiO-BG} (this paper, \Cref{convex_upper_BG})& \cellcolor{light-gray} {\small 
$\mathcal{\widetilde O}\Big( \sqrt{\frac{\widetilde L_y}{\epsilon\mu_y^4}}\Big)$
}\\ 
\hline 
\end{tabular}
\end{table*}

 \subsection{Summary of Contributions}
Our  main contributions  lie in developing several new results for bilevel optimization, including the first-known lower bounds on the computational complexity, a new convergence analysis without the gradient boundedness assumption, and significantly tighter upper bounds for bilevel optimization under different geometries. Our upper bounds meet the lower bounds in various cases, suggesting the tightness of the lower bounds and the optimality of the proposed algorithms. We summarize our results as follows.

\begin{list}{$\bullet$}{\topsep=0.03in \leftmargin=0.2in \rightmargin=0.in \itemsep =0.01in}

\item We provide the first-known lower bound of {\small $\widetilde{\Omega}(\frac{1}{\sqrt{\mu_x}\mu_y})$} for solving the strongly-convex-strongly-convex bilevel optimization.  
We then propose a new accelerated bilevel optimizer named AccBiO. 
 In contrast to existing bilevel optimizers, we show that AccBiO converges to the $\epsilon$-accurate solution  without the requirement on the boundedness of the gradient {\small $\nabla_y f(x,\cdot)$} for any $x$. In particular, \Cref{tab:results} shows that AccBiO achieves an upper complexity bound of {\small $\mathcal{\widetilde O}\Big(\sqrt{\frac{\widetilde L_y}{\mu_x\mu_y^{3}}}+\Big(\sqrt{\frac{ \rho_{yy}\widetilde L_y}{\mu_x\mu_y^{4}} }+ \sqrt{ \frac{\rho_{xy}\widetilde L_y}{\mu_x\mu_y^{3}}}\Big)\sqrt{\Delta^*_{\text{\normalfont\tiny SCSC}}}\Big)$}.  When the inner-level function $g(x,y)$ takes the quadratic form as {\small $g(x,y)=y^T H y +  x^T J y + b^Ty+h(x)$}, we further improve the upper bounds to {\small $ \mathcal{\widetilde O}\Big(\sqrt{\frac{\widetilde L_y}{\mu_x\mu_y^{3}}}\Big)$}. For such a quadratic subclass of bilevel problems with {\small $\widetilde L_y\leq \mathcal{O}(\mu_y)$}, our upper bound matches the lower bound up to logarithmic factors, suggesting that AccBiO is near-optimal.   Technically, our analysis of the lower bound involves careful construction of quadratic $f$ and $g$ functions with a properly structured bilinear term, as well as novel characterization of subspaces of iterates for updating $x$ and $y$. For upper bounds, our analysis controls the finiteness of all iterates $x_k,k=0,....$ as the algorithm runs via an induction proof to ensure that the hypergradient estimation error will not explode after the acceleration steps. 

%

\item We next provide lower and upper bounds for solving convex-strongly-convex bilevel optimization. 
As shown in \Cref{tab:results}, AccBiO achieves an upper bound of {\small 
$\mathcal{\widetilde O}\Big( \sqrt{\frac{\widetilde L_y}{\epsilon\mu_y^3}}+\Big(\sqrt{\frac{\rho_{yy}\widetilde L_y}{\epsilon\mu_y^{4}}} +  \sqrt{\frac{\rho_{xy}\widetilde L_y}{\epsilon\mu_y^3}}\Big)\sqrt{\Delta^*_{\text{\normalfont\tiny CSC}}}\Big)$
}, which is further improved to  {\small$\mathcal{\widetilde O}\Big(\sqrt{\frac{\widetilde L_y}{\epsilon\mu_y^3}}\Big)$} for the quadratic $g(x,y)$. For such a quadratic case with $\widetilde L_y\leq \mathcal{O}(\mu_y)$, our upper bound matches the lower bound up to logarithmic factors, suggesting the optimality of AccBiO. Technically, the analysis of the lower bound is different from that for the strongly-convex $\Phi(\cdot)$, and exploits the structures of different powers of an unnormalized graph Laplacian matrix $Z$. 
 
\item Furthermore, when the gradient $\nabla_y f(x,\cdot)$ is bounded, as assumed by existing studies, we provide new upper bounds with significantly tighter dependence on the condition numbers. In specific, as shown in \Cref{tab:results_withB}, our upper bounds outperform the best known results by a factor of $\frac{1}{\mu_x^{0.5}\mu_y}$ and $\frac{1}{\epsilon^{0.25}\mu_y^{4.75}}$ for the strongly-convex-strongly-convex and  convex-strongly-convex 
cases, respectively. 



\item To compare between bilevel optimization and minimax optimization, for the strongly-convex-strongly-convex case, our lower bound is larger than the optimal complexity of {\small$\widetilde{\Omega}(\frac{1}{\sqrt{\mu_x\mu_y}})$} for the same type of minimax optimization by a factor of $\frac{1}{\sqrt{\mu_y}}$. Similar observation holds for the convex-strongly-convex case. This establishes that bilevel optimization is fundamentally more challenging than minimax optimization.   

\end{list}

\subsection{Related Works}
The studies of bilevel optimization problems and algorithms can be dated back to~\cite{bracken1973mathematical}, and since then, different types of approaches have been proposed. Earlier approaches in \cite{aiyoshi1984solution,edmunds1991algorithms,al1992global,hansen1992new,shi2005extended,lv2007penalty,moore2010bilevel} reduced the bilevel problem to a single-level optimization problem using the Karush-Kuhn-Tucker (KKT) conditions or penalty function methods.
In comparison, gradient-based approaches are more attractive due to their efficiency and effectiveness.  
Such a type of approaches estimate the hypergradient $\nabla\Phi(x)$ for iterative updates, and are generally divided into AID- and ITD-based categories.  ITD-based approaches~\citep{maclaurin2015gradient,franceschi2017forward,finn2017model,grazzi2020iteration}  estimate the hypergradient $\nabla\Phi(x)$ in either a reverse (automatic differentiation) or forward manner.  
AID-based approaches~\citep{domke2012generic,pedregosa2016hyperparameter,grazzi2020iteration,ji2021bilevel} estimate the hypergradient via implicit differentiation.
 
Theoretically, bilevel optimization has been studied via both the asymptotic and finite-time (non-asymptotic) analysis. \cite{franceschi2018bilevel} characterized the asymptotic convergence of a backpropagation-based approach as one of ITD-based algorithms by assuming the inner-level problem is strongly convex.  
\cite{shaban2019truncated} provided a similar analysis for a truncated backpropagation scheme. \cite{liu2020generic,li2020improved} analyzed the asymptotic performance of ITD-based approaches when   the inner-level problem is convex. The finite-time complexity analysis for bilevel optimization has also been explored. In particular, \cite{ghadimi2018approximation} provided a finite-time convergence analysis for an AID-based algorithm under two different loss geometries, where $\Phi(\cdot)$ is strongly convex, convex or nonconvex, and $g(x,\cdot)$ is strongly convex. \cite{ji2021bilevel} provided an improved finite-time analysis for AID- and ITD-based algorithms under the nonconvex-strongly-convex geometry. In this paper, we provide the first-known lower  bounds on complexity as well as tighter upper bounds under these two geometries. 

When the objective functions can be expressed in an expected or finite-time form, 
\cite{ghadimi2018approximation,ji2021bilevel,hong2020two} developed stochastic bilevel algorithms and provided the finite-time analysis. In particular, \cite{ji2021bilevel} proposed a SGD type of bilevel optimization algorithm named stocBiO with a sample efficient hypergradient estimator. Since then, there have been a few subsequent studies on accelerating 
SGD-type bilevel optimization via momentum-based variance reduction~\cite{chen2021single,guo2021stochastic,khanduri2021near,yang2021provably,huang2021biadam}. For example, \citealt{guo2021stochastic} proposed a single-loop algorithm SEMA based on the momentum-based technique introduced by~\citealt{cutkosky2019momentum}. \citealt{chen2021single} proposed a single-loop method named STABLE by using a similar momentum scheme for the Hessian updates. \citealt{yang2021provably} improved the sample complexity of stocBiO via both single-loop and double-loop variance reduction. While the stochastic setting is not within the scope of this paper, the accelerating algorithms and lower bounds developed here can be extended to the stochastic setting.  

Bilevel optimization has been applied to meta-learning and led to various algorithms such as model-agnostic meta-learning (MAML)~\citep{finn2017model}, implicit MAML (iMAML)~\citep{rajeswaran2019meta}, and almost no inner loop (ANIL)~\citep{raghu2019rapid}.  Theoretically, \cite{rajeswaran2019meta} analyzed the complexity of iMAML via implicit differentiation under the strongly-convex setting. \cite{ji2020multi,fallah2020convergence} characterized the convergence of MAML under the nonconvex function geometry. \cite{ji2020convergence} analyzed the convergence and complexity of ANIL with either strongly-convex or nonconvex inner-level geometries.

Bilevel optimization has been applied to study various machine learning problems. For example, bilevel optimization has exhibited great effectiveness in hyperparameter optimization, and received tremendous attention recently in automatic machine learning (autoML)~\citep{okuno2018hyperparameter,yu2020hyper}. A variety of bilevel optimization algorithms have been proposed for this area, which include but not limited to AID-based~\citep{pedregosa2016hyperparameter,franceschi2018bilevel}, ITD-based~\citep{franceschi2018bilevel,shaban2019truncated,grazzi2020iteration}, self-tuning network based~\citep{mackay2018self,bae2020delta}, penalty-based~\citep{mehra2019penalty,sinha2020gradient,liu2021value}, and proximal approximation based~\citep{jenni2018deep} approaches.  Bilevel optimization has also been exploited to improve the search efficiency for neural architecture search (NAS)~\citep{liu2018darts,xie2018snas,he2020milenas}.  For example, \cite{liu2018darts} proposed a continuous relaxation of the discrete architecture representation, and tremendously accelerated the architecture search via a gradient-based bilevel optimization method named DARTS. \cite{xie2018snas} further proposed a new stochastic reformulation of NAS coupled with a sampling process to address the bias issue of DARTS. \cite{he2020milenas}
reformulated the bilevel objective function of NAS into a mixed-level optimization procedure, and proposed an efficient MiLeNAS method with a lower validation error. We anticipate that the proposed acceleration schemes will be useful for the aforementioned applications. 

\vspace{-0.2cm}
\section{Preliminaries on Bilevel Optimization}
\subsection{Bilevel Problem Class}


In this section, we introduce the problem class we are interested in. First, we suppose functions $f(x,y)$ and $g(x,y)$ satisfy the following smoothness property. 
\begin{assum}\label{fg:smooth}
The outer-level function $f$ satisfies, for $\forall x_1,x_2,x\in\mathbb{R}^p$ and $y_1,y_2,y\in\mathbb{R}^q$, there exist constants $L_x,L_{xy},L_y\geq 0$ such that
 \begin{align}\label{def:first}
\| \nabla_x  f(x_1,y)-\nabla_x f(x_2,y)\| \leq &L_x \|x_1-x_2\|,\,\| \nabla_x  f(x,y_1)-\nabla_x f(x,y_2)\| \leq L_{xy} \|y_1-y_2\| \nonumber
\\\| \nabla_y  f(x_1,y)-\nabla_y f(x_2,y)\| \leq &L_{xy} \|x_1-x_2\|,\,\| \nabla_y  f(x,y_1)-\nabla_y f(x,y_2)\| \leq L_{y} \|y_1-y_2\|.
\end{align}
The inner-level function $g$ satisfies that, there exist $\widetilde L_{xy},\widetilde L_{y}\geq 0$ such that 
\begin{align}\label{df:sec} 
\| \nabla_y  g(x_1,y)-\nabla_y g(x_2,y)\| \leq \widetilde L_{xy} \|x_1-x_2\|,\,\| \nabla_y  g(x,y_1)-\nabla_y g(x,y_2)\| \leq \widetilde L_{y} \|y_1-y_2\|.
\end{align}
\end{assum}
The hypergradient $\nabla \Phi(x)$ plays an important role for designing bilevel optimization algorithms. The computation of $\nabla\Phi(x)$ involves Jacobians $\nabla_x\nabla_y g(x,y)$ and Hessians $\nabla_y^2 g(x,y)$. In this paper, we are interested in the following inner-level problem with general Lipschitz continuous  Jacobians and Hessians, as adopted by~\cite{ghadimi2018approximation,ji2021bilevel,hong2020two}. For notational convenience, let $z:=(x,y)$ denote both variables.  
\begin{assum}\label{g:hessiansJaco}
There exist constants $\rho_{xy},\rho_{yy}\geq 0$ such that for any $(z_1,z_2)\in\mathbb{R}^p \times \mathbb{R}^q$, 
\begin{align}\label{def:three}
\|\nabla_x\nabla_y g(z_1)-\nabla_x\nabla_y g(z_2)\| \leq \rho_{xy}\|z_1-z_2\|,\;\; \|\nabla^2_y g(z_1)-\nabla_y^2 g(z_2)\| \leq \rho_{yy}\|z_1-z_2\|. 
\end{align} 
\end{assum} 
In this paper, we study the following two classes of bilevel optimization problems. 
\begin{definition}[Bilevel Problem Classes]\label{de:pc}
Suppose $f$ and $g$ satisfy Assumptions~\ref{fg:smooth},~\ref{g:hessiansJaco} and there exists a constant $B>0$ such that  $\|x^*\|=B$, where $x^*\in\argmin_{x\in\mathbb{R}^p}\Phi(x)$.
We define the following two classes of bilevel problems under different geometries.     
\begin{list}{$\bullet$}{\topsep=0.1in \leftmargin=0.1in \rightmargin=0.1in \itemsep =0.01in}
\item {\bf Strongly-convex-strongly-convex class $\mathcal{F}_{scsc}:$}  $\Phi(\cdot)$ is $\mu_x$-strongly-convex and $g(x,\cdot)$ is $\mu_y$-strongly-convex. 
\item {\bf Convex-strongly-convex class $\mathcal{F}_{csc}:$}  $\Phi(\cdot)$ is convex and $g(x,\cdot)$ is $\mu_y$-strongly-convex. 
\end{list}
\end{definition}
A simple but important subclass of the bilevel problem class in \Cref{de:pc} includes the following quadratic inner-level functions $g(x,y)$.
%
\begin{align}\label{quadratic_case}
(\text{Quadratic $g$ subclass:}) \quad g(x,y) = \frac{1}{2} y^T H y +  x^T J y +b^Ty + h(x),   
\end{align}
where the Hessian $H$ and the Jacobian $J$ satisfy $ H \preceq \widetilde L_{y} I$ and $J \preceq \widetilde L_{xy} I$ for $\forall x\in\mathbb{R}^p$ and $\forall y\in\mathbb{R}^q$. 
Note that the above quadratic subclass also covers a large collection of applications such as few-shot meta-learning with shared embedding model~\citep{bertinetto2018meta} and biased regularization in hyperparameter optimization~\citep{grazzi2020iteration}.

\subsection{Algorithm Class for Bilevel Optimization}
Compared to  {\bf minimization} and {\bf minimax} problems, the most different and challenging component of  {\bf bilevel optimization} lies in the computation  of the {\em hypergradient} $\nabla\Phi(\cdot)$. In specific, when functions $f$ and $g$ are continuously twice differentiable, it has been shown in~\cite{foo2008efficient} that  $\nabla\Phi(\cdot)$ takes the form of 
\begin{align}\label{hyperG}
\nabla \Phi(x) =&  \nabla_x f(x,y^*(x)) -\nabla_x \nabla_y g(x,y^*(x)) [\nabla_y^2 g(x,y^*(x)) ]^{-1}\nabla_y f(x,y^*(x)). 
\end{align}
In practice, exactly calculating the Hessian inverse  $(\nabla_y^2 g(\cdot) )^{-1}$ in \cref{hyperG} is computationally infeasible, and hence two types of hypergradient  estimation approaches named AID and ITD have been proposed, where only efficient  {\bf Hessian- and Jacobian-vector products} need to be computed.
We present ITD-based bilevel optimization algorithms as follows, and the introduction of AID-based methods can be found in~\Cref{example:appen}.

\begin{example}[ITD-based Bilevel Algorithms]\label{exam:itd}\citep{maclaurin2015gradient,franceschi2017forward,ji2021bilevel,grazzi2020iteration} Such type of algorithms use ITD-based approaches for hypergradient computation, and take the following updates.  

\vspace{0.2cm}
\noindent For each outer iteration $m=0,....,Q-1$,
\begin{list}{$\bullet$}{\topsep=0.3ex \leftmargin=0.09in \rightmargin=0.in \itemsep =0.01in}
\item  Update variable $y$ for $N$ times via iterative algorithms (e.g., gradient descent, accelerated gradient methods).
\begin{align}\label{inner_up}
(\text{\normalfont Gradient descent:}) \quad y_{m}^t =  y_m^{t-1} - \eta \nabla_y g(x_m, y_m^{t-1}), t=1,...,N.
\end{align}
\item Compute the hypergradient estimate $G_m=\frac{\partial f(x_m,y_m^N(x_m))}{\partial x_m}$ via backpropagation. Under the gradient updates in~\cref{inner_up}, $G_m$ takes the form of 
\begin{align}\label{g:omamascsa}
G_m = \nabla_x f(x_m,y^N_m) -\eta\sum_{t=0}^{N-1}\nabla_x\nabla_y g(x_m,y_m^t)\prod_{j=t+1}^{N-1}(I-\eta  \nabla^2_y g(x_m,y_m^{j}))\nabla_y f(x_m,y_m^N).
\end{align}
A similar form holds for case when updating $y$ with accelerated gradient methods. 
\item Update $x$ based on $G_m$ via gradient-based iterative methods.
\end{list}
\end{example}
It can be seen from~\cref{g:omamascsa}  that only Hessian-vector products $ \nabla^2_y g(x_m,y_m^{j})v_j, j=1,...,N$ and Jacobian-vector products $\nabla_x\nabla_y g(x_m,y_m^j)v_{j},j=1,...,N$ are computed, where each $v_j$ is  obtained recursively via   
\begin{align*}
v_{j-1} = \underbrace{(I-\alpha  \nabla^2_y g(x_m,y_m^{j}))v_j}_{\text{Hessian-vector product}} \text{ with } v_N = \nabla_y f(x_m,y_m^N).
\end{align*} 
The same observation applies to AID-based methods as shown in~\Cref{example:appen}. 

We next introduce a general hypergradient-based algorithm class, which includes popular ITD-based (given above) and AID-based (in~\Cref{example:appen}) bilevel optimization algorithms. 
\begin{definition}[Hypergradient-Based Algorithm Class]\label{alg_class}
Suppose there are totally $K$ iterations and $x$ is updated for $Q$ times at iterations indexed by $s_i,i=1,...,Q-1$ with $s_0<...<s_{Q-1}\leq K$. Note that $Q$ is an arbitrary  positive integer in $0,...,K$ and $s_i,i=1,...,Q-1$ are $Q$ arbitrary distinct integers in $0,...,K$. The iterates $\{(x_k,y_k)\}_{k=0,...,K}$ are generated according to $(x_k,y_k)\in\mathcal{H}_x^k, \mathcal{H}_y^k$, where the linear subspaces $\mathcal{H}_x^k, \mathcal{H}_y^k, k=0,...,K$ with $\mathcal{H}_x^{0} = \mathcal{H}_y^0=\{{\bf 0 }\}$ are given as follows. 
\begin{align}\label{hxk}
H_y^{k+1} = \text{\normalfont Span}\left\{y_i,\nabla_y g(\widetilde x_i,\widetilde y_i), \forall \widetilde x_i\in\mathcal{H}_x^i,\forall y_i,\widetilde y_i\in\mathcal{H}_y^i, 1\leq i \leq k \right\}.  
\end{align}
For $x$, we have, for all $m=0,..., Q-1$,  
\begin{align}\label{x_span}
&\mathcal{H}_x^{s_m} =  \text{\normalfont Span} \Big\{x_i,\nabla_xf(\widetilde x_i,\widetilde y_i),\nabla_x\nabla_y g( x_{i}^t, y_{i}^t)\prod_{j=1}^t(I-\alpha \nabla_y^2g(x_{i,j}^t,y_{i,j}^t))\nabla_y f(\hat x_i,\hat y_i), \nonumber
\\&\hspace{1.3cm}t=0,...,T,\forall x_i, \hat x_i, x_{i}^t,x_{i,j}^t \in \mathcal{H}_x^i, \forall \hat y_i, y_{i}^t,y_{i,j}^t \in \mathcal{H}_y^i, 1\leq i \leq s_m-1, \forall \alpha\in\mathbb{R}, T\in \mathbb{N}\Big\}\nonumber
\\ &\mathcal{H}_x^n = \mathcal{H}_x^{s_m}, \forall s_m\leq n\leq s_{m+1}-1 \text{ with } s_Q=K+1. 
\end{align}
\end{definition}
Note that in this algorithm class, $x$ can be updated at any iteration due to the arbitrary choices of $Q,s_i,i=1,...,Q-1$ and the hypergradient estimate can be constructed using any combination of points in the historical search space (similarly for $y$). Moreover, this algorithm class allows to update $x$ and $y$ at the same time or alternatively, and hence include both single- and double-loop bilevel optimization algorithms. Note that the above hypergradient-based algorithm class include popular examples such as HOAG~\citep{pedregosa2016hyperparameter},  AID-FP~\citep{grazzi2020iteration}, reverse~\citep{franceschi2017forward}, $K$-RMD~\citep{shaban2019truncated},  AID-BiO and ITD-BiO~\citep{ji2021bilevel}.


%

\subsection{Complexity Measures}
We introduce the criterion for measuring the computational complexity of bilevel optimization algorithms. Note that the updates of $x$ and $y$ of bilevel algorithms involve computing gradients, Jacobian- and Hessian-vector products. In practice, it has been shown in~\cite{griewank1993some,rajeswaran2019meta} that the time and memory cost for computing a  Hessian-vector product $\nabla^2f(\cdot) v $ (similarly for a Jacobian-vector product) via automatic differentiation (e.g., the widely-used reverse mode in PyTorch or TensorFlow)  is no more than a (universal) constant order (e.g., usually $2$-$5$ times) over the cost for computing gradient $\nabla f(\cdot)$. For this reason,  we take the following complexity measures.    
\begin{definition}[Complexity Measure]\label{complexity_measyre}
The total complexity $\mathcal{C}_{\text{\normalfont fun}}(\mathcal{A},\epsilon)$ of a bilevel optimization algorithm $\mathcal{A}$ to find a point $\bar x$ such that the suboptimality gap $f(\bar x)-\min_xf(x)\leq \epsilon$  is given by 
$\mathcal{C}_{\text{\normalfont fun}}(\mathcal{A},\epsilon) = \tau (n_J+n_H) + n_G$, 
where $n_J,n_ H$ and $n_G$ are the total numbers of Jacobian- and Hessian-vector product, and gradient evaluations, and $\tau>0$ is a universal constant.  Similarly, we define $\mathcal{C}_{\text{\normalfont grad}}(\mathcal{A},\epsilon)= \tau (n_J+n_H) + n_G$ as the complexity to find a point $\bar x$ such that the gradient norm $\|\nabla f(\bar x)\|\leq \epsilon$.
\end{definition}

\vspace{-0.2cm}
\section{Lower Bounds for Bilevel Optimization}
\subsection{Strongly-Convex-Strongly-Convex Bilevel Optimization}\label{caoananasca:s}
We first study the case when $\Phi(\cdot)$ is $\mu_x$-strongly-convex and the inner-level function $g(x,\cdot)$ is $\mu_y$-strongly-convex. We present our lower bound result for this case as below.
\begin{theorem}\label{thm:low1} 
Let $M = K+QT+Q + 2$ with $K, T,Q $ given by~\Cref{alg_class}. 
There exists a problem instance in $\mathcal{F}_{scsc}$ defined in~\Cref{de:pc} with dimensions $p=q=d> \max\big\{2M,M+1+\log_{r}\big(\mbox{\normalfont poly}\big(\mu_x\mu_y^2\big)\big)\big\}$ such that for this problem, any output $x^K$ belonging to the subspace $\mathcal{H}_x^K$, i.e., generated by any algorithm in the hypergradient-based algorithm class
defined in~\Cref{alg_class}, satisfies 
\begin{align}\label{result:first}
\Phi(x^K)-\Phi(x^*) \geq \Omega\Big(\mu_x\mu_y^2(\Phi(x_0)-\Phi(x^*)) r^{2M}\Big),
\end{align}
where $x^*=\argmin_{x\in\mathbb{R}^d}\Phi(x)$ and the parameter $r$ satisfies $1-\Big(\frac{1}{2}+\sqrt{\xi+\frac{1}{4}}\Big)^{-1} < r< 1$ with $\xi$ given by 
$\xi\geq \frac{\widetilde L_y}{4\mu_y} +\frac{L_x}{8\mu_x} +\frac{L_y\widetilde L_{xy}^2}{8\mu_x\mu_y^2} -\frac{3}{8}\geq \Omega \big(\frac{1}{\mu_x\mu_y^2}\big)$. 
To achieve $\Phi(x^K)-\Phi(x^*)\leq \epsilon$,  the total complexity $\mathcal{C}_{\text{\normalfont fun}}(\mathcal{A},\epsilon)$ satisfies 
\begin{align*}
\mathcal{C}_{\text{\normalfont fun}}(\mathcal{A},\epsilon) \geq \Omega\bigg(\sqrt{\frac{L_y\widetilde L_{xy}^2}{\mu_x\mu_y^2}}\log  \frac{\mu_x\mu_y^2(\Phi(x_0)-\Phi(x^*))}{\epsilon}  \bigg).
\end{align*}
\end{theorem}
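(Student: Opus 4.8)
The plan is to reduce the bilevel instance to a single worst-case quadratic in the spirit of Nesterov's lower bounds, and then to show that the bilevel-specific hypergradient operations can only expand the reachable subspace one coordinate at a time. First I would construct quadratic $f$ and $g$ in which the inner Hessian $\nabla_y^2 g\equiv H$ and the cross term $\nabla_x\nabla_y g\equiv J$ are tridiagonal (``shift-by-one'') operators, with $H$ chosen so that $\mu_y I\preceq H\preceq \widetilde L_y I$ and $J$ scaled so that $\|J\|\le \widetilde L_{xy}$. Taking $g(x,y)=\tfrac12 y^\top H y + x^\top J y$ together with a strongly convex quadratic $h(x)$ yields the closed form $y^*(x)=-H^{-1}J^\top x$, so that $\Phi(x)=f\big(x,-H^{-1}J^\top x\big)$ is an explicit quadratic $\tfrac12 x^\top A x - c^\top x$. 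By endowing the quadratic $f$ with a tridiagonal $y$-curvature of magnitude $L_y$ (and $x$-curvature of magnitude $L_x$), the outer Hessian $A=\nabla^2\Phi$ is banded and positive definite, and a short computation shows its conditioning is governed by $\xi\asymp \widetilde L_y/\mu_y + L_x/\mu_x + L_y\widetilde L_{xy}^2/(\mu_x\mu_y^2)$, matching the $\xi$ in the statement. I would verify that the instance genuinely lies in $\mathcal{F}_{scsc}$ with the prescribed parameters, and that $A$ is arranged as a Nesterov-type matrix whose minimizer $x^*=A^{-1}c$ (with $c$ supported on $e_1$) has geometrically decaying coordinates $|x^*_i|\asymp r^{\,i}$ for $r=1-(\tfrac12+\sqrt{\xi+1/4})^{-1}$.

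The heart of the argument is a \emph{zero-chain} lemma controlling the subspaces $\mathcal{H}_x^k,\mathcal{H}_y^k$ of \Cref{alg_class}. Starting from $x_0=y_0=\mathbf 0$, I would prove by induction that if every iterate available before a given step has support in $\mathrm{Span}\{e_1,\dots,e_k\}$, then: (i) a gradient $\nabla_y g(\widetilde x,\widetilde y)=H\widetilde y + J^\top\widetilde x$ extends the support by at most one coordinate, since $H$ and $J$ are shift-by-one; (ii) a Jacobian--vector product $\nabla_x\nabla_y g\cdot v = Jv$ extends it by one; and, crucially, (iii) the product operator $\prod_{j=1}^t(I-\alpha\nabla_y^2 g)=\prod_{j=1}^t(I-\alpha H)$ is a degree-$t$ polynomial in the tridiagonal $H$ and therefore extends the support by at most $t\le T$, \emph{uniformly over all choices of} $\alpha\in\mathbb R$ and $T\in\mathbb N$. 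Tracking these increments across the $Q$ outer updates, the $T$-step inner products, the $\nabla_x f$ terms, and the constant offset gives $\mathcal{H}_x^K\subseteq \mathrm{Span}\{e_1,\dots,e_M\}$ with exactly $M=K+QT+Q+2$; the hypothesis $d>2M$ guarantees the chain never saturates.

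With the iterate confined to the first $M$ coordinates, strong convexity gives $\Phi(x^K)-\Phi(x^*)\ge \tfrac{\mu_x}{2}\|x^K-x^*\|^2\ge \tfrac{\mu_x}{2}\sum_{i>M}(x^*_i)^2$, and the geometric tail $|x^*_i|\asymp r^{\,i}$ yields $\Phi(x^K)-\Phi(x^*)\ge\Omega\big(\mu_x\mu_y^2(\Phi(x_0)-\Phi(x^*))\,r^{2M}\big)$ after normalizing against $\Phi(x_0)-\Phi(x^*)=\tfrac12\langle x^*,Ax^*\rangle$; the factor $\mu_x\mu_y^2$ records the scaling between the small curvature $\mu_x$ used for the residual and the largest curvature of $A$ (of order $\mu_y^{-2}$) entering the initial gap. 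To convert this into the complexity lower bound, I set the right-hand side equal to $\epsilon$ and solve for $M$, forcing $M\ge \tfrac{1}{2\log(1/r)}\log\frac{\mu_x\mu_y^2(\Phi(x_0)-\Phi(x^*))}{\epsilon}$. Since $\log(1/r)\le (1-r)/r\asymp 1-r=(\tfrac12+\sqrt{\xi+1/4})^{-1}$ and $\xi\ge\Omega\big(L_y\widetilde L_{xy}^2/(\mu_x\mu_y^2)\big)$, this gives $M\ge\Omega\big(\sqrt{L_y\widetilde L_{xy}^2/(\mu_x\mu_y^2)}\,\log(\cdots)\big)$; and because every gradient, Jacobian- and Hessian-vector product counted in $M$ contributes at least one unit to $\mathcal{C}_{\text{\normalfont fun}}=\tau(n_J+n_H)+n_G$, the same bound transfers to $\mathcal{C}_{\text{\normalfont fun}}(\mathcal A,\epsilon)$.

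The main obstacle I anticipate is step (iii) of the zero-chain lemma: unlike single-level lower bounds, the algorithm here may run the inner loop for an \emph{arbitrary} number of steps $T$ with \emph{arbitrary} step sizes $\alpha$, so the reachable subspace is spanned by all polynomials in $H$ up to degree $T$ rather than a fixed Krylov sequence. Making the dimension count depend cleanly on $T$ (hence the $QT$ term in $M$) requires $H$ to be a genuine shift-by-one operator and a careful argument that no choice of $\alpha$ or cancellation can leapfrog coordinates. A secondary difficulty is engineering $f$ and $g$ so that the three curvature contributions $\widetilde L_y/\mu_y$, $L_x/\mu_x$, and $L_y\widetilde L_{xy}^2/(\mu_x\mu_y^2)$ combine additively into $\xi$ while keeping the instance inside $\mathcal{F}_{scsc}$ with exactly the prescribed Lipschitz and strong-convexity constants.
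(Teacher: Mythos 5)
Your overall pipeline --- a quadratic worst-case instance, a zero-chain induction controlling $\mathcal{H}_x^k,\mathcal{H}_y^k$, strong convexity to convert the unreachable tail of $x^*$ into a function-value gap, and then solving $r^{2M}\asymp\epsilon$ for $M$ --- is the same as the paper's Steps 1--4, and your subspace argument (gradients, Jacobian-vector products, and arbitrary polynomials $\prod_j(I-\alpha H)$ each extending the support by a controlled amount) is essentially the paper's Step 3. The genuine gap sits in the step you dismiss as ``a short computation'': the characterization of the decay of $x^*$ for \emph{your} instance. Take your construction concretely, with $H=\beta Z^2+\mu_y I$, $J\propto Z$, and an $f$ whose $x$- and $y$-curvatures are separate blocks with \emph{no} $x$--$y$ cross term. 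Then $\nabla^2\Phi=\nabla_x^2 f+L_y\,JH^{-2}J^\top$, which is \emph{not} banded ($H^{-1}$ is dense), so your bandedness claim already fails; bandedness is only recovered by multiplying the stationarity condition $\nabla\Phi(x^*)=0$ through by $H^2$, which gives
\begin{align*}
\Big(\alpha\beta^2 Z^6+(2\alpha\beta\mu_y+\mu_x\beta^2)Z^4+\big(\tfrac{L_y\widetilde L_{xy}^2}{4}+\alpha\mu_y^2+2\beta\mu_x\mu_y\big)Z^2+\mu_x\mu_y^2 I\Big)x^*=H^2c,
\end{align*}
with $\alpha=\Theta(L_x)$, $\beta=\Theta(\widetilde L_y)$. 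The leading $Z^6$ term with $\Theta(1)$ coefficient is precisely what the paper's construction is engineered to eliminate: the bilinear term $-\frac{\alpha\beta}{\widetilde L_{xy}}x^\top Z^3 y$ in $f$ of \cref{str_fg} (together with the $x^\top Zy$ and linear-in-$y$ pieces) cancels it, reducing the stationarity equation to the quartic form $Z^4x^*+\lambda Z^2x^*+\tau x^*=\widetilde b$ of \cref{z_equations} with $\lambda=\Theta(1)$ and $\tau=\Theta(\mu_x\mu_y^2)$, to which the root-location lemma (Lemma 4.2 of \citealt{zhang2019lower}) applies and yields $1-r\asymp\sqrt{\tau/\lambda}$ --- this ratio $\lambda/(2\tau)$ is exactly the $\xi$ in the statement, including the crucial $\frac{L_y\widetilde L_{xy}^2}{8\mu_x\mu_y^2}$ term.

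For your sextic equation no such lemma is available, and the geometric-decay conclusion you assert actually breaks: plugging the ansatz $\hat x_i=r^i$ into the interior recurrence and writing $u=(1-r)^2/r$, the characteristic equation becomes a cubic $c_6u^3-c_4u^2+c_2u-c_0=0$ with $c_6=\alpha\beta^2$ and $c_2\ni\tfrac{L_y\widetilde L_{xy}^2}{4}$ of constant order but $c_4=O(\mu_x+\mu_y)$ small; after factoring out the small root $u_0\approx c_0/c_2$, the remaining quadratic factor has discriminant $\approx c_4^2-4c_6c_2<0$, so the other two characteristic modes are complex conjugates. The tail of $x^*$ then carries oscillatory components rather than a clean geometric sequence $|x^*_i|\asymp r^i$, and the boundary matching against a right-hand side with bounded support (the analogue of \Cref{le:x_star}, which is what lets the paper pin down $x^*$ up to an $O(r^d)$ error) does not go through. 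This is exactly the obstruction the paper flags in the remark at the end of \Cref{caoananasca:s}: without the extra coupling term, the same pipeline only delivers a strictly weaker bound. So the missing ingredient in your proposal is not a routine verification but the central construction idea of the proof --- a properly structured bilinear term in the outer function $f$ that cancels the sixth-order term in the optimality equation and makes the stated $\xi$, $r$, and hence the $\Omega\big(\sqrt{L_y\widetilde L_{xy}^2/(\mu_x\mu_y^2)}\log(1/\epsilon)\big)$ complexity correct.
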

Note that the inner-level function $g(x,y)$ in our constructed worst-case instance takes the same quadratic form as in \cref{quadratic_case} so that the lower bound in \Cref{thm:low1} also applies to the quadratic $g$ subclass. 
We provide a proof sketch of \Cref{thm:low1} as follows, and present the complete proof in~\Cref{appen:thm1}. 
 
\subsection*{Proof Sketch of \Cref{thm:low1}}
The proof of  \Cref{thm:low1} is divided into four main steps: 1) constructing a worst-case instance $(f,g)\in\mathcal{F}_{scsc}$; 2) characterizing the optimal point $x^*=\argmin_{x\in\mathbb{R}^d}\Phi(x)$; 3) characterizing the subspaces $\mathcal{H}_x^K,\mathcal{H}_y^K$; and 4) lower-bounding the convergence rate and complexity. 

\vspace{0.1cm}
\noindent {\bf Step 1 (construct a worse-case instance):} We first construct the following instance functions $f$ and $g$.
 \begin{align}\label{str_fg}
f(x,y) &= \frac{1}{2} x^T(\alpha Z^2 +\mu_x I) x -\frac{\alpha\beta}{\widetilde L_{xy}}x^TZ^3y +\frac{\bar L_{xy}}{2}x^TZy + \frac{L_y}{2}\|y\|^2 + \frac{\bar L_{xy}}{\widetilde L_{xy}} b^T y, \nonumber
\\g(x,y) &= \frac{1}{2} y^T (\beta Z^2 +\mu_y I)y -\frac{\widetilde L_{xy}}{2} x^TZy + b^Ty, 
\end{align} 
where $\alpha=\frac{L_x-\mu_x}{4}$, $\beta=\frac{\widetilde L_y-\mu_y}{4}$, and the coupling matrices  $Z,Z^2,Z^4$ take the forms of 
{\footnotesize
\begin{align}\label{matrices_coupling}
Z= \begin{bmatrix}
 &   &  & 1\\
 &  &  1& -1  \\
  &\text{\reflectbox{$\ddots$}} &\text{\reflectbox{$\ddots
  $}} &  \\
  1& -1  &  & \\ 
\end{bmatrix},
Z^2 =
\begin{bmatrix}
 1& -1 &  &  & \\
 -1& 2  &-1 &  & \\
 &   \ddots & \ddots & \ddots  & \\
  && -1& 2 & -1 \\
  &  & & -1 & 2\\ 
\end{bmatrix},
Z^4 =  
\begin{bmatrix}
 2& -3 & 1 &  & &\\
 -3& 6  &-4 &1  & &\\
 1& -4 & 6 & -4 & 1&\\
  &  \ddots&  \ddots&  \ddots &  \ddots & \ddots\\
   &  & 1& -4 &  6& -4 \\
  &  & & 1 & -4 & 5\\ 
\end{bmatrix}.
\end{align}
}
\hspace{-0.14cm}The above matrices play an important role in developing lower bounds due to their following zero-chain properties~\citep{nesterov2003introductory,zhang2019lower}. Let $\mathbb{R}^{k,d}= \{x\in\mathbb{R}^d | x_i=0 \text{ for } k+1\leq i \leq d\}$, where $x_i$ denotes the $i^{th}$ coordinate of the vector $x$. 

\begin{lemma}[Zero-Chain Property]\label{zero_chain}
For any given vector $v\in\mathbb{R}^{k,d}$, we have $Z^2v\in\mathbb{R}^{k+1,d}$.
\end{lemma}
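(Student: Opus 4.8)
The plan is to exploit the explicit tridiagonal structure of $Z^2$ displayed in~\cref{matrices_coupling}, and to show that multiplication by $Z^2$ can enlarge the ``top'' support index of a vector by at most one. Concretely, I would first read off from the matrix the $i$-th coordinate of the product: for the interior rows $2\le i\le d-1$ we have $(Z^2v)_i = -v_{i-1}+2v_i-v_{i+1}$, while the boundary rows give $(Z^2v)_1 = v_1-v_2$ and $(Z^2v)_d = -v_{d-1}+2v_d$. The essential structural fact is that each output coordinate $i$ depends only on the three input coordinates $i-1,\,i,\,i+1$, i.e. $Z^2$ is a band matrix of bandwidth one.

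Next I would invoke the hypothesis $v\in\mathbb{R}^{k,d}$, which means $v_j=0$ for every index $j\ge k+1$, and fix an arbitrary $i\ge k+2$. For such $i$, all three indices appearing in the formula for $(Z^2v)_i$ satisfy $i-1\ge k+1$, $i\ge k+2>k$, and $i+1>k$, so each of $v_{i-1},v_i,v_{i+1}$ vanishes and hence $(Z^2v)_i=0$. Since this holds for every $i\ge k+2$, the vector $Z^2v$ has all coordinates beyond the $(k+1)$-th equal to zero, which is precisely the assertion $Z^2v\in\mathbb{R}^{k+1,d}$.

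There is no serious obstacle here: the lemma is a direct consequence of the tridiagonal (second-difference) structure of $Z^2$, and the only care required is to handle the boundary rows $i=1$ and $i=d$ so that the indexing does not run off either end of the vector. The tightest case is $i=k+2$, where the relevant leftmost index is $i-1=k+1$, which is exactly the first position at which the hypothesis forces a zero; this is why the support grows by \emph{exactly} one coordinate rather than none. One could instead argue abstractly by noting that $Z^2$ couples only adjacent coordinates, but the entrywise computation above is the cleanest to write down. This zero-chain property then feeds into Step~3 of the main proof, where iterating $Z^2$ lets one bound how quickly the support of the iterates $x_k,y_k$ can spread across coordinates, confining the subspaces $\mathcal{H}_x^k,\mathcal{H}_y^k$ to low dimension and thereby yielding the claimed lower bound.
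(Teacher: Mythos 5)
Your proof is correct: the entrywise verification that $Z^2$ is tridiagonal, so $(Z^2v)_i$ depends only on $v_{i-1},v_i,v_{i+1}$ and hence vanishes for all $i\ge k+2$, is exactly the argument underlying \Cref{zero_chain}, which the paper states without an explicit proof (citing the standard zero-chain constructions of \cite{nesterov2003introductory,zhang2019lower}). Your handling of the boundary rows and of the tight case $i=k+2$ matches the intended reasoning, so there is nothing to add.
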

\Cref{zero_chain} indicates  that if a vector $v$ has nonzero entries only at the first $k$ coordinates, then multiplying it with a matrix $Z^2$ has at most one more nonzero entry at position $k+1$. We demonstrate the validity of the constructed instance by 
showing that  $f$ and $g$ in \cref{str_fg} satisfy Assumptions~\ref{fg:smooth} and~\ref{g:hessiansJaco}, and $\Phi(x)$ is $\mu_x$-strongly-convex.  

\vspace{0.2cm}
 \noindent {\bf Step 2 (characterize the minimizer $x^*$):} We show that the unique minimizer $x^*$ satisfies the following equation   
 \begin{align}\label{refere:eqs}
 Z^4 x^* +\lambda Z^2x^* +\tau x^* = \gamma Zb,
\end{align}
where $\lambda=\Theta(1)$ and $\gamma=\Theta(1)$, $\tau = \Theta(\mu_x\mu_y^2)$. We  choose $b$ in \cref{refere:eqs} such that $(Zb)_t=0$  for all $t\geq 3$, which is feasible because we show that $Z$ is invertible. Using the structure of $Z$ in \cref{matrices_coupling}, we show that there exists a vector $\hat x$ with its $i^{th}$ coordinate $\hat x_i = r^i$ such that 
\begin{align}\label{sketch_p}
\|x^*-\hat x\| \leq \mathcal{O}(r^d),
\end{align}
where $0<r<1$ satisfies  $1-r=\Theta(\mu_x\mu_y^2)$. Then, based on the above  \cref{sketch_p}, we are able to characterize $x^*$, e.g., its norm $\|x^*\|$, using its approximate (exponentially close) $\hat x$. 

\vspace{0.2cm}
 \noindent {\bf Step 3 (characterize the iterate subspaces):}  By exploiting the forms of  the subspaces {\small$\{\mathcal{H}_x^k,\mathcal{H}_y^k\}_{k=1}^K$}
defined in \Cref{alg_class}, we use the induction to show that 
$$H_{x}^{K }\subseteq \mbox{Span}\{Z^{2(K+QT+Q)}(Zb),....,Z^2(Zb),(Zb) \}.$$
 Then, noting that $(Zb)_t=0$  for all $t\geq 3$ and using the zero-chain property of $Z^2$, we have the $t^{th}$ coordinate of the output $x^K$ to be zero, i.e., $(x^K)_{t}=0$,  for all $t\geq M+1$.
 
 \vspace{0.2cm}
 \noindent {\bf Step 4 (combine Steps $1,2,3$ and characterize the complexity):} By choosing $d> \max\big\{2M,M+1+\log_{r}\big(\frac{\tau}{4(7+\lambda)}\big)\big\}$, and based on {\bf Steps} 2 and 3, we have  
$ \|x^K-x^*\| \geq \frac{\|x^*-x_0\|}{3\sqrt{2}} r^M$
which, in conjunction with the form of $\Phi(x)$, yields the result in~\cref{result:first}. The complexity result then follows because $1-r=\Theta(\mu_x\mu_y^2)$ and from the definition of the complexity measure in~\Cref{complexity_measyre}.

\vspace{0.2cm}
{\noindent \bf Remark.} We note that the introduction of the term $\frac{\alpha\beta}{\widetilde L_{xy}}x^TZ^3y$ in $f$ is necessary to obtain the lower bound $\widetilde \Omega (\mu_x\mu_y^{2})$. Without such a term, there will be an additional high-order term $\Omega(A^6x)$ at the left hand side of \cref{refere:eqs}. Then, following the same steps as in Step 2, we would obtain a result similar to \cref{sketch_p}, but with a parameter $r$ satisfying 
$ 0<\frac{1}{1-r}<\mathcal{O}\big( {\frac{1}{\mu_x\mu_y}} \big).$
Then, following the same steps as in Steps 3 and 4, the final overall complexity $\mathcal{C}_{\text{\normalfont fun}}(\mathcal{A},\epsilon)\geq \Omega\big( {\frac{1}{\mu_x\mu_y}} \big)$, which is not as tight as 
 $\Omega\big( {\frac{1}{\mu_x\mu^2_y}} \big)$ obtained under the selection in \cref{str_fg}.

\subsection{Convex-Strongly-Convex Bilevel Optimization}
We next characterize the lower complexity bound for the convex-strongly-convex setting, where $\Phi(\cdot)$ is {\bf convex} and the inner-level function $g(x,\cdot)$ is $\mu_y$-strongly-convex.
We state our main result for this case in the following theorem.
\begin{theorem}\label{main:convex}
Let $M = K+QT-Q+3$ with $K, T,Q $ given by~\Cref{alg_class}, and let $x^K$ be an output belonging to the  subspace $\mathcal{H}_x^K$, i.e., generated by any algorithm in the hypergradient-based algorithm class
defined in~\Cref{alg_class}.  
There exists an instance in $\mathcal{F}_{csc}$ defined in~\Cref{de:pc} with dimensions $p=q=d$ such that in order to achieve $\|\nabla\Phi(x^K)\|\leq \epsilon$, it requires 
$M \geq \lfloor r^*\rfloor -3$, 
 where $r^*$ is the solution of the equation 
 \begin{align}\label{eq:rsoltion}
 r^4 +r \Big(\frac{2\beta^4}{\mu_y^4}+\frac{4\beta^3}{\mu_y^3}+\frac{4\beta^2}{\mu_y^2}\Big) = \frac{B^2(\widetilde L^2_{xy}L_y+L_x\mu_y^2)^2}{128\mu_y^4\epsilon^2},
 \end{align}
 where $\beta=\frac{\widetilde L_y-\mu_y}{4}$ and $B$ is given in \Cref{de:pc}. The complexity satisfies $\mathcal{C}_{\text{\normalfont grad}}(\mathcal{A},\epsilon)\geq \Omega(r^*)$.
\end{theorem}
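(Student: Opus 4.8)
The plan is to follow the same four-step template used for \Cref{thm:low1}, adapting each step to the absence of outer strong convexity. First I would construct a quadratic worst-case pair $(f,g)\in\mathcal{F}_{csc}$ of essentially the same form as \cref{str_fg}, except with the $\mu_x I$ term removed from the $x$-quadratic in $f$ so that $\Phi(\cdot)$ is merely convex. Concretely, I would keep the bilinear $Z$-coupling together with the $Z^3$ correction term so that, after substituting $y^*(x)$ into the hypergradient formula \cref{hyperG}, the gradient reduces (up to a positive scalar) to the clean form
\begin{align*}
\nabla\Phi(x) \;\propto\; Z^4 x + \lambda Z^2 x - \gamma Z b,
\end{align*}
for constants $\lambda,\gamma=\Theta(1)$ determined by $\alpha,\beta,\widetilde L_{xy},L_y,\mu_y$. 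I would then verify that this instance satisfies Assumptions~\ref{fg:smooth} and~\ref{g:hessiansJaco}, that $g(x,\cdot)$ is $\mu_y$-strongly-convex, and that $\Phi(\cdot)$ is convex, relying on the positive semidefiniteness of the unnormalized graph Laplacian $Z^2$ and of $Z^4$.

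Next I would characterize the minimizer $x^*$, which now solves the homogeneous-plus-particular recursion $Z^4 x^* + \lambda Z^2 x^* = \gamma Z b$, the analogue of \cref{refere:eqs} with the $\tau x^*$ term dropped. Choosing $b$ so that $(Zb)_t=0$ for $t\geq 3$ and fixing its scale to enforce the constraint $\|x^*\|=B$ required by \Cref{de:pc}, I would extract the decay profile of the entries of $x^*$; unlike the geometric $r^i$ decay of the strongly convex case, here the tail is governed algebraically by the ratios $\beta/\mu_y$. For the subspace step I would reuse the zero-chain property (\Cref{zero_chain}) and the same induction as in Step~3 of \Cref{thm:low1} to show $\mathcal{H}_x^K\subseteq \mathrm{Span}\{Z^{2j}(Zb)\}_{j\geq 0}$, so that any output $x^K$ is supported on the first $M=K+QT-Q+3$ coordinates, i.e.\ $(x^K)_t=0$ for $t\geq M+1$.

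The crux is the final step: lower-bounding $\|\nabla\Phi(x^K)\|$ over the entire restricted subspace rather than translating a distance-to-optimum estimate. Since $\nabla\Phi(x^K)\propto Z^4 x^K + \lambda Z^2 x^K - \gamma Z b$ and $x^K$ vanishes beyond coordinate $M$, I would isolate the coordinates near $M+1,M+2$, where the pentadiagonal $Z^4$ and tridiagonal $Z^2$ expose the truncated tail of $x^*$ and leave a residual that no choice of $x^K$ in the subspace can cancel. Quantifying this residual in terms of $M$ and setting it equal to $\epsilon$ is what produces the quartic balance in \cref{eq:rsoltion}: the $r^4$ term arises from the $Z^4$ contribution, the $r(\tfrac{2\beta^4}{\mu_y^4}+\tfrac{4\beta^3}{\mu_y^3}+\tfrac{4\beta^2}{\mu_y^2})$ term collects the $\lambda Z^2$ and cross contributions, and the right-hand constant $\tfrac{B^2(\widetilde L_{xy}^2 L_y + L_x\mu_y^2)^2}{128\mu_y^4\epsilon^2}$ tracks the scale of $x^*$ and the smoothness parameters. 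Solving for the critical index $r^*$ then forces $M\geq\lfloor r^*\rfloor - 3$, and the complexity bound $\mathcal{C}_{\text{\normalfont grad}}(\mathcal{A},\epsilon)\geq\Omega(r^*)$ follows from \Cref{complexity_measyre}.

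I expect this final step to be the main obstacle, precisely because dropping the $\mu_x I$ regularizer removes the geometric contraction that made the strongly convex lower bound transparent. Instead of a clean exponential gap, one must exploit the differing sparsity patterns of $Z^2$ and $Z^4$ to argue that the tail residual is irreducible, and the bookkeeping needed to recover the exact coefficients of \cref{eq:rsoltion}, rather than merely their orders, is the most delicate part of the argument.
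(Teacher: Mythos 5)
There is a genuine gap, and it sits exactly where you predicted: the final step of lower-bounding $\|\nabla\Phi(x)\|$ over the restricted subspace. Your proposed mechanism — isolating coordinates near $M+1,M+2$ where the banded operators "expose the truncated tail of $x^*$" and leave an uncancellable residual — does not work. Since your instance has $\nabla\Phi(x)\propto (Z^4+\lambda Z^2)x-\gamma Zb$ with $Zb$ supported on the first few coordinates, every coordinate of $\nabla\Phi(x)$ beyond index $M$ depends only on $x_{M-1},x_M$ (by the bandwidth of $Z^4$ and $Z^2$); an algorithm can simply output a vector supported on the first $M-2$ coordinates (or even $x^K=0$), making all gradient coordinates past $M$ vanish identically. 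There is no local residual: because $x^*$ exactly solves the stationarity equation, the tail of $x^*$ cancels against the right-hand side, and the obstruction to having a small gradient lives entirely in the first $M$ coordinates. What is needed is a \emph{global} statement about how well the system $(Z^4+\lambda Z^2)x=\gamma Zb$ can be approximately solved by vectors in the truncated subspace, and nothing in your outline supplies it.

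The paper resolves this with a different construction and a projection/null-vector argument that your plan is missing. First, the instance is not your "strongly-convex construction minus $\mu_x I$": the paper takes $f(x,y)=\frac{L_x}{8}x^TZ^2x+\frac{L_y}{2}\|y\|^2$ (no bilinear coupling in $f$ at all) with the coupling only inside $g$, and — crucially — a \emph{different} matrix $Z$ (see \cref{matrices_coupling_convex}), the "free-end" Laplacian, chosen so that the linear vector $h=(1,2,\ldots,d)^T$ satisfies $Z^2h=e_d$. Second, $b$ is chosen (see \cref{de:bwidetilde}) so that $x^*=\frac{B}{\sqrt d}\mathbf{1}$ exactly, rather than having an algebraically decaying tail; this both enforces $\|x^*\|=B$ and makes the final constants computable. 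Third — the key lemma — the minimum of $\|\nabla\Phi(x)\|$ over all $x$ with last three coordinates zero is evaluated as a least-squares distance: writing the restricted gradient map as $H\Omega\widetilde x - c$, its minimum norm is bounded below by $|c^Tz|$ for an explicit unit null vector $z$ of $(H\Omega)^T$, namely $z\propto(\beta Z^2+\mu_y I)^2h$; this is where $Z^2h=e_d$ is used, since it forces $\widetilde H h$ to be supported on the last three coordinates. Evaluating $|c^Tz|=\lambda|\widetilde b_1+2\widetilde b_2+3\widetilde b_3|$ and bounding the normalization $\lambda$ is what produces the quartic balance in \cref{eq:rsoltion}; the $r^4$ term comes from $\|(\beta Z^2+\mu_y I)^2h\|^2\sim\mu_y^4d^4$, not from the $Z^4$ part of the Hessian as you conjectured. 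Without this global null-space argument (and the bespoke $Z$, $f$, and $b$ that make the null vector explicit), the remaining steps of your outline — which otherwise track the paper's Steps 1, 2, and 4 — cannot be completed.
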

Note that \Cref{main:convex} uses the gradient norm $\|\nabla \Phi(x)\|\leq \epsilon$ rather than the suboptimality gap $\Phi(x^K)-\Phi(x^*)$ as the convergence criteria. This is because for the convex-strongly-convex case,  lower-bounding the suboptimality gap requires  the Hessian matrix $A$ in the worst-case construction of the total objective function $\Phi(x)$ to have a nice structure, e.g., the solution of $A^\prime x = e_1$ ($e_1$ has a single non-zero value $1$ at the first coordinate) is explicit, where $A^\prime$ is derived by removing last $k$ columns and rows of $A$.  However, in bilevel optimization, $A$ often contains different powers of the zero-chain matrix $Z$, and does not have such a structure. We will leave the lower bound under the suboptimality criteria for the future study. Note that $r^*$ in \Cref{main:convex} has a complicated form. 
The following two corollaries simplify the complexity results by considering specific parameter regimes. 
\begin{corollary}\label{co:co1}
Under the same setting of \Cref{main:convex}, consider the case when $\beta\leq \mathcal{O}(\mu_y)$. Then, we have 
$\mathcal{C}_{\text{\normalfont grad}}(\mathcal{A},\epsilon)\geq \Omega \big(\frac{B^{\frac{1}{2}}(\widetilde L^2_{xy}L_y+L_x\mu_y^2)^{\frac{1}{2}}}{\mu_y\epsilon^{\frac{1}{2}}} \big)$. 
\end{corollary}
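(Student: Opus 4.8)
The plan is to reduce the claim to a direct estimate of the root $r^*$ of the quartic equation \eqref{eq:rsoltion} in the special regime $\beta\leq\mathcal{O}(\mu_y)$, and then invoke the bound $\mathcal{C}_{\text{\normalfont grad}}(\mathcal{A},\epsilon)\geq\Omega(r^*)$ already supplied by \Cref{main:convex}. To organize the computation, I would abbreviate the right-hand side of \eqref{eq:rsoltion} as $R:=\frac{B^2(\widetilde L_{xy}^2 L_y+L_x\mu_y^2)^2}{128\mu_y^4\epsilon^2}$ and the coefficient of the linear term as $C:=\frac{2\beta^4}{\mu_y^4}+\frac{4\beta^3}{\mu_y^3}+\frac{4\beta^2}{\mu_y^2}$, so that \eqref{eq:rsoltion} becomes $\phi(r^*)=R$ with $\phi(r):=r^4+Cr$ strictly increasing on $[0,\infty)$.

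The first step is to control the linear coefficient. Under $\beta\leq\mathcal{O}(\mu_y)$ the ratio $\beta/\mu_y$ is bounded by a universal constant, so each of $\beta^4/\mu_y^4$, $\beta^3/\mu_y^3$, $\beta^2/\mu_y^2$ is $\mathcal{O}(1)$ and hence $C=\mathcal{O}(1)$. This strips the parameter dependence from the linear term and leaves $r^4+Cr=R$ with a constant $C$, which is the whole point of restricting to this regime.

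The second and main step is to show that the quartic term dominates, giving $r^*\geq\Omega(R^{1/4})$. Since $\phi$ is increasing, to establish $r^*\geq cR^{1/4}$ it suffices to verify $\phi(cR^{1/4})\leq R$, i.e. $c^4+CcR^{-3/4}\leq 1$. For all sufficiently small $\epsilon$ one has $R\geq 1$, so $R^{-3/4}\leq 1$ and it is enough to choose $c\leq\min\{2^{-1/4},1/(2C)\}$, which forces $c^4+Cc\leq 1$. This is exactly the place that requires care: the domination of $r^4$ over the linear perturbation $Cr$ holds only once $R\gtrsim 1$, but this is precisely the relevant regime for an $\Omega(\cdot)$ complexity statement, since $R\to\infty$ as $\epsilon\to 0$. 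I expect this monotonicity/threshold bookkeeping to be the only genuinely delicate part of the argument; everything else is substitution.

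Finally I would compute $R^{1/4}$ explicitly: taking the fourth root sends $B^2\mapsto B^{1/2}$, $(\widetilde L_{xy}^2 L_y+L_x\mu_y^2)^2\mapsto(\widetilde L_{xy}^2 L_y+L_x\mu_y^2)^{1/2}$, $\mu_y^4\mapsto\mu_y$, and $\epsilon^2\mapsto\epsilon^{1/2}$, with the constant $128^{1/4}$ absorbed into the hidden $\Omega$ factor. This yields $r^*\geq\Omega\big(\frac{B^{1/2}(\widetilde L_{xy}^2 L_y+L_x\mu_y^2)^{1/2}}{\mu_y\epsilon^{1/2}}\big)$, and combining with $\mathcal{C}_{\text{\normalfont grad}}(\mathcal{A},\epsilon)\geq\Omega(r^*)$ from \Cref{main:convex} gives the stated lower bound, completing the proof.
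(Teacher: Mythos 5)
Your proposal is correct and follows essentially the same route as the paper: both arguments reduce to observing that $\beta\leq\mathcal{O}(\mu_y)$ makes the linear coefficient $C=\mathcal{O}(1)$, so the quartic term in \cref{eq:rsoltion} dominates and $r^*\geq\Omega(R^{1/4})$, after which \Cref{main:convex} gives the complexity bound. The only difference is presentational — the paper asserts the domination directly at the root via $(r^*)^3\geq\Omega(C)$, whereas you verify it by monotonicity of $r\mapsto r^4+Cr$ at the candidate point $cR^{1/4}$, making the implicit small-$\epsilon$ (i.e., $R\gtrsim 1$) threshold explicit.
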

\begin{corollary}\label{co:co2}
Under the same setting of \Cref{main:convex}, consider the case when $\beta \leq\mathcal{O}(1)$, i.e., at a constant level. Then, we have {\small $\mathcal{C}_{\text{\normalfont grad}}(\mathcal{A},\epsilon)\geq \widetilde \Omega(\frac{1}{\sqrt{\epsilon}}\min\{\frac{1}{\mu_y},\frac{1}{\sqrt{\epsilon^{3}}}\})$}.
\end{corollary}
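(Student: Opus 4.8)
The plan is to read off the defining equation \cref{eq:rsoltion} for $r^*$ provided by \Cref{main:convex}, namely $r^4 + P r = C$ with
\[
P = \frac{2\beta^4}{\mu_y^4}+\frac{4\beta^3}{\mu_y^3}+\frac{4\beta^2}{\mu_y^2}, \qquad C = \frac{B^2(\widetilde L^2_{xy}L_y+L_x\mu_y^2)^2}{128\mu_y^4\epsilon^2},
\]
and to determine the order of its unique positive root in the constant-level regime. Since \Cref{main:convex} already gives $\mathcal{C}_{\text{\normalfont grad}}(\mathcal{A},\epsilon)\ge\Omega(r^*)$, it suffices to establish $r^*=\Theta\big(\tfrac{1}{\sqrt\epsilon}\min\{\tfrac{1}{\mu_y},\tfrac{1}{\epsilon^{3/2}}\}\big)$ and then invoke $\sqrt{\epsilon^3}=\epsilon^{3/2}$.

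First I would simplify the two coefficients. Under $\beta\le\mathcal O(1)$ (equivalently $\widetilde L_y\le\mathcal O(1)$), the hardest instance takes $\beta=\Theta(1)$, and treating $B,\widetilde L_{xy},L_x,L_y$ as $\Theta(1)$ constants with $\mu_y$ small (so $\mu_y\le\beta$), the term $\tfrac{2\beta^4}{\mu_y^4}$ dominates $P$, giving $P=\Theta(1/\mu_y^4)$. Likewise the numerator of $C$ tends to $(\widetilde L^2_{xy}L_y)^2=\Theta(1)$, so $C=\Theta(1/(\mu_y^4\epsilon^2))$. This produces the two candidate roots $C^{1/4}=\Theta\big(\tfrac{1}{\mu_y\sqrt\epsilon}\big)$ (the root of $r^4=C$) and $C/P=\Theta\big(\tfrac{1}{\epsilon^2}\big)$ (the root of $Pr=C$).

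Next I would pin $r^*$ to the smaller candidate. Because $r\mapsto r^4+Pr$ is strictly increasing on $(0,\infty)$, the root is unique, and from $r^4\le C$ and $Pr\le C$ I get $r^*\le\min\{C^{1/4},C/P\}$ at once. For the matching lower bound I would split on which candidate is smaller. If $P\le C^{3/4}$ then $\min\{\cdot,\cdot\}=C^{1/4}$, and evaluating the increasing map at $r=\tfrac12 C^{1/4}$ gives $r^4+Pr\le\tfrac{1}{16}C+\tfrac12 PC^{1/4}\le\tfrac{9}{16}C<C$, so $r^*\ge\tfrac12 C^{1/4}$. Symmetrically, if $P> C^{3/4}$ then $\min\{\cdot,\cdot\}=C/P$, and at $r=\tfrac12 C/P$ the bound $P^4>C^3$ yields $r^4<\tfrac{1}{16}C$ while $Pr=\tfrac12 C$, again giving a value $<C$ and hence $r^*\ge\tfrac12 C/P$. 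Thus $r^*=\Theta(\min\{C^{1/4},C/P\})$.

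Finally I would substitute the simplified orders and factor out $\tfrac{1}{\sqrt\epsilon}$, obtaining
\[
r^*=\Theta\Big(\min\Big\{\tfrac{1}{\mu_y\sqrt\epsilon},\tfrac{1}{\epsilon^2}\Big\}\Big)=\Theta\Big(\tfrac{1}{\sqrt\epsilon}\min\Big\{\tfrac{1}{\mu_y},\tfrac{1}{\epsilon^{3/2}}\Big\}\Big),
\]
which, via $\mathcal{C}_{\text{\normalfont grad}}(\mathcal{A},\epsilon)\ge\Omega(r^*)$ and $\epsilon^{3/2}=\sqrt{\epsilon^3}$, is exactly the claimed bound, the tilde absorbing the suppressed constants. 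The crossover $\mu_y\asymp\epsilon^{3/2}$ between the two regimes is precisely what generates the $\min$ structure. I expect the main obstacle to be the two-sided control of the root of the mixed quartic-plus-linear equation---in particular choosing the split ($P$ versus $C^{3/4}$) so that the discarded term is provably a constant fraction of $C$ in each regime; the coefficient simplification and the final substitution are routine once that dominance bookkeeping is in place.
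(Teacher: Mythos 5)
Your proposal is correct and takes essentially the same route as the paper: both arguments read the bound off the dominance structure of $r^4 + Pr = C$, and your dichotomy $P \le C^{3/4}$ versus $P > C^{3/4}$ coincides exactly with the paper's case split $\mu_y \ge \Omega(\epsilon^{3/2})$ versus $\mu_y \le \mathcal{O}(\epsilon^{3/2})$ once the simplifications $P = \Theta(\mu_y^{-4})$ and $C = \Theta(\mu_y^{-4}\epsilon^{-2})$ are substituted. The only difference is cosmetic: you additionally establish the matching upper bound $r^* \le \min\{C^{1/4}, C/P\}$ via monotonicity, which the corollary (being a lower bound on complexity) does not require.
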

The proof sketch of \Cref{main:convex} is provided as follows. The complete proof is provided in \Cref{apep:mainconvexs}. 

\subsection*{Proof Sketch of \Cref{main:convex}}
{\bf Step 1(construct the worst-case instance):} We construct the instance functions $f$ and $g$ as follows.
\begin{align}\label{con_fg}
f(x,y) &= \frac{L_x}{8} x^T Z^2 x+ \frac{L_y}{2}\|y\|^2, \nonumber
\\g(x,y) &= \frac{1}{2} y^T (\beta Z^2 +\mu_y I)y -\frac{\widetilde L_{xy}}{2} x^TZy + b^Ty, 
\end{align}
where $\beta=\frac{\widetilde L_y-\mu_y}{4}$.  Here, the coupling matrix $Z$ is different from that~\cref{matrices_coupling} for the strongly-convex-strongly-convex case, which takes the form of 
\begin{align}\label{matrices_coupling_convex}
Z:=\begin{bmatrix}
   & &  1& -1\\
   &1&  -1&   \\
 \text{\reflectbox{$\ddots $}}  &\text{\reflectbox{$\ddots$}}  &  \\
  -1&   &  & \\ 
\end{bmatrix},\quad
&Z^2 := 
\begin{bmatrix}
 2& -1 &  &  & \\
 -1& 2  &-1 &  & \\
 &   \ddots & \ddots & \ddots  & \\
  && -1& 2 & -1 \\
  &  & & -1 & 1\\ 
\end{bmatrix}.
\end{align}
It can be verified that $Z$ is invertible and $Z^2$ in  \cref{matrices_coupling_convex} also satisfies the zero-chain property, i.e., \Cref{zero_chain}. 
We can further verify that  $\Phi(x)$ is convex and functions $f,g$ satisfy Assumptions~\ref{fg:smooth} and~\ref{g:hessiansJaco}.  

 \vspace{0.2cm}
\noindent {\bf Step 2 (characterize the minimizer $x^*$):} Recall that $x^*\in \argmin_{x\in\mathbb{R}^d}\Phi(x)$. We then show that $x^*$ satisfies the equation 
\begin{align*}
\Big (\frac{L_x\beta^2}{4} Z^6 +  \frac{L_x\beta^2\beta\mu_y}{2} Z^4 +\Big (\frac{L_y\widetilde L_{xy}^2}{4}+\frac{L_x\mu_y^2}{4}\Big) Z^2 \Big) x^* =  \frac{L_y\widetilde L_{xy}}{2} Zb. 
\end{align*}
Let $\widetilde b =  \frac{L_y\widetilde L_{xy}}{2} Zb$ and choose $b$  such that $\widetilde b_t=0$ for all $t\geq 4$. Then, by choosing $\widetilde b_1,\widetilde b_2, \widetilde b_3$ properly, we derive that $x^* = \frac{B}{\sqrt{d}} {\bf 1}$, where ${\bf 1}$ is the all-one vector, and hence $\|x^*\|= B$. 

 \vspace{0.2cm}
\noindent  {\bf Step 3 (characterize the gradient norm):} In this step, we show that for any $x$ whose last three coordinates are zeros, the gradient norm of $\nabla\Phi(x)$ is lower-bounded. Namely, we prove that 
\begin{align}\label{eq:sehcsa}
\min_{x\in\mathbb{R}^d: \;x_{d-2}=x_{d-1}=x_d=0} \|\nabla\Phi(x)\|^2 \geq \frac{B^2\Big(\frac{\widetilde L^2_{xy}L_y}{4}+\frac{L_x\mu_y^2}{4}\Big)^2}{8\mu_y^4d^4 +16d\beta^4+32d\beta^3\mu_y+32d\beta^2\mu_y^2}.
\end{align}

 \vspace{0.2cm}
\noindent{\bf Step 4 (characterize the iterate subspaces):} By exploiting the forms of  the subspaces $\{\mathcal{H}_x^k,\mathcal{H}_y^k\}_{k=1}^K$
defined in \Cref{alg_class} and by induction, we show that 
$$H_{x}^{K }\subseteq \mbox{Span}\{Z^{2(K+QT-Q)}(Zb),....,Z^2(Zb),(Zb)\}.$$
Since  $(Zb)_t=0$  for all $t\geq 4$ and using the zero-chain property of $Z^2$, we have the $t^{th}$ coordinate of the output $x^K$ is zero, i.e., $(x^K)_{t}=0$,  for all $t\geq M+1$, where $M=K+QT-Q + 3$. 

 \vspace{0.2cm}
\noindent  {\bf Step 5 (combine Steps $1,2,3,4$ and characterize the complexity):} Choose $d$ such that the right hand side of \cref{eq:sehcsa} equals $\epsilon$ by  solving \cref{eq:rsoltion}. Then, using the results in Steps 3 and 4, it follows that for any $M\leq d-3$, $\|\nabla \Phi(x^K)\|\geq \epsilon$. Thus, to achieve $\|\nabla \Phi(x)\|\leq \epsilon$ , it requires $M>d-3$ and the complexity result follows as  $\mathcal{C}_{\text{\normalfont grad}}(\mathcal{A},\epsilon) \geq  \Omega(M)$.

\section{Accelerated Gradient Method and Upper Bounds for Bilevel Optimization} \label{upper_withoutB}
In this section, we propose a new bilevel optimization algorithm, and characterize its computational complexity, which serves as new upper bounds for bilevel optimization.
\subsection{Accelerated Bilevel Optimization Algorithm: AccBiO}
As shown in \Cref{alg:bioNoBG}, we propose a new accelerated algorithm named AccBiO for bilevel optimization.  
At the beginning of each outer iteration, we run $N$ steps of accelerated gradient descent (AGD) to get $y_k^N$ as an approximate of $y_k^*=\argmin_y g(x_k,y)$. Then, based on the inner-level output $y_k^N$, we construct a hypergradient estimate via $G_k:= \nabla_x f(x_k,y_k^N) -\nabla_x \nabla_y g(x_k,y_k^N)v_k^M$, where $v_k^M$ is 
  the output of an $M$-step heavy ball method with stepsizes $\eta$ and $\theta$ for solving a quadratic problem as shown in line $7$. Finally, as shown in lines $8$-$9$, we update the variables $z_{k}$ and $x_k$  using  Nesterov's momentum acceleration scheme~\citep{nesterov2018lectures} over the estimated hypergradient $G_k$.
  Next, we analyze  the convergence and complexity performance of AccBiO for the two bilevel optimization classes $\mathcal{F}_{scsc}$ and $\mathcal{F}_{csc}$ described in \Cref{de:pc}.

\begin{algorithm}[t]
	\caption{Accelerated Bilevel Optimization (AccBiO) Algorithm} 
	\small
	\label{alg:bioNoBG}
	\begin{algorithmic}[1]
		\STATE {\bfseries Input:}  Initialization $ z_0=x_0=y_0=0$, parameters  $\lambda$ and $\theta$ 
		\FOR{$k=0,1,...,K$}
		\STATE{Set $y_k^0 = 0 $ as initialization}
		\FOR{$t=1,....,N$}
		\STATE{
		
		\vspace{-0.45cm} 
		\begin{align*}
		 \quad y_k^{t} &= s_k^{t-1} - \frac{1}{\widetilde L_y} \nabla_y g(x_k,s_k^{t-1}), \; s_k^{t} = \frac{2\sqrt{\kappa_y}}{\sqrt{\kappa_y}+1} y_k^{t} - \frac{\sqrt{\kappa_y}-1}{\sqrt{\kappa_y}+1} y_k^{t-1}.
		\end{align*}
		\vspace{-0.7cm} }
		\ENDFOR
                 \STATE{ \textit {Hypergradient computation}: \\
               \hspace{0.4cm}1) Get $v_k^M$ after running $M$ steps of heavy-ball method $v_k^{t+1} = v_k^t-\lambda\nabla Q(v_k^t) +\theta(v_k^t-v_k^{t-1})$ \\ \hspace{0.7cm} with initialization $v_k^{0}=v_k^{1}=0$ over \vspace{-0.2cm} 
               \begin{align*}
            \min_v Q(v):=\frac{1}{2}v^T\nabla_y^2 g(x_k,y_k^N) v - v^T
\nabla_y f( x_k,y^N_k)
\end{align*}\vspace{-0.4cm}\\
    \hspace{0.4cm}2) Compute $\nabla_x \nabla_y g(x_k,y_k^N)v_k^M $ via automatic differentiation; \\
    \vspace{0.1cm}
    \hspace{0.4cm}3) compute $G_k:= \nabla_x f(x_k,y_k^N) -\nabla_x \nabla_y g( x_k,y_k^N)v_k^M.$ 
              }
               \vspace{0.1cm}
               \STATE{Update $z_{k+1}=x_k -\frac{1}{L_\Phi} G_k$}
               \vspace{0.1cm}
               \STATE{Update $x_{k+1}=\Big(1+\frac{\sqrt{\kappa_x}-1}{\sqrt{\kappa_x}+1}\Big)z_{k+1} - \frac{\sqrt{\kappa_x}-1}{\sqrt{\kappa_x}+1} z_k$}                          
		\ENDFOR
	\end{algorithmic}
	\end{algorithm}

\subsection{Strongly-Convex-Strongly-Convex Bilevel Optimization}	
 In this setting, $\Phi(x)$ is $\mu_x$-strongly-convex and $g(x,\cdot)$ is $\mu_y$-strongly-convex. 
 The following theorem provides a performance guarantee for AccBiO. Recall $x^*=\argmin_{x}\Phi(x)$.
\begin{theorem}\label{upper_srsr_withnoB}  
Suppose that $(f,g)$ belong to the strongly-convex-strongly-convex class $\mathcal{F}_{scsc}$ in \Cref{de:pc}. Choose stepsizes {\small $\lambda=\frac{4}{(\sqrt{\widetilde L_y}+\sqrt{\mu_y})^2}$} and {\small $\theta=\max\big\{\big(1-\sqrt{\lambda\mu_y}\big)^2,\big(1-\sqrt{\lambda\widetilde L_y}\big)^2\big\}$} for the heavy-ball method.  Let $\kappa_y=\frac{\widetilde L_y}{\mu_y}$ be the condition number for the inner-level function $g(x,\cdot)$ and $L_\Phi =\Theta\big(\frac{1}{\mu_y^2}+\big(\frac{ \rho_{yy}}{\mu_y^3} +  \frac{\rho_{xy}}{\mu_y^2}\big)\big(\Delta^*_{\text{\normalfont\tiny SCSC}}+\frac{\sqrt{\epsilon}}{\sqrt{\mu_x}\mu_y}\big) \big) $ be the smoothness parameter of the objective  $\Phi(\cdot)$, where $\Delta^*_{\text{\normalfont\tiny SCSC}}=\|\nabla_y f( x^*,y^*(x^*))\|+\frac{\|x^*\|}{\mu_y}+\frac{\sqrt{\Phi(0)-\Phi(x^*)}}{\sqrt{\mu_x}\mu_y}$. Then, we have 
\begin{align*}
\Phi(z_K)- \Phi(x^*)\leq  \Big(1 -\frac{1}{\sqrt{\kappa_x}} \Big)^{K}(\Phi(0) -\Phi(x^*)+\frac{\mu_x}{2} \|x^*\|^2) +\frac{\epsilon}{2},
\end{align*}
where $\kappa_x=\frac{L_\Phi}{\mu_x}$ is the condition number for $\Phi(\cdot)$. To achieve $\Phi(z_K)- \Phi(x^*)<\epsilon$, the complexity satisfies 
\begin{align}\label{mybabasohotssca1}
\mathcal{C}_{\text{\normalfont fun}}(\mathcal{A},\epsilon)\leq \mathcal{\widetilde O}\bigg(\sqrt{\frac{\widetilde L_y}{\mu_x\mu_y^{3}}}+\Big(\sqrt{\frac{ \rho_{yy}\widetilde L_y}{\mu_x\mu_y^{4}} }+ \sqrt{ \frac{\rho_{xy}\widetilde L_y}{\mu_x\mu_y^{3}}}\Big)\sqrt{\Delta^*_{\text{\normalfont\tiny SCSC}}}\bigg).
\end{align}
\end{theorem}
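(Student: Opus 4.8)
The plan is to establish the displayed convergence inequality, and hence the complexity bound \cref{mybabasohotssca1}, through a three-part argument: (i) characterize the smoothness constant $L_\Phi$ of $\Phi$ and bound the hypergradient estimation error $\|G_k-\nabla\Phi(x_k)\|$; (ii) run an induction that simultaneously controls the boundedness of the iterates $x_k$ and the resulting estimation error; and (iii) invoke an inexact accelerated-gradient analysis for the outer loop. First I would establish that $\Phi$ is $L_\Phi$-smooth with the stated $L_\Phi$. Starting from the hypergradient formula \cref{hyperG}, the smoothness constant is governed by the Lipschitz parameters in Assumptions~\ref{fg:smooth} and~\ref{g:hessiansJaco}, by the strong convexity $\mu_y$ of $g(x,\cdot)$ (which yields $\|[\nabla_y^2 g]^{-1}\|\le 1/\mu_y$), and crucially by the magnitude of $\nabla_y f(x,y^*(x))$, which is \emph{not} assumed bounded. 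This is precisely why $L_\Phi$ carries the factor $\Delta^*_{\text{\normalfont\tiny SCSC}}$: differentiating the product $\nabla_x\nabla_y g\cdot[\nabla_y^2 g]^{-1}\nabla_y f$ forces us to control $\|\nabla_y f(x,y^*(x))\|$ along the trajectory, and the terms $\rho_{yy}/\mu_y^3$ and $\rho_{xy}/\mu_y^2$ arise from the Lipschitzness of the Hessian and Jacobian composed with this bound.

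Next I would bound the estimation error by decomposing $G_k-\nabla\Phi(x_k)$ into the inner AGD error $\|y_k^N-y^*(x_k)\|$ and the error of the heavy-ball solver for the quadratic subproblem, whose exact minimizer is $[\nabla_y^2 g(x_k,y_k^N)]^{-1}\nabla_y f(x_k,y_k^N)$. The AGD inner loop contracts at the accelerated rate $(1-1/\sqrt{\kappa_y})^N$ and the heavy-ball loop at a comparable accelerated rate in $M$, so with the prescribed stepsizes $\lambda,\theta$ both contributions decay geometrically. The difficulty is that the prefactors multiplying these geometric terms scale with $\|y^*(x_k)\|$ and $\|\nabla_y f(x_k,y_k^N)\|$, both of which grow with $\|x_k\|$ through the $\mu_y$-strong convexity of $g(x,\cdot)$.

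The key and hardest step is the induction controlling $\|x_k\|$. Because $L_\Phi$ and the error prefactors are both trajectory-dependent, there is a circular dependence: the convergence guarantee presupposes a bounded trajectory, yet boundedness follows only from convergence. I would break this by an induction hypothesis asserting that $\|x_k\|$ remains within a fixed multiple of $\|x^*\|$ plus an $O(\sqrt{\epsilon}/(\sqrt{\mu_x}\mu_y))$ slack, matching the extra term appearing in $L_\Phi$. Under this hypothesis the prefactors are uniformly bounded by quantities proportional to $\Delta^*_{\text{\normalfont\tiny SCSC}}$, so taking $N,M=\widetilde O(\sqrt{\kappa_y})$ drives $\|G_k-\nabla\Phi(x_k)\|$ below a target tolerance $\delta$. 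Feeding $\delta$ into the inexact accelerated step then produces one-step progress that keeps $z_{k+1},x_{k+1}$ inside the same ball, closing the induction. This joint control of boundedness and error is the main obstacle, since the absence of the gradient-boundedness assumption is exactly what prevents a standard sequential argument.

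Finally I would apply the potential-function analysis for Nesterov's accelerated method on the $\mu_x$-strongly-convex, $L_\Phi$-smooth objective $\Phi$, but driven by the inexact gradient $G_k$; the accumulated perturbation contributes an additive error controlled by $\delta$, so choosing $N,M$ logarithmic in $1/\epsilon$ yields the claimed $(1-1/\sqrt{\kappa_x})^{K}$ contraction plus the $\epsilon/2$ residual. The complexity \cref{mybabasohotssca1} then follows by taking $K=\widetilde O(\sqrt{\kappa_x})$ outer iterations, each costing $N+M=\widetilde O(\sqrt{\kappa_y})=\widetilde O(\sqrt{\widetilde L_y/\mu_y})$ inner evaluations: using $\sqrt{\kappa_x}=\sqrt{L_\Phi/\mu_x}$ and the subadditivity $\sqrt{a+b}\le\sqrt{a}+\sqrt{b}$, the product $\sqrt{\kappa_x}\sqrt{\kappa_y}$ reproduces the leading term $\sqrt{\widetilde L_y/(\mu_x\mu_y^3)}$ while the $\rho_{yy}$ and $\rho_{xy}$ terms emerge directly from the $\Delta^*_{\text{\normalfont\tiny SCSC}}$-dependence of $L_\Phi$.
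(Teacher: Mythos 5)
Your proposal matches the paper's proof in all essential respects: the same two preparatory lemmas (a trajectory-dependent smoothness bound $L_{\Phi_k}$ and a hypergradient-error bound decomposed into the inner AGD error and the heavy-ball error, both with prefactors growing in $\|x_k-x^*\|$), the same induction that simultaneously certifies iterate boundedness and per-step progress to break the circular dependence created by the absence of the gradient-boundedness assumption, and the same inexact Nesterov acceleration analysis (the paper implements it via estimate sequences with an $\epsilon/4$ slack term) followed by the $K\cdot(N+M)$ complexity count with $N,M=\widetilde{O}(\sqrt{\kappa_y})$. This is essentially the paper's argument, so no further comparison is needed.
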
	
To the best of our knowledge, our result in \Cref{upper_srsr_withnoB} is the first-known upper bound on the computational complexity 
for strongly-convex bilevel optimization under only mild assumptions on the Lipschitz continuity of the first- and second-order derivatives of the outer- and inner-level functions $f,g$. As a comparison, existing results in \cite{ghadimi2018approximation,ji2021bilevel} for bilevel optimization further make a strong assumption that the gradient norm $\|\nabla_y f(x,y)\|$ is bounded for all $(x,y)\in\mathbb{R}^p\times\mathbb{R}^q$ to upper-bound the smoothness parameter $L_{\Phi_k}$ of $\Phi(x_k)$ and the hypergradient estimation error $\|G_k-\nabla \Phi(x_k)\|$ at the $k^{th}$ iteration. This is because  $L_{\Phi_k}$ and $\|G_k-\nabla \Phi(x_k)\|$ turn out to be increasing with the gradient norm $\|\nabla_y f(x_k,y^*(x_k))\|$, for which it is challenging to prove the boundedness given the theoretical frameworks in \cite{ghadimi2018approximation,ji2021bilevel} where no results on bounded iterates are established.  Our analysis does not require such a restrictive assumption because we show by induction that the optimality gap $\|x_k-x^*\|$ is well bounded as the algorithm runs. As a result, we can guarantee the boundedness of the smoothness parameter $L_{\Phi_k}$ and the error $\|G_k-\nabla \Phi(x_k)\|$ during the entire optimization process.  
In \Cref{sec:upper_gBscss}, we further develop tighter upper bounds than existing results under this additional bounded gradient assumption. 


Based on \Cref{upper_srsr_withnoB}, we next study the quadratic $g$ subclass, where the inner-level function $g(x,y)$ takes a quadratic form as in \cref{quadratic_case}. 
The following corollary provides upper bounds on the convergence rate and complexity of AccBiO under this case.
\begin{corollary}[\bf Quadratic $g$ subclass]\label{coro:quadaticSr}
Under the same setting of \Cref{upper_srsr_withnoB}, consider the quadratic inner-level function $g(x,y)$ in \cref{quadratic_case}, where $\nabla_x\nabla_y g(\cdot,\cdot)$ and $\nabla_y^2 g(\cdot,\cdot)$ are constant. To achieve $\Phi(z_K)- \Phi(x^*)<\epsilon$, 
the complexity satisfies 
$\mathcal{C}_{\text{\normalfont fun}}(\mathcal{A},\epsilon)\leq \mathcal{\widetilde O}\Big(\sqrt{\frac{\widetilde L_y}{\mu_x\mu_y^{3}}}\Big).$
\end{corollary}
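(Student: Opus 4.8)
The plan is to obtain this corollary directly from the general complexity bound \cref{mybabasohotssca1} of \Cref{upper_srsr_withnoB} by exploiting the special structure of the quadratic inner-level function. The key observation is that when $g$ takes the quadratic form in \cref{quadratic_case}, its relevant second-order derivatives are \emph{constant}: one computes $\nabla_y^2 g(x,y) = H$ and $\nabla_x\nabla_y g(x,y) = J^T$, neither of which depends on $(x,y)$. Consequently, the left-hand sides of the two Lipschitz inequalities in \Cref{g:hessiansJaco} are identically zero for every pair $(z_1,z_2)$, so we may legitimately take the curvature-Lipschitz constants $\rho_{xy}=0$ and $\rho_{yy}=0$.

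First I would confirm that the quadratic instance still satisfies the hypotheses of \Cref{upper_srsr_withnoB}, namely $(f,g)\in\mathcal{F}_{scsc}$. Since the corollary is stated under the same setting, the $\mu_x$-strong convexity of $\Phi$ and the $\mu_y$-strong convexity of $g(x,\cdot)$ (equivalently $H\succeq \mu_y I$) are assumed, and \Cref{fg:smooth}, \Cref{g:hessiansJaco} continue to hold with the now-vanishing $\rho_{xy},\rho_{yy}$. In particular, the smoothness parameter collapses to $L_\Phi = \Theta(1/\mu_y^2)$ once $\rho_{xy}=\rho_{yy}=0$, so that $\kappa_x = L_\Phi/\mu_x = \Theta(1/(\mu_x\mu_y^2))$ remains well-defined and \Cref{upper_srsr_withnoB} applies verbatim.

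Then I would substitute $\rho_{xy}=\rho_{yy}=0$ into \cref{mybabasohotssca1}. The entire second summand, which is the product of $\big(\sqrt{\rho_{yy}\widetilde L_y/(\mu_x\mu_y^4)} + \sqrt{\rho_{xy}\widetilde L_y/(\mu_x\mu_y^3)}\big)$ with $\sqrt{\Delta^*_{\text{SCSC}}}$, vanishes identically, leaving only the first term $\widetilde{\mathcal{O}}\big(\sqrt{\widetilde L_y/(\mu_x\mu_y^3)}\big)$, which is exactly the claimed bound. As a sanity check, this matches the product of the $\widetilde{\mathcal{O}}(\sqrt{\kappa_x}) = \widetilde{\mathcal{O}}(\sqrt{1/(\mu_x\mu_y^2)})$ outer Nesterov iterations with the $\widetilde{\mathcal{O}}(\sqrt{\kappa_y}) = \widetilde{\mathcal{O}}(\sqrt{\widetilde L_y/\mu_y})$ inner AGD and heavy-ball steps performed per outer iteration.

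There is essentially no analytical obstacle here, as the result follows by plugging vanishing constants into an already-established bound. The only point requiring care is to verify rigorously that the constant curvature of the quadratic $g$ permits the \emph{exact} choice $\rho_{xy}=\rho_{yy}=0$ rather than merely a small constant, and to confirm that no residual dependence on $\rho_{xy}$ or $\rho_{yy}$ survives either in the definition of $L_\Phi$ or inside the logarithmic factors absorbed by the $\widetilde{\mathcal{O}}(\cdot)$ notation.
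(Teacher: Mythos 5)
Your proof is correct. The quadratic form of $g$ in \cref{quadratic_case} has constant $\nabla_y^2 g \equiv H$ and $\nabla_x\nabla_y g \equiv J$, so \Cref{g:hessiansJaco} holds with $\rho_{xy}=\rho_{yy}=0$; the quadratic subclass therefore sits inside $\mathcal{F}_{scsc}$, and substituting these values into \cref{mybabasohotssca1} (and into $L_\Phi$, which collapses to $\Theta(1/\mu_y^2)$, so $\kappa_x=\Theta(1/(\mu_x\mu_y^2))$) yields exactly $\mathcal{\widetilde O}\big(\sqrt{\widetilde L_y/(\mu_x\mu_y^{3})}\big)$. The paper reaches the same conclusion by a somewhat different route: instead of invoking \Cref{upper_srsr_withnoB} as a black box, it re-derives the two quantities that drive that theorem's proof, showing directly that for quadratic $g$ the objective $\Phi$ is \emph{globally} smooth with $L_\Phi = L_x + 2\widetilde L_{xy}L_{xy}/\mu_y + L_y\widetilde L_{xy}^2/\mu_y^2$ (no iterate-dependent $\mathcal{N}_k$ term), and that the hypergradient error $\|G_k-\nabla\Phi(x_k)\|$ contains no $\rho$-dependent terms, and then re-runs the convergence argument. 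What the paper's re-derivation buys is the structural observation that the induction used in \Cref{upper_srsr_withnoB} to control the iterate-dependent smoothness $L_{\Phi_k}$ becomes unnecessary here, since smoothness holds at every $x\in\mathbb{R}^p$ rather than only along the optimization path — this is why the paper calls the quadratic proof ``simpler.'' What your specialization buys is brevity, and it is in fact precisely how the paper proves the analogous convex-strongly-convex corollary (\Cref{coro:quadaticConv}), where it simply sets the second-order Lipschitz constants to zero in \Cref{th:upper_csc1sc}. Your closing caution — verifying that no residual $\rho$-dependence survives in $L_\Phi$ or in the logarithmic factors absorbed by $\mathcal{\widetilde O}(\cdot)$ — is the right check, and it does hold: the logarithmic factors in the choices of $N$, $M$, and $K$ involve only quantities such as $\mu_x,\mu_y,\|x^*\|,\Phi(0)-\Phi(x^*)$, which remain finite when $\rho_{xy}=\rho_{yy}=0$.
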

\Cref{coro:quadaticSr} shows that for the quadratic $g$ subclass, the complexity upper bound in \Cref{upper_srsr_withnoB} specializes to $\mathcal{\widetilde O}\Big(\sqrt{\frac{\widetilde L_y}{\mu_x\mu_y^{3}}}\Big)$. This improvement over the complexity for the general case in~\cref{mybabasohotssca1} comes from tighter upper bounds on the smoothness parameter $L_\Phi$ of the objective function $\Phi(x)$ and a smaller hypergradient estimation error $\|G_k-\nabla\Phi(x_k)\|$.  
In addition, it can be seen that when the inner-level problem  is easy to solve, i.e., $\widetilde L_y\leq\mathcal{O}(\mu_y)$, the complexity becomes $\mathcal{O}\big(\frac{1}{\sqrt{\mu_x\mu_y^2}}\big)$, which matches the lower bound established by \Cref{thm:low1} up to logarithmic factors. 

\subsection{Convex-Strongly-Convex Bilevel Optimization}	
We next provide an upper bound for convex-strongly-convex bilevel optimization, where the  function $\Phi(x)$ is convex. Recall from \Cref{de:pc} that $\|x^*\|=B$ for some constant $B>0$, where $x^*$ is one minimizer of $\Phi(\cdot)$.
For this case, we construct a strongly-convex-strongly-convex function $\widetilde \Phi(\cdot)=\widetilde f(x,y^*(x))$ by adding a small quadratic regularization to the outer-level function $f(x,y)$, i.e., 
\begin{align}\label{regularized_fxy}
\widetilde f(x,y) = f(x,y) +\frac{\epsilon}{2R} \|x\|^2. 
\end{align}
Then, we can apply the results in \Cref{upper_srsr_withnoB} to $\widetilde\Phi(x)$, and obtain the following theorem. 
\begin{theorem}\label{th:upper_csc1sc}
Suppose that $(f,g)$ belong to the convex-strongly-convex class $\mathcal{F}_{csc}$ in \Cref{de:pc}. Let $L_{\widetilde \Phi}$ be the smoothness parameter of function $\widetilde \Phi(\cdot)$, which takes the same form as $L_\Phi$ in \Cref{upper_srsr_withnoB} except that $L_x,f,x^*$ and $\Phi$ become $L_x+\frac{\epsilon}{R},\widetilde f,\widetilde x^*$ and $\widetilde \Phi$, respectively. Let  $\Delta^*_{\text{\normalfont\tiny CSC}} = \|\nabla_y f( x^*,y^*(x^*))\|+\frac{\|x^*\|}{\mu_y}+\frac{(\|x^*\|+1)\sqrt{(\Phi(0)-\Phi(x^*))}}{\sqrt{\epsilon}\mu_y}$.  We consider two widely-used convergence criterions as follows. 
\begin{list}{$\bullet$}{\topsep=0.1in \leftmargin=0.16in \rightmargin=0.in \itemsep =0.01in}
\item {\bf (Suboptimality gap)} Choose $R=B^2$ in \cref{regularized_fxy}, and choose the same parameters as in \Cref{upper_srsr_withnoB} with $\epsilon$ and $\mu_x$ being replaced by $\epsilon/2$ and $\frac{\epsilon}{R}$, respectively.
To achieve $\Phi(z_K) - \Phi(x^*)\leq \epsilon$, the required complexity is at most
 \begin{align*}
\mathcal{C}_{\text{\normalfont fun}}(\mathcal{A},\epsilon) \leq  \mathcal{O}\Big( B\Big( \sqrt{\frac{\widetilde L_y}{\epsilon\mu_y^3}}+\Big(\sqrt{\frac{\rho_{yy}\widetilde L_y}{\epsilon\mu_y^{4}}} +  \sqrt{\frac{\rho_{xy}\widetilde L_y}{\epsilon\mu_y^3}}\Big)\sqrt{\Delta^*_{\text{\normalfont\tiny CSC}}}\Big)\log\, {\text{\normalfont poly}(\epsilon,\mu_x,\mu_y,\Delta^*_{\text{\normalfont\tiny CSC}})}\Big).
\end{align*}
\item {\bf (Gradient norm)} Choose $R=B$ in \cref{regularized_fxy}, and choose the same parameters as in \Cref{upper_srsr_withnoB} with 
 $\epsilon$ and $\mu_x$ being replaced by $\epsilon^2/(4L_{\widetilde \Phi}+ \frac{8\epsilon}{R})$ and $\frac{\epsilon}{R}$, respectively. 
 To achieve $\|\nabla \Phi (z_k)\|\leq 5\epsilon$, the required complexity is at most 
  \begin{align*}
\mathcal{C}_{\text{\normalfont grad}}(\mathcal{A},\epsilon) \leq \mathcal{O}\Big( \Big( \sqrt{\frac{B\widetilde L_y}{\epsilon\mu_y^3}}+\Big(\sqrt{\frac{B\rho_{yy}\widetilde L_y}{\epsilon\mu_y^{4}}} +  \sqrt{\frac{B\rho_{xy}\widetilde L_y}{\epsilon\mu_y^3}}\Big)\sqrt{\Delta^*_{\text{\normalfont\tiny CSC}}}\Big)\log\, {\small \text{\normalfont poly}(\epsilon,\mu_x,\mu_y,\Delta^*_{\text{\normalfont\tiny CSC}})}\Big).
\end{align*}
\end{list}
\end{theorem}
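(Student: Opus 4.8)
The plan is to reduce the convex-strongly-convex problem to the strongly-convex-strongly-convex case already settled by \Cref{upper_srsr_withnoB}, using the quadratic regularization in \cref{regularized_fxy}. First I would observe that since $\Phi(\cdot)$ is convex and $\frac{\epsilon}{2R}\|x\|^2$ is $\frac{\epsilon}{R}$-strongly-convex, the regularized objective $\widetilde\Phi(x)=\Phi(x)+\frac{\epsilon}{2R}\|x\|^2$ is $\frac{\epsilon}{R}$-strongly-convex while $g(x,\cdot)$ stays $\mu_y$-strongly-convex; hence $(\widetilde f,g)\in\mathcal F_{scsc}$ with $\mu_x=\frac{\epsilon}{R}$. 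Because $\widetilde f$ differs from $f$ only by a quadratic with Hessian $\frac{\epsilon}{R}I$, the Lipschitz constant of $\nabla_x\widetilde f$ becomes $L_x+\frac{\epsilon}{R}$ while all other constants in Assumptions~\ref{fg:smooth} and~\ref{g:hessiansJaco} are unchanged, which is exactly the stated form of $L_{\widetilde\Phi}$. A short optimality argument---combining $\widetilde\Phi(\widetilde x^*)\le\widetilde\Phi(x^*)$ with $\Phi(\widetilde x^*)\ge\Phi(x^*)$---then yields the a priori bound $\|\widetilde x^*\|\le\|x^*\|=B$, which I will reuse.

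For the suboptimality-gap criterion, I would run AccBiO on $\widetilde\Phi$ with $\mu_x=\frac{\epsilon}{R}$ and target accuracy $\epsilon/2$, so \Cref{upper_srsr_withnoB} gives $\widetilde\Phi(z_K)-\widetilde\Phi(\widetilde x^*)\le\epsilon/2$. Since the added term is nonnegative, $\Phi(z_K)\le\widetilde\Phi(z_K)$, and since $\widetilde\Phi(\widetilde x^*)\le\widetilde\Phi(x^*)=\Phi(x^*)+\frac{\epsilon}{2R}\|x^*\|^2$, choosing $R=B^2$ makes the regularization bias exactly $\frac{\epsilon}{2}$; adding the two $\frac{\epsilon}{2}$ pieces gives $\Phi(z_K)-\Phi(x^*)\le\epsilon$. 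For the complexity I would substitute $\mu_x=\frac{\epsilon}{B^2}$ into the bound of \Cref{upper_srsr_withnoB}: the factor $\frac{1}{\sqrt{\mu_x}}$ becomes $\frac{B}{\sqrt\epsilon}$, which produces the overall prefactor $B$ and converts the last term of $\Delta^*_{\text{\normalfont\tiny SCSC}}$ into $\frac{\|x^*\|\sqrt{\Phi(0)-\Phi(x^*)}}{\sqrt\epsilon\mu_y}$, matching $\Delta^*_{\text{\normalfont\tiny CSC}}$ up to the harmless $\|x^*\|$ versus $\|x^*\|+1$ discrepancy.

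For the gradient-norm criterion I would instead set $R=B$ and the smaller target $\epsilon'=\epsilon^2/(4L_{\widetilde\Phi}+\tfrac{8\epsilon}{R})$, then decompose $\nabla\Phi(z_K)=\nabla\widetilde\Phi(z_K)-\frac{\epsilon}{R}z_K$, giving $\|\nabla\Phi(z_K)\|\le\|\nabla\widetilde\Phi(z_K)\|+\frac{\epsilon}{R}\|z_K\|$. The first term is controlled by smoothness, $\|\nabla\widetilde\Phi(z_K)\|^2\le 2L_{\widetilde\Phi}\big(\widetilde\Phi(z_K)-\widetilde\Phi(\widetilde x^*)\big)\le 2L_{\widetilde\Phi}\epsilon'\le\frac{\epsilon^2}{2}$; the second is controlled by $\frac{\epsilon}{R}$-strong convexity, $\|z_K-\widetilde x^*\|^2\le\frac{2R\epsilon'}{\epsilon}\le\frac{R^2}{4}$, which with $\|\widetilde x^*\|\le B$ gives $\|z_K\|\le\frac{3B}{2}$ and hence $\frac{\epsilon}{R}\|z_K\|\le\frac{3\epsilon}{2}$. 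Summing the two pieces yields $\|\nabla\Phi(z_K)\|\le 5\epsilon$, and substituting $\mu_x=\frac{\epsilon}{B}$ into \Cref{upper_srsr_withnoB} produces the claimed complexity with the $\sqrt B$ dependence.

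The hard part will be the gradient-norm case: the target $\epsilon'$ must be chosen so that the optimization error $\|\nabla\widetilde\Phi(z_K)\|$ and the regularization bias $\frac{\epsilon}{R}\|z_K\|$ are \emph{simultaneously} $O(\epsilon)$, and the bound on $\|z_K\|$ is itself self-referential---it relies on $\|z_K-\widetilde x^*\|$ being small, which in turn needs $\epsilon'$ small relative to both $L_{\widetilde\Phi}$ and $\frac{\epsilon}{R}$. Since $L_{\widetilde\Phi}$ depends (through $\Delta^*_{\text{\normalfont\tiny CSC}}$) on $\epsilon$, $\mu_y$ and $\|x^*\|$, I would finally verify that feeding this $\epsilon'$ back into the logarithmic factors of \Cref{upper_srsr_withnoB} only contributes the $\text{poly}(\cdot)$ factor recorded in the statement and leaves the leading polynomial order intact.
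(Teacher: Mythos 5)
Your proposal is correct and follows essentially the same route as the paper: regularize $f$ by $\frac{\epsilon}{2R}\|x\|^2$, apply \Cref{upper_srsr_withnoB} to the resulting strongly-convex-strongly-convex problem with $\mu_x=\frac{\epsilon}{R}$, and then convert the guarantee on $\widetilde\Phi$ back to $\Phi$ using $\Phi(z_K)\le\widetilde\Phi(z_K)$, $\widetilde\Phi(\widetilde x^*)\le\widetilde\Phi(x^*)$, and a strong-convexity bound on $\|z_K-\widetilde x^*\|$. Your only deviations are cosmetic and in fact slightly sharper: you get $\|\widetilde x^*\|\le\|x^*\|$ (the paper settles for $\le 2\|x^*\|$), and in the gradient-norm case you use a triangle-inequality decomposition of $\nabla\Phi(z_K)=\nabla\widetilde\Phi(z_K)-\frac{\epsilon}{R}z_K$ where the paper manipulates squared norms, with the same choice of surrogate accuracy $\epsilon^2/(4L_{\widetilde\Phi}+8\epsilon/R)$ and the same final complexity.
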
  
As far as we know, \Cref{th:upper_csc1sc} is the first convergence result for convex-strongly-convex bilevel optimization without the bounded gradient assumption. Then, similarly to \Cref{coro:quadaticSr}, we also study the quadratic $g(x,y)$ case where the inner-level function$g(x,y)$ takes the quadratic form as given in \cref{quadratic_case}.  
\begin{corollary}[\bf Quadratic $g$ subclass]\label{coro:quadaticConv}
Under the same setting of \Cref{th:upper_csc1sc}, consider the quadratic $g(x,y)$ 
where  $\nabla_x\nabla_y g(\cdot,\cdot)$ and $\nabla_y^2 g(\cdot,\cdot)$ are constant. Then, we have
\begin{list}{$\bullet$}{\topsep=0.1in \leftmargin=0.17in \rightmargin=0.in \itemsep =0.01in}
\item {\bf (Suboptimality  gap)} To achieve $\Phi(z_K) - \Phi(x^*)\leq \epsilon$, we have $\mathcal{C}_{\text{\normalfont fun}}(\mathcal{A},\epsilon) \leq  \mathcal{\widetilde O}\Big(B \sqrt{\frac{\widetilde L_y}{\epsilon\mu_y^3}}\Big)$.
\item {\bf (Gradient norm)} To achieve $\|\nabla \Phi (z_k)\|\leq \epsilon$, we have $\mathcal{C}_{\text{\normalfont grad}}(\mathcal{A},\epsilon) \leq \mathcal{\widetilde O}\Big(\sqrt{\frac{B\widetilde L_y}{\epsilon\mu_y^3}}\Big).$
\end{list}
\end{corollary}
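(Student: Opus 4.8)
The plan is to obtain \Cref{coro:quadaticConv} as a direct specialization of \Cref{th:upper_csc1sc}, exploiting the single structural fact that a quadratic inner-level function annihilates the second-order Lipschitz constants. First I would observe that for $g$ of the form \eqref{quadratic_case}, the Jacobian $\nabla_x\nabla_y g(x,y)=J$ and the Hessian $\nabla_y^2 g(x,y)=H$ are constant matrices, independent of $(x,y)$, so the Lipschitz parameters of \Cref{g:hessiansJaco} satisfy $\rho_{xy}=\rho_{yy}=0$. I would also note that the regularization \eqref{regularized_fxy} used to pass from the convex to the strongly-convex problem perturbs only $f$ by a term in $x$ alone, leaving $g$ (and hence $H$, $J$, and $\rho_{xy}=\rho_{yy}=0$) untouched; the reduction underlying \Cref{th:upper_csc1sc} therefore applies verbatim to the quadratic subclass.

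Next I would substitute $\rho_{xy}=\rho_{yy}=0$ into the two complexity bounds of \Cref{th:upper_csc1sc}. In the suboptimality-gap bound, the bracketed factor $\sqrt{\frac{\widetilde{L}_y}{\epsilon\mu_y^3}}+\big(\sqrt{\frac{\rho_{yy}\widetilde{L}_y}{\epsilon\mu_y^4}}+\sqrt{\frac{\rho_{xy}\widetilde{L}_y}{\epsilon\mu_y^3}}\big)\sqrt{\Delta^*_{\text{CSC}}}$ collapses to $\sqrt{\frac{\widetilde{L}_y}{\epsilon\mu_y^3}}$, so the entire $\Delta^*_{\text{CSC}}$-dependence drops out, leaving $\mathcal{C}_{\text{\normalfont fun}}(\mathcal{A},\epsilon)\leq\widetilde{\mathcal{O}}\big(B\sqrt{\widetilde{L}_y/(\epsilon\mu_y^3)}\big)$. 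The same substitution in the gradient-norm bound yields $\mathcal{C}_{\text{\normalfont grad}}(\mathcal{A},\epsilon)\leq\widetilde{\mathcal{O}}\big(\sqrt{B\widetilde{L}_y/(\epsilon\mu_y^3)}\big)$. The placement of $B$ inside versus outside the square root is inherited unchanged from the distinct parameter choices ($R=B^2$ versus $R=B$, with the corresponding substitutions for $\epsilon$ and $\mu_x$) already fixed in \Cref{th:upper_csc1sc}.

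For completeness I would trace the two places inside the proof of \Cref{th:upper_csc1sc} where $\rho_{xy},\rho_{yy}$ enter, to confirm the collapse is genuine rather than cosmetic; these are exactly the two sources identified for the SCSC quadratic case in \Cref{coro:quadaticSr}. The first is the smoothness parameter $L_{\widetilde{\Phi}}$, which has the form $\Theta\big(\frac{1}{\mu_y^2}+(\frac{\rho_{yy}}{\mu_y^3}+\frac{\rho_{xy}}{\mu_y^2})(\cdots)\big)$ and therefore reduces to $\Theta(1/\mu_y^2)$; since $\sqrt{\kappa_x}=\sqrt{L_{\widetilde{\Phi}}/\mu_x}$ drives the outer iteration count, this reduction is what produces the clean $\sqrt{\widetilde{L}_y/(\epsilon\mu_y^3)}$ scaling. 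The second is the hypergradient estimation error $\|G_k-\nabla\widetilde{\Phi}(x_k)\|$, whose $\rho$-weighted contributions arise precisely from the deviations $\nabla_x\nabla_y g(x_k,y_k^N)-\nabla_x\nabla_y g(x_k,y^*(x_k))$ and $\nabla_y^2 g(x_k,y_k^N)-\nabla_y^2 g(x_k,y^*(x_k))$, both of which vanish identically when $H,J$ are constant.

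The main (and only) subtlety I anticipate is bookkeeping rather than analytic: one must verify that $\Delta^*_{\text{CSC}}$ does not re-enter the bound through a channel other than the explicit $\rho$-weighted terms, in particular through the logarithmic $\text{\normalfont poly}(\epsilon,\mu_x,\mu_y,\Delta^*_{\text{CSC}})$ factor. Since that factor sits inside a $\log$ and depends on $\Delta^*_{\text{CSC}}$ only polynomially, it contributes no polynomial factor and is absorbed into $\widetilde{\mathcal{O}}(\cdot)$, so the stated bounds stand. Once $\rho_{xy}=\rho_{yy}=0$ is in place, no genuine obstacle remains.
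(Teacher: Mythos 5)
Your proposal is correct and follows essentially the same route as the paper: the paper's proof of \Cref{coro:quadaticConv} simply notes that constant $\nabla_x\nabla_y g$ and $\nabla_y^2 g$ force $\rho_{xy}=\rho_{yy}=0$ and then substitutes these values into the bounds of \Cref{th:upper_csc1sc}. Your additional bookkeeping (tracing the two entry points of the $\rho$-parameters, checking the regularization leaves $g$ untouched, and confirming $\Delta^*_{\text{\normalfont\tiny CSC}}$ survives only inside the logarithm) is a more careful rendering of the same one-line specialization.
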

It can be seen from \Cref{coro:quadaticConv} that for the quadratic $g$ subclass, AccBiO achieves a computational complexity of $ \mathcal{\widetilde O}\Big(\sqrt{\frac{B\widetilde L_y}{\epsilon\mu_y^3}}\Big)$ in term of the gradient norm. For the case where $\widetilde L_y\leq\mathcal{O}(\mu_y)$, the complexity becomes $ \mathcal{\widetilde O}\Big(\sqrt{\frac{B}{\epsilon\mu_y^2}}\Big)$, which matches the lower bound in \Cref{co:co1} up to logarithmic factors.

\subsection{Optimality of Bilevel Optimization and Discussion}
We compare the lower and upper bounds and make the following remarks on the optimality of bilevel optimization and its comparison to minimax optimization. 

\vspace{0.3cm}
\noindent {\bf Optimality of results for quadratic $g$ subclass.} We compare the developed lower and upper bounds and make a few remarks on the optimality of the proposed AccBiO algorithms. Let us first focus on the quadratic $g$ subclass where $g(x,y)$ takes the quadratic form as in \cref{quadratic_case}. For the strongly-convex-strongly-convex setting, comparison of \Cref{thm:low1} and \Cref{coro:quadaticSr} implies that AccBiO achieves the optimal complexity for $\widetilde L_y\leq \mathcal{O}(\mu_y)$, i.e., the inner-level problem is easy to solve.  For the general case, there is still a gap of $\frac{1}{\sqrt{\mu_y}}$ between lower and upper bounds.  For the convex-strongly-convex setting, comparison of \Cref{main:convex} and \Cref{coro:quadaticConv} shows that AccBiO is optimal for $\widetilde L_y\leq \mathcal{O}(\mu_y)$, and there is a gap for the general case. Such a gap is mainly due to the large smoothness parameter $L_\Phi$ of $\Phi(\cdot)$. We note that a similar issue also occurs for minimax optimization, which has been addressed by \cite{lin2020near} using an accelerated proximal point method for the inner-level problem and exploiting Sion's minimax theorem $\min_x\max_y f(x,y) =\max_y\min_x f(x,y)$. However, such an approach is not applicable for bilevel optimization due to the asymmetry of $x$ and $y$, e.g., $\min_xf(x,y^*(x)) \neq \min_y g(x^*(y),y)$. This gap between lower and upper bounds deserves future efforts.

\vspace{0.3cm}
\noindent {\bf Optimality of results for general $g$.} We now discuss the optimality of our results for a more general $g$ whose second-order derivatives are Lipschitz continuous. For the strongly-convex-strongly-convex setting, it can be seen from the comparison of \Cref{thm:low1} and \Cref{upper_srsr_withnoB} that  there is a gap between the lower and upper bounds. This gap is because the lower bounds construct the bilinearly coupled worst-case $g(x,y)$ whose Hessians and Jacobians are constant, rather than generally $\rho_{yy}$- and $\rho_{xy}$-Lipschitz continuous as considered in the upper bounds. Hence, tighter lower bounds need to be provided for this setting, which requires more sophisticated  worst-case instances with Lipschitz continuous Hessians $\nabla_y^2 g(x,y)$ and Jacobians $\nabla_x\nabla_yg(x,y)$.  For example, it is possible to construct $g(x,y)$ as $g(x,y)=\sigma(y)y^T Zy - x^TZy +b^Ty$, where $\sigma(\cdot):\mathbb{R}^d\rightarrow \mathbb{R}$ satisfies a certain Lipschitz property. For example, if $\sigma$ is Lipchitz continuous, simple calculation shows that $L_\Phi$ scales at an order of $\kappa_y^3$. However, it still requires significant efforts to determine the form of $\sigma$ such that the optimal point of $\Phi(\cdot)$  and the subspaces $\mathcal{H}_x,\mathcal{H}_y$ are easy to characterize and satisfy the properties outlined in the proof of \Cref{upper_srsr_withnoB}.  

\vspace{0.3cm}
\noindent {\bf Comparison to minimax optimization.} We compare the optimality between minimax optimization and bilevel optimization. For the strongly-convex-strongly-convex minimax optimization, \cite{zhang2019lower} developed a lower bound of  {\small$\widetilde{\Omega}(\frac{1}{\sqrt{\mu_x\mu_y}})$} for minimax optimization, which is achieved by the accelerated proximal point method proposed by \cite{lin2020near} up to logarithmic factors. 
For the same type of bilevel optimization,
we provide a lower bound of {\small$\widetilde{\Omega}\big(\sqrt{\frac{1}{\mu_x\mu_y^2}}\big)$} in~\Cref{thm:low1}, which is larger than that of minimax optimization by a factor of $\frac{1}{\sqrt{\mu_y}}$.  Similarly for the convex-strongly-convex bilevel optimization, we provide a lower bound of {\small $\widetilde \Omega\big(\frac{1}{\sqrt{\epsilon}}\min\{\frac{1}{\mu_y},\frac{1}{\epsilon^{1.5}}\}\big)$}, which is larger than the optimal complexity of {\small$\widetilde{\Omega}(\frac{1}{\sqrt{\epsilon\mu_y}})$} for the same type of minimax optimization~\citep{lin2020near} in a large regime of $\mu_y\geq\Omega(\epsilon^3)$. 
This establishes that bilevel optimization is fundamentally more challenging than minimax optimization. This is because bilevel optimization needs to handle the different structures of the outer- and inner-level functions $f$ and $g$ (e.g., second-order derivatives in the hypergradient), whereas for minimax optimization, the fact of $f=g$ simplifies the problem (e.g., no second-order derivatives) and allows more efficient algorithm designs.



\section{Upper Bounds with Gradient Boundedness Assumption} \label{sec:upper_gBscss}

Our study in \Cref{upper_withoutB} does not make the bounded gradient assumption, which has been commonly taken in the existing studies~\citep{ghadimi2018approximation,ji2021bilevel,hong2020two,ji2020convergence}. 
In this section, we establish tighter upper bounds than those in existing works~\citep{ghadimi2018approximation,ji2021bilevel} under such an additional assumption.

\begin{assum}[{\bf Bounded gradient}]\label{bounded_f_assump}
There exists a constant $U$ such that for any $(x^\prime,y^\prime)\in\mathbb{R}^p\times \mathbb{R}^q$, $\|\nabla_y f(x^\prime,y^\prime)\|\leq U$.
\end{assum}

\subsection{Accelerated Bilevel Optimization Algorithm: AccBiO-BG}
We propose an accelerated algorithm named AccBiO-BG in \Cref{alg:bio} for bilevel optimization under the additional bounded gradient assumption. Similarly to  AccBiO, AccBiO-BG first runs $N$ steps of accelerated gradient descent (AGD) at each outer iteration. Note that AccBiO-BG here adopts a warm start strategy with $y_k^0=y_{k-1}^N$ so that our analysis does not require the boundedness of $y^*(x_k),k=0...,K$ and reduces the total computational complexity. 
Then, 
AccBiO-BG constructs the hypergradient estimate $G_k:= \nabla_x f(\widetilde x_k,y_k^N) -\nabla_x \nabla_y g(\widetilde x_k,y_k^N)v_k^M$ following the same steps as in AccBiO. 
Finally, we update variables $x_{k},z_{k}$ via two  accelerated gradient steps, where we incorporate a variant~\citep{ghadimi2016accelerated} of Nesterov's momentum. We use this variant instead of vanilla Nesterov's momentum~\citep{nesterov2018lectures} in \Cref{alg:bioNoBG}, because the resulting analysis is easier to handle the warm start strategy, which backpropagates the tracking error $\|y^N_k-y^*(x_k)\|$ to previous loops. 



\begin{algorithm}[t]
	\caption{Accelerated Bilevel Optimization Method under Bounded Gradient Assumption (AccBiO-BG) } 
	\small
	\label{alg:bio}
	\begin{algorithmic}[1]
		\STATE {\bfseries Input:}  Initialization $ z_0=x_0=y_0=0$, parameters  $\eta_k,\tau_k.\alpha_k,\beta_k,\lambda$ and $\theta$ 
		\FOR{$k=0,...,K$}
		\STATE{Set $\widetilde x_k = \eta_k x_k + (1-\eta_k)z_k$}
		\STATE{Set $y_k^0 = y_{k-1}^{N} \mbox{ if }\; k> 0$ and $y_0$ otherwise (warm start)}
		\FOR{$t=1,....,N$}
		\STATE{
		
		\vspace{-0.45cm} 
		\begin{align*}
		\mbox{(AGD:)} \quad y_k^{t} &= s_k^{t-1} - \frac{1}{\widetilde L_y} \nabla_y g(\widetilde x_k,s_k^{t-1}), \quad s_k^{t} = \frac{2\sqrt{\kappa_y}}{\sqrt{\kappa_y}+1} y_k^{t} - \frac{\sqrt{\kappa_y}-1}{\sqrt{\kappa_y}+1} y_k^{t-1}.
		\end{align*}
		\vspace{-0.7cm} }
		\ENDFOR
                 \STATE{ \textit {Hypergradient computation}: \\
               \hspace{0.4cm}1) Get $v_k^M$ after running $M$ steps of heavy-ball method $v_k^{t+1} = v_k^t-\lambda\nabla Q(v_k^t) +\theta(v_k^t-v_k^{t-1})$ \\ \hspace{0.7cm} with initialization $v_k^{0}=v_k^{1}=0$ over \vspace{-0.3cm} 
               \begin{align*}
             \mbox{(Quadratic programming:)}\;\;  \min_v Q(v):=\frac{1}{2}v^T\nabla_y^2 g(\widetilde x_k,y_k^N) v - v^T
\nabla_y f(\widetilde x_k,y^N_k);
\end{align*}\vspace{-0.4cm}\\
    \hspace{0.4cm}2) Compute Jacobian-vector product $\nabla_x \nabla_y g(\widetilde x_k,y_k^N)v_k^M $ via automatic differentiation; \\
    \vspace{0.1cm}
    \hspace{0.4cm}3) compute {\bf hypergradient estimate} $G_k:= \nabla_x f(\widetilde x_k,y_k^N) -\nabla_x \nabla_y g(\widetilde x_k,y_k^N)v_k^M.$ 
              }
               \vspace{0.1cm}
               \STATE{Update $x_{k+1}=\tau_k \widetilde x_{k}+(1-\tau_k)x_k-\beta_k G_k$}
               \vspace{0.1cm}
               \STATE{Update $z_{k+1}=\widetilde x_{k}-\alpha_k G_k$}\vspace{0.1cm}
                              
		\ENDFOR
	\end{algorithmic}
	\end{algorithm}

\vspace{-0.1cm}	
\subsection{Strongly-Convex-Strongly-Convex Bilevel Optimization}	
The following theorem provides a theoretical performance guarantee for AccBiO-BG.
\begin{theorem}\label{upper_srsr} 
Suppose that $(f,g)$ belong to the strongly-convex-strongly-convex class $\mathcal{F}_{scsc}$ in \Cref{de:pc} and further suppose Assumption~\ref{bounded_f_assump} is satisfied. 
Choose $\alpha_k=\alpha\leq \frac{1}{2L_\Phi}$, $\eta_k=\frac{\sqrt{\alpha\mu_x}}{\sqrt{\alpha\mu_x}+2}$, $\tau_k=\frac{\sqrt{\alpha\mu_x}}{2}$ and $\beta_k=\sqrt{\frac{\alpha}{\mu_x}}$, where $L_\Phi$ is the smoothness parameter of $\Phi(x)$. Choose stepsizes {\small $\lambda=\frac{4}{(\sqrt{\widetilde L_y}+\sqrt{\mu_y})^2}$} and {\small $\theta=\max\big\{\big(1-\sqrt{\lambda\mu_y}\big)^2,\big(1-\sqrt{\lambda\widetilde L_y}\big)^2\big\}$} for the heavy-ball method.  
Then, to achieve $\Phi(z_K) - \Phi(x^*) \leq \epsilon$, the required complexity $\mathcal{C}_{\text{\normalfont fun}}(\mathcal{A},\epsilon)$ is at most 
\begin{align*}
\mathcal{C}_{\text{\normalfont fun}}(\mathcal{A},\epsilon)\leq  \mathcal{O}\Big(\sqrt{\frac{\widetilde L_y}{\mu_x\mu_y^4}}\log\frac{\mbox{\normalfont\small poly}(\mu_x,\mu_y,U,\Phi(x_{0})-\Phi(x^*))}{\epsilon}  \log \frac{\mbox{\normalfont \small poly}(\mu_x,\mu_y,U)}{\epsilon} \Big).
\end{align*}
\end{theorem}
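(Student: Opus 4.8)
The plan is to bound the total complexity by the product of the number $K$ of outer accelerated iterations and the per-iteration inner cost $N+M$, after establishing that each of these scales like the square root of the relevant condition number, and that the accumulated hypergradient error stays controlled despite the warm start.

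First I would control the smoothness parameter $L_\Phi$ of $\Phi(\cdot)$. Under \Cref{bounded_f_assump}, the quantity $\|\nabla_y f(\widetilde x_k,y^*(\widetilde x_k))\|$ entering the hypergradient \cref{hyperG} is uniformly bounded by $U$, so differentiating \cref{hyperG} and invoking the Lipschitz bounds in Assumptions~\ref{fg:smooth} and~\ref{g:hessiansJaco} yields a finite $L_\Phi=\mathcal{O}(\mathrm{poly}(\kappa_y)\,U/\mu_y)=\mathcal{O}(1/\mu_y^{3})$ when $L_x,\widetilde L_{xy},\rho_{yy},\rho_{xy},U$ are treated as order one. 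Hence $\sqrt{\kappa_x}=\sqrt{L_\Phi/\mu_x}=\mathcal{O}(1/(\sqrt{\mu_x}\mu_y^{3/2}))$. This boundedness is exactly the step that fails in the absence of bounded gradients, and is what separates this theorem from \Cref{upper_srsr_withnoB}, where $L_\Phi$ instead carries the data-dependent factor $\Delta^*_{\text{\tiny SCSC}}$.

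Next I would bound the hypergradient estimation error $\|G_k-\nabla\Phi(\widetilde x_k)\|$, decomposing it into an inner-loop term coming from $y_k^N\neq y^*(\widetilde x_k)$ and a heavy-ball term coming from $v_k^M\neq[\nabla_y^2 g(\widetilde x_k,y_k^N)]^{-1}\nabla_y f(\widetilde x_k,y_k^N)$. The AGD loop contracts the tracking error $e_k:=\|y_k^N-y^*(\widetilde x_k)\|$ at rate $(1-1/\sqrt{\kappa_y})$ per step, and with the chosen $\lambda,\theta$ the heavy-ball iteration on the quadratic $Q(v)$, whose Hessian has condition number $\kappa_y$, contracts at the same accelerated rate while $\|v_k^*\|\leq U/\mu_y$ by \Cref{bounded_f_assump}. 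The delicate point is the warm start $y_k^0=y_{k-1}^N$: one obtains $\|y_k^0-y^*(\widetilde x_k)\|\leq e_{k-1}+(\widetilde L_{xy}/\mu_y)\|\widetilde x_k-\widetilde x_{k-1}\|$, so the inner tracking error is coupled across outer iterations to the displacement $\|\widetilde x_k-\widetilde x_{k-1}\|$ rather than being reset to a bounded quantity at each loop.

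The heart of the argument is the outer convergence analysis with inexact hypergradients. I would use the accelerated scheme of~\citep{ghadimi2016accelerated} for $\mu_x$-strongly-convex objectives, which produces a one-step inequality of the form $\Phi(z_{k+1})-\Phi(x^*)\leq(1-\sqrt{\alpha\mu_x})(\Phi(z_k)-\Phi(x^*))+\mathcal{O}(\alpha)\|G_k-\nabla\Phi(\widetilde x_k)\|^2$ plus cross terms. The main obstacle, and precisely the reason this variant is preferred over vanilla Nesterov momentum, is that the warm-start recursion backpropagates each $e_k$ into the displacement terms of previous loops, so the errors cannot be bounded loop-by-loop in isolation. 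I would resolve this by constructing a composite Lyapunov function that augments the outer suboptimality gap with a weighted multiple of $e_k^2$, and showing it contracts at rate $(1-\sqrt{\alpha\mu_x})$ once $N$ is large enough for the per-step inner contraction to dominate the coupling coefficient; this forces $N,M=\Omega(\sqrt{\kappa_y}\log(\mathrm{poly}/\epsilon))$. Finally, choosing $K=\mathcal{O}(\sqrt{\kappa_x}\log(1/\epsilon))$ drives the contracted potential below $\epsilon$, and counting via \Cref{complexity_measyre}, where each outer step costs $N$ gradients and $M$ Hessian- and Jacobian-vector products, gives $\mathcal{C}_{\text{\normalfont fun}}(\mathcal{A},\epsilon)=\mathcal{O}(K(N+M))=\mathcal{O}(\sqrt{\kappa_x}\sqrt{\kappa_y}\,\log^2)=\mathcal{O}\big(\sqrt{\widetilde L_y/(\mu_x\mu_y^{4})}\,\log^2\big)$, matching the claimed bound.
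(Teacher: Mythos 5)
Your proposal is correct and follows essentially the same route as the paper's proof: bound $L_\Phi=\mathcal{O}(1/\mu_y^3)$ via \Cref{bounded_f_assump}, decompose the hypergradient error into an AGD tracking term and a heavy-ball term (each contracting at the accelerated $\sqrt{\kappa_y}$ rate with $\|v\|\leq U/\mu_y$), control the warm-start coupling $\tau_N\kappa_y\|\widetilde x_k-\widetilde x_{k-1}\|^2$ by making $\tau_N$ small enough (the paper's condition \cref{eq:nrequire}) so it is absorbed by the negative displacement terms retained in the Ghadimi--Lan one-step inequality \cref{cocoinasca}, and then count $K(M+N)=\widetilde{\mathcal{O}}\big(\sqrt{L_\Phi/\mu_x}\cdot\sqrt{\kappa_y}\big)$. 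The only difference is bookkeeping: you fold the coupled inner-error recursion into a composite Lyapunov function $\Phi(z_k)-\Phi(x^*)+c\,e_k^2$ contracting at rate $1-\Theta(\sqrt{\alpha\mu_x})$, whereas the paper telescopes the outer inequality and the inner recursion separately and re-indexes the resulting double sums; the two are equivalent, and your $\mathcal{O}(\alpha)$ error coefficient (versus the paper's $\mathcal{O}(1/\mu_x)$ in \cref{eq:gkdistance}) affects only logarithmic factors since the error is driven exponentially small in $M,N$.
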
	
The proof of \Cref{upper_srsr} is provided in \Cref{proof:upss_wb}.  \Cref{upper_srsr} shows that  the upper bound achieved by our proposed AccBiO-BG algorithm is {\small$\mathcal{\widetilde O}(\sqrt{\frac{1}{\mu_x\mu_y^4}})$}. This bound improves the best known $\mathcal{\widetilde O}\big(\max\big\{\frac{1}{\mu_x\mu_y^3},\frac{\widetilde L^2_y}{\mu_y^2}\big\}\big)$ (see eq. (2.60) therein)  achieved by the accelerated bilevel approximation algorithm (ABA) in  \cite{ghadimi2018approximation} by a factor of {\small$\mathcal{O}(\mu_x^{-1/2}\mu_y^{-1})$}. 



\subsection{Convex-Strongly-Convex Bilevel Optimization}	
Similarly to \Cref{th:upper_csc1sc}, we consider a strongly-convex-strongly-convex function $\widetilde \Phi(\cdot)=\widetilde f(x,y^*(x))$  with $\widetilde f(x,y) = f(x,y) +\frac{\epsilon}{2B^2} \|x\|^2$, where $B=\|x^*\|$ as defined in~\Cref{de:pc}. 
Then, we have the following theorem. 
\begin{theorem}\label{convex_upper_BG}
Suppose that $(f,g)$ belong to the convex-strongly-convex class $\mathcal{F}_{csc}$ in \Cref{de:pc} and further suppose Assumption~\ref{bounded_f_assump} is satisfied. Let $L_{\widetilde\Phi}$ be the smoothness parameter of $\widetilde\Phi(\cdot)$, which takes the same form as $L_\Phi$ in \Cref{upper_srsr} but with $L_x$ being  replaced by  $L_x+ \frac{\epsilon}{B^2}$. 
Choose the same parameter as in \Cref{upper_srsr} with $\alpha=\frac{1}{2L_{\widetilde \Phi}}$ and $\mu_x=\frac{\epsilon}{B^2}$. Then, to achieve $\Phi(z_K) - \Phi(x^*) \leq \epsilon$, the required complexity $\mathcal{C}_{\text{\normalfont fun}}(\mathcal{A},\epsilon)$ is at most 
\begin{align}
\mathcal{C}_{\text{\normalfont fun}}(\mathcal{A},\epsilon)\leq \mathcal{O}\Big(B\sqrt{\frac{\widetilde L_y}{\epsilon\mu_y^4}}\log\frac{\mbox{\small \normalfont poly}(\epsilon,\mu_y,B,U,\Phi(x_{0})-\Phi(x^*))}{\epsilon}  \log \frac{\mbox{\small \normalfont poly}(B,\epsilon,\mu_y,U)}{\epsilon} \Big).
\end{align}
\end{theorem}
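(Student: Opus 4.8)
The plan is to derive \Cref{convex_upper_BG} as a regularization reduction from the strongly-convex-strongly-convex guarantee in \Cref{upper_srsr}. First I would verify that the regularized pair $(\widetilde f,g)$ with $\widetilde f(x,y)=f(x,y)+\frac{\epsilon}{2B^2}\|x\|^2$ lands in the class $\mathcal{F}_{scsc}$ and meets every hypothesis of \Cref{upper_srsr}. Since the regularizer depends only on $x$, we have $\nabla_y\widetilde f=\nabla_y f$, so Assumption~\ref{bounded_f_assump} transfers unchanged with the same constant $U$; the inner function $g(x,\cdot)$ is untouched and stays $\mu_y$-strongly-convex; and $\widetilde\Phi(x)=\Phi(x)+\frac{\epsilon}{2B^2}\|x\|^2$ is $\frac{\epsilon}{B^2}$-strongly-convex because $\Phi$ is convex. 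The smoothness parameter $L_{\widetilde\Phi}$ inherits the form of $L_\Phi$ with $L_x$ replaced by $L_x+\frac{\epsilon}{B^2}$, since adding $\frac{\epsilon}{2B^2}\|x\|^2$ to $f$ raises only the $x$-smoothness of $f$ by $\frac{\epsilon}{B^2}$ and leaves all other curvature constants ($L_{xy},L_y,\widetilde L_y,\rho_{xy},\rho_{yy},\mu_y,U$) entering the hypergradient Lipschitz bound intact.

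Next I would convert target accuracy on $\Phi$ into target accuracy on $\widetilde\Phi$. Writing $\widetilde x^*=\argmin_x\widetilde\Phi(x)$ and using $\|x^*\|=B$, a direct computation gives $\Phi(x^*)=\widetilde\Phi(x^*)-\frac{\epsilon}{2}$, so that
\begin{align*}
\Phi(z_K)-\Phi(x^*)=\widetilde\Phi(z_K)-\frac{\epsilon}{2B^2}\|z_K\|^2-\widetilde\Phi(x^*)+\frac{\epsilon}{2}\leq \widetilde\Phi(z_K)-\widetilde\Phi(\widetilde x^*)+\frac{\epsilon}{2},
\end{align*}
where I dropped the nonpositive term $-\frac{\epsilon}{2B^2}\|z_K\|^2$ and used $\widetilde\Phi(x^*)\geq\widetilde\Phi(\widetilde x^*)$. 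Hence it suffices to drive $\widetilde\Phi(z_K)-\widetilde\Phi(\widetilde x^*)\leq\frac{\epsilon}{2}$ in order to guarantee $\Phi(z_K)-\Phi(x^*)\leq\epsilon$.

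Then I would invoke \Cref{upper_srsr} applied to $(\widetilde f,g)$ with $\alpha=\frac{1}{2L_{\widetilde\Phi}}$, $\mu_x=\frac{\epsilon}{B^2}$, and target accuracy $\frac{\epsilon}{2}$, which is exactly the parameter choice in the statement. This yields a complexity of order $\sqrt{\frac{\widetilde L_y}{\mu_x\mu_y^4}}$ up to the two logarithmic factors, and substituting $\mu_x=\frac{\epsilon}{B^2}$ turns $\sqrt{\frac{\widetilde L_y}{\mu_x\mu_y^4}}$ into $B\sqrt{\frac{\widetilde L_y}{\epsilon\mu_y^4}}$, the claimed leading rate. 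The remaining step is bookkeeping inside the logarithms: using $x_0=0$ so that $\widetilde\Phi(x_0)=\Phi(0)$ and $\widetilde\Phi(x_0)-\widetilde\Phi(\widetilde x^*)\leq\Phi(0)-\Phi(x^*)$ (as $\widetilde\Phi\geq\Phi$ pointwise), together with $\mu_x=\frac{\epsilon}{B^2}$, rewrites every argument as a polynomial in $(\epsilon,\mu_y,B,U,\Phi(x_0)-\Phi(x^*))$.

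The analytical substance is carried entirely by \Cref{upper_srsr}, so the genuine points to watch are (i) confirming that the regularization preserves both membership in $\mathcal{F}_{scsc}$ and the bounded-gradient constant, and (ii) ensuring that plugging the vanishing strong-convexity modulus $\mu_x=\frac{\epsilon}{B^2}$ into the strongly-convex bound introduces no hidden $\frac{1}{\mu_x}$ factors beyond the single $B/\sqrt{\epsilon}$ already displayed. I expect the latter, namely controlling $L_{\widetilde\Phi}$ uniformly as $\epsilon\to 0$, to be the main (albeit mild) obstacle; it is resolved by the observation that the added curvature $\frac{\epsilon}{B^2}$ is absorbed into the existing $L_x$ term, so $L_{\widetilde\Phi}=\Theta(L_\Phi)$ and does not blow up.
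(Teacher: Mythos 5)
Your proposal is correct and follows essentially the same route as the paper: regularize $f$ by $\frac{\epsilon}{2B^2}\|x\|^2$, verify that $(\widetilde f,g)\in\mathcal{F}_{scsc}$ with $\mu_x=\frac{\epsilon}{B^2}$ and the same bounded-gradient constant $U$ (since $\nabla_y\widetilde f=\nabla_y f$), convert the suboptimality via $\widetilde\Phi(z_K)\geq\Phi(z_K)$ and $\widetilde\Phi(\widetilde x^*)\leq\widetilde\Phi(x^*)=\Phi(x^*)+\frac{\epsilon}{2}$, and then apply \Cref{upper_srsr} to the regularized problem, yielding the $B\sqrt{\widetilde L_y/(\epsilon\mu_y^4)}$ rate after substituting $\mu_x=\frac{\epsilon}{B^2}$. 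The only cosmetic difference is that the paper unrolls the intermediate convergence inequality from the proof of \Cref{upper_srsr} (with explicit $\widetilde\Delta,\widetilde\lambda$ terms and an $\epsilon/4$--$\epsilon/2$ error split) rather than invoking the theorem statement as a black box, but the argument and bookkeeping are the same.
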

As shown in \Cref{convex_upper_BG}, our proposed AccBiO-BG algorithm achieves a complexity  of $\mathcal{\widetilde O}\big(\frac{1}{\epsilon^{0.5}\mu_y^2}\big)$, which improves the best known result $\mathcal{O}\big(\frac{1}{\epsilon^{0.75}\mu_y^{6.75}}\big)$ achieved by the ABA algorithm in \cite{ghadimi2018approximation} (see eq. (2.61) therein) by an order of $\mathcal{\widetilde O}\big(\frac{1}{\epsilon^{0.25}\mu_y^{4.75}}\big)$.

\section{Conclusion and Discussion} 
In this paper, we provide the first-known lower bounds and new upper bounds with relaxed assumptions and tighter characterizations for bilevel optimization under various function geometries. We here discuss the extensions and applications of our results as follows.  

\vspace{0.3cm}
\noindent{\bf Other loss geometries. }In this paper, we study two typical loss geometries, i.e., the strongly-convex-strongly-convex and convex-strongly-convex geometries. It will be interesting to investigate other types of loss landscapes. For example, when the total objective function $\Phi(x)$ involves neural networks and is generally nonconvex, new efforts are needed to address the boundedness of iterates $x_k$ as the algorithm runs, e.g., by adding a projection onto a bounded domain or a regularizer to force such a boundedness. Moreover, existing convergence rate analysis relies on the strong convexity of the inner problem to better capture the inner-level convergence behavior. It is interesting to extend to more general geometries that allows more than one unique solution, e.g., convexity or star-convexity, which, however, requires us to revise the hypergradient form in \cref{hyperG} or explore the convergence under other criterions  such as  stationarity based on the Moreau envelope~\citep{davis2019stochastic} due to the nonsmoothness of the inner-level solution $y^*(x)$ and the objective function $\Phi(x)$.

\vspace{0.3cm}
\noindent{\bf Applications of results. }We note that some of our analysis can be applied to other problem domains such as minimax optimization. For example, our lower-bounding technique for \Cref{main:convex} can be extended to {\bf convex-concave} or {\bf convex-strongly-concave minimax} optimization, where the objective function $f(x,y)$ satisfies the general smoothness property as in \cref{def:first} with the general smoothness parameters $L_x,L_{xy}, L_y\geq 0$. The resulting lower bound will be different from that in \cite{ouyang2019lower}, which considered a special case with $L_y=0$ and the convergence is measured in terms of the suboptimality gap $\mathcal{O}(\Phi(x)-\Phi(x^*))$ rather than the gradient norm $\|\nabla\Phi(x)\|$ considered in this paper. Thus, such an extension will serve as a new contribution to lower complexity bounds for minimax optimization.

\newpage
\appendix
\doparttoc 
\faketableofcontents 

\addcontentsline{toc}{section}{Appendix} 
\part{Appendix} 
\parttoc 
\allowdisplaybreaks
 
\section{AID-Based Bilevel Algorithms}\label{example:appen}

In this section, we present existing AID-based bilevel optimization algorithms, and show that they belong to the hypergradient-based algorithm class we consider in  \Cref{alg_class}.

\begin{example}[AID-based Bilevel Algorithms]\label{exam:aids}\citep{domke2012generic,pedregosa2016hyperparameter,grazzi2020iteration,ji2021bilevel} Such a class of algorithms use AID-based approaches for hypergradient computation, and take the following updates.

\vspace{0.2cm}
\noindent For each outer iteration $m=0,....,Q-1$,
\begin{list}{$\bullet$}{\topsep=0.3ex \leftmargin=0.08in \rightmargin=0.in \itemsep =0.01in}
\item  Update variable $y$ using gradient decent (GD) or accelerated gradient descent (AGD)
\begin{align} 
(\mbox{\normalfont GD:})\quad y_{m}^t &=  y_m^{t-1} - \eta \nabla_y g(x_m, y_m^{t-1}), t=1,...,N \nonumber
\\ (\mbox{\normalfont AGD:}) \quad y_m^{t} &= z_m^{t-1} - \eta \nabla_y g(x_m,z_m^{t-1}),  \nonumber
\\z_m^{t} &= \Big(1+\frac{\sqrt{\kappa_y}-1}{\sqrt{\kappa_y}+1}\Big) y_m^{t} - \frac{\sqrt{\kappa_y}-1}{\sqrt{\kappa_y}+1} y_m^{t-1}, t= 1,...,N 
\end{align}
where $\kappa_y= \widetilde L_y/\mu_y$ denotes the condition number of the inner-level function $g(x,\cdot)$. 
\item Update $x$ via $x_{m+1} = x_{m}-\beta G_m$, where $G_m$ is constructed via AID and takes the form of 
\begin{align}\label{aid_hgd}
G_m = \nabla_x f(x_m,y_m^N)- \nabla_x \nabla_y g(x_m,y_m^N)v_m^S, 
\end{align}
where vector $v_m^S$ is obtained by running $S$ steps of GD (with initialization $v_m^0=0$) or accelerated gradient methods (e.g., heavy-ball method with $v_m^0 = v_m^1=0$) to solve a quadratic programming
\begin{align}\label{exam:quadratic}
\min_{v} Q(v):= \frac{1}{2}v^T\nabla_y^2 g(x_m,y_m^N)v-v^T\nabla_y f(x_m,y_m^N).
\end{align}
\end{list}
\end{example}
We next verify that \Cref{exam:aids} belongs  to the algorithm class defined in \Cref{alg_class}.  
For the case when $S$-steps GD with initialization $\bf{0}$ is applied to solve the quadratic program in~\cref{exam:quadratic}, simple telescoping yields 
\begin{align*}
v_m^{S} = \alpha \sum_{t=0}^{S-1}(I-\alpha\nabla_y^2g(x_m,y_m^N))^{t} \nabla_yf(x_m,y_m^N),
\end{align*}
which, incorporated into \cref{aid_hgd}, implies that $G_m$ falls into the span subspaces in~\cref{x_span}, and hence all updates fall into the subspaces $\mathcal{H}_x^k, \mathcal{H}_y^k, k=0,...,K$ defined in \Cref{alg_class}. For the case when heavy-ball method, i.e., $v_m^{t+1} = v_m^t-\eta_t \nabla Q(v_m^t) +\theta_t(v_m^t-v_m^{t-1})$, with initialization $v_m^0=v_m^1=\bf{0}$ is applied to~\cref{exam:quadratic}, expressing the updates via a dynamic system perspective yields 
\begin{align}\label{vmt_form} 
\begin{bmatrix}
v_m^{S}\\v_m^{S-1} 
\end{bmatrix}
= 
\sum_{s=2}^{S}\prod_{t=s}^{S-1}
\begin{bmatrix}
(1+\theta_t) I- \eta_t \nabla_y^2 g(x_m,y_m^N) & -\theta_t I\\I & \bf{0}
\end{bmatrix}
 \begin{bmatrix}
\eta_t \nabla_y f(x_m,y_m^N)\\\bf{0} 
\end{bmatrix}.
\end{align}
Combining $v_m^S$ in \cref{vmt_form} with~\cref{aid_hgd}, we can see that the resulting $G_m$ falls into the span subspaces in~\cref{x_span}, and hence this case still belongs to the algorithm class in  \Cref{alg_class}. 

Note that the algorithm class considered in \Cref{alg_class} also includes single-loop bilevel optimization algorithms, e.g., by setting $N=1$ in \Cref{exam:itd} and \Cref{exam:aids}.

\section{Proof of \Cref{thm:low1}}\label{appen:thm1}
In this section, we provide a complete proof of \Cref{thm:low1} under the strongly-convex-strongly-convex geometry. Note that our construction sets the dimensions of variables $x$ and $y$ to be the same, i.e.,  $p=q=d$. The main proofs are divided into four steps: 1) constructing the worst-case instance that belongs to the problem class $\mathcal{F}_{scsc}$ defined in \Cref{de:pc}; 2) characterizing the optimal point $x^*=\argmin_{x\in\mathbb{R}^d}\Phi(x)$; 3) characterizing the subspaces $\mathcal{H}_x^k,\mathcal{H}_y^k$; and 4) developing lower bounds on the convergence and complexity.  

\vspace{0.2cm}
\noindent {\bf Step 1: Constructing the worst-case instance that satisfies \Cref{de:pc}.}
\vspace{0.2cm}

In this step, we show that the constructed $f,g$ in~\cref{str_fg} satisfy Assumptions~\ref{fg:smooth} and~\ref{g:hessiansJaco}, and $\Phi(x)$ is $\mu_x$-strongly-convex. It can be seen from~\cref{str_fg} that $f,g$ satisfy  \cref{def:first} \eqref{df:sec} and \eqref{def:three} in Assumptions~\ref{fg:smooth} and~\ref{g:hessiansJaco} with arbitrary constants $L_x,L_y,\widetilde L_y,\widetilde L_{xy}$ and $\rho_{xy}=\rho_{yy}=0$ but requires $L_{xy}\geq \frac{(L_x-\mu_x)(\widetilde L_y-\mu_y)}{2\widetilde L_{xy}}$ (which is still at a constant level) due to the introduction of the term $\frac{\alpha\beta}{\widetilde L_{xy}}x^TZ^3y$ in $f$. We note that such a term introduces necessary connection between $f$ and $g$, and yields a tighter lower bound, as pointed out in the remark at the end of \Cref{caoananasca:s}. 

We next show that the overall objective function $\Phi(x)=f(x,y^*(x)) $ is $\mu_x$-strongly-convex. 
According to~\cref{str_fg}, we have $g(x,\cdot)$ to be $\mu_y$-strongly-convex with a single minimizer 
$y^*(x) = (\beta Z^2+\mu_y I)^{-1} \big(\frac{\widetilde L_{xy}}{2}Zx-b\big)$,
and hence we obtain from \cref{obj} that $\Phi(x)$ is given by 
\begin{align}\label{phi_x}
\Phi(x) = &\frac{1}{2} x^T(\alpha Z^2 +\mu_x I) x -\frac{\alpha\beta}{\widetilde L_{xy}}x^TZ^3(\beta Z^2+\mu_y I)^{-1} \Big(\frac{\widetilde L_{xy}}{2}Zx-b\Big) \nonumber
\\ &+\frac{\bar L_{xy}}{2}x^TZ(\beta Z^2+\mu_y I)^{-1} \Big(\frac{\widetilde L_{xy}}{2}Zx-b\Big) + \frac{\bar L_{xy}}{\widetilde L_{xy}} b^T (\beta Z^2+\mu_y I)^{-1} \Big(\frac{\widetilde L_{xy}}{2}Zx-b\Big) \nonumber
\\&+ \frac{L_y}{2} \Big(\frac{\widetilde L_{xy}}{2}Zx-b\Big)^T (\beta Z^2+\mu_y I)^{-1}(\beta Z^2+\mu_y I)^{-1} \Big(\frac{\widetilde L_{xy}}{2}Zx-b\Big).  
\end{align}
Note that $Z$ is symmetric and invertible with $Z^{-1}$ given by 
$$Z^{-1}= \begin{bmatrix}
 1&   1&  1& 1\\
  \vdots & \text{\reflectbox{$\ddots
  $}} &  1&   \\
  1&\text{\reflectbox{$\ddots$}} & & \\
  1&   &  & \\ 
\end{bmatrix},$$
and hence the eigenvalue decomposition of $Z$ can be written as $Z= U \,\text{Diag}\{\lambda_1,...,\lambda_d\}U^T$, where $\lambda_i\neq 0, i=1,...,d$ and $U$ is an orthogonal matrix. 
Then, for any integers $i,j>0$, simple calculation yields
\begin{align}\label{iterchange}
Z^i (\beta Z^2+\mu_y I)^{-j} = U \text{Diag}\bigg\{\frac{\lambda^i_1}{(\beta \lambda_1^2+\mu_y)^j},...,\frac{\lambda^i_d}{(\beta \lambda_d^2+\mu_y)^j}\bigg\}U^T = (\beta Z^2+\mu_y I)^{-j} Z^i. 
\end{align}
Using the relationship in \cref{iterchange}, we have  
\begin{align*}
\frac{1}{2} x^T\alpha Z^2 x =& \frac{\alpha\beta}{2} x^T Z^4 (\beta Z^2+\mu_y I)^{-1} x + \frac{\alpha \mu_y}{2}x^TZ^2(\beta Z^2+\mu_y I)^{-1}x,
\end{align*}
which, in conjunction with  \cref{phi_x} and \cref{iterchange}, yields
\begin{align}
\Phi(x) =  \frac{1}{2}& \mu_x \|x\|^2 + \frac{2\alpha\mu_y+\bar L_{xy}\widetilde L_{xy}}{4} x^T Z^2 (\beta Z^2+\mu_y I)^{-1} x - \frac{\bar L_{xy}}{\widetilde L_{xy}} b^T(\beta Z^2+\mu_y I)^{-1} b
 \nonumber
\\+ \frac{L_y}{2} &\Big(\frac{\widetilde L_{xy}}{2}Zx-b\Big)^T (\beta Z^2+\mu_y I)^{-2}\Big(\frac{\widetilde L_{xy}}{2}Zx-b\Big) + \frac{2\alpha\beta}{\widetilde L_{xy}^2}b^TZ^2 (\beta Z^2+\mu_y I)^{-1} b,  
\end{align}
which is $\mu_x$-strongly-convex. 

\vspace{0.2cm}
\noindent {\bf Step 2: Characterizing $x^*=\argmin_{x\in\mathbb{R}^d}\Phi(\cdot).$}
\vspace{0.2cm}

Based on the form of $\Phi(\cdot)$,  we have 
\begin{align}\label{eq:aboveg}
\nabla \Phi(x) =& (\beta Z^2 +\mu_y I)^2\mu_x x + \Big(\alpha \mu_y +\frac{\bar L_{xy}\widetilde L_{xy}}{2}\Big) (\beta Z^2 +\mu_y I) Z^2 x + \frac{L_y\widetilde L_{xy}}{2}\Big(\frac{\widetilde L_{xy}}{2}Z^2x-Zb\Big) \nonumber
\\=& \Big( \beta^2\mu_x +\alpha \beta\mu_y+\frac{ \beta \bar L_{xy}\widetilde L_{xy}}{2}\Big) Z^4x + (2\beta\mu_x\mu_y+\alpha\mu_y^2+\frac{\mu_y\bar L_{xy}\widetilde L_{xy}}{2} +\frac{L_y\widetilde L_{xy}^2}{4}) Z^2x  \nonumber
\\&+ \mu_x\mu_y^2 x  - \frac{L_y\widetilde L_{xy}}{2} Zb.
\end{align}
By setting $\nabla\Phi(x^*) = 0$, we have 
\begin{align}\label{z_equations}
Z^4x^* + &\underbrace{\frac{2\beta\mu_x\mu_y+\alpha\mu_y^2+\frac{\mu_y\bar L_{xy}\widetilde L_{xy}}{2} +\frac{L_y\widetilde L_{xy}^2}{4}}{\beta^2\mu_x +\alpha \beta\mu_y+\frac{ \beta \bar L_{xy}\widetilde L_{xy}}{2}}}_{\lambda} Z^2x^*  \nonumber
\\&\hspace{1cm}+ \underbrace{\frac{ \mu_x\mu_y^2}{\beta^2\mu_x +\alpha \beta\mu_y+\frac{ \beta \bar L_{xy}\widetilde L_{xy}}{2}}}_{\tau} x^* = \underbrace{\frac{L_y\widetilde L_{xy}Zb}{2(\beta^2\mu_x +\alpha \beta\mu_y+\frac{ \beta \bar L_{xy}\widetilde L_{xy}}{2})}}_{\widetilde b},
\end{align}
where we define $\lambda,\tau,\widetilde b$ for notational convenience. The following lemma establishes  useful properties of $x^*$ under a specific selection of $\widetilde b$. 
\begin{lemma}\label{le:x_star}
Let $b$ be chosen such that $\widetilde b$ as defined in \cref{z_equations} satisfies $\widetilde  b_1= (2+\lambda +\tau) r -(3+\lambda)r^2 + r^3, \widetilde b_2 = r-1$ and $\widetilde b_t=0, t=3,...,d$, where $0<r<1$ is a solution of equation 
\begin{align}\label{eq:soulc}
1 - (4+\lambda)r + (6+2\lambda +\tau) r^2 -(4+\lambda) r^3 +r^4=0. 
\end{align}
Let $\hat x$ be a vector with each coordinate $\hat x_i = r^i$. Then, we have
\begin{align}
\|\hat x - x^*\| \leq \frac{(7+\lambda)
}{\tau}r^d.
\end{align}
\end{lemma}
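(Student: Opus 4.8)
The plan is to recognize $\hat x$ as an \emph{exact} solution of the defining equation \eqref{z_equations} at every coordinate except the last two, and then to convert this tiny residual into a bound on $\|\hat x - x^*\|$ via the good conditioning of the coefficient matrix. Write $A := Z^4 + \lambda Z^2 + \tau I$, so that \eqref{z_equations} reads $Ax^* = \widetilde b$. Since $Z$ is symmetric, $Z^2 = Z^T Z \succeq 0$ and $Z^4 = (Z^2)^2 \succeq 0$; combined with $\lambda,\tau>0$ this gives $A \succeq \tau I$, so $A$ is invertible with $\|A^{-1}\| \le 1/\tau$. The identity $A(\hat x - x^*) = A\hat x - \widetilde b =: \rho$ then yields $\|\hat x - x^*\| \le \tau^{-1}\|\rho\|$, and it remains only to bound the residual $\rho$.

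The heart of the argument is to compute $\rho = A\hat x - \widetilde b$ coordinate by coordinate, exploiting the banded structure in \eqref{matrices_coupling} ($Z^2$ tridiagonal, $Z^4$ pentadiagonal). For an interior index $i$, the stencils give $(A\hat x)_i = r^{i-2}\big[1 - (4+\lambda)r + (6+2\lambda+\tau)r^2 - (4+\lambda)r^3 + r^4\big]$, and the bracket vanishes precisely because $r$ solves \eqref{eq:soulc}; since $\widetilde b_i = 0$ for $i \ge 3$, the residual is zero there. For the two top rows, I would substitute the special first rows of $Z^2$ and $Z^4$ and verify that the prescribed values $\widetilde b_1 = (2+\lambda+\tau)r - (3+\lambda)r^2 + r^3$ and $\widetilde b_2 = r-1$ were chosen exactly to cancel $(A\hat x)_1$ and $(A\hat x)_2$, the latter using \eqref{eq:soulc} once more to eliminate the degree-$\ge 2$ terms; hence $\rho_1 = \rho_2 = 0$.

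Consequently $\rho$ is supported only on the last two coordinates, where the truncated bottom rows of $Z^2$ and $Z^4$ break the interior recursion. Comparing the actual boundary rows against the interior stencils (whose contribution already vanishes) isolates the residual as $\rho_{d-1} = -r^{d+1}$ and $\rho_d = r^d\big(-1 + (4+\lambda)r - r^2\big)$. Bounding these by $|\rho_{d-1}| \le r^d$ and $|\rho_d| \le (6+\lambda)r^d$ (both using $0<r<1$) gives $\|\rho\| \le \sqrt{1 + (6+\lambda)^2}\,r^d \le (7+\lambda)r^d$, where the final step is the elementary inequality $1 + (6+\lambda)^2 \le (7+\lambda)^2$, valid for all $\lambda \ge 0$. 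Combining with $\|\hat x - x^*\| \le \tau^{-1}\|\rho\|$ delivers the claimed bound $\|\hat x - x^*\| \le \frac{7+\lambda}{\tau}r^d$.

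The main obstacle is the routine-but-delicate bookkeeping at the four boundary rows: verifying that the \emph{ad hoc} choices of $\widetilde b_1,\widetilde b_2$ annihilate the top residual, and correctly extracting the bottom residual from the truncation of the pentadiagonal and tridiagonal stencils. This requires careful use of the precise first and last row entries of $Z^2$ and $Z^4$ in \eqref{matrices_coupling}, together with two applications of the characteristic identity \eqref{eq:soulc}. Everything else—the operator-norm bound $\|A^{-1}\|\le 1/\tau$ from positive semidefiniteness and the scalar estimate on $\|\rho\|$—is standard.
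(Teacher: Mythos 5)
Your proposal is correct and follows essentially the same route as the paper: the paper defines $\hat b := (Z^4+\lambda Z^2+\tau I)\hat x$ and observes it agrees with $\widetilde b$ except on the last two coordinates (with exactly your values $-r^{d+1}$ and $-r^d+(4+\lambda)r^{d+1}-r^{d+2}$), then concludes via $\tau\|x^*-\hat x\|\leq \|(Z^4+\lambda Z^2+\tau I)(x^*-\hat x)\| = \|\widetilde b-\hat b\|\leq (7+\lambda)r^d$, which is precisely your residual-plus-conditioning argument with $\rho = \hat b - \widetilde b$. Your write-up merely makes two steps slightly more explicit than the paper (the positive-semidefiniteness justification of $\|A^{-1}\|\leq 1/\tau$ and the elementary bound $\sqrt{1+(6+\lambda)^2}\leq 7+\lambda$), but it is the same proof.
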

\begin{proof}
Note that the choice of $b$ is achievable because $Z$ is invertible with $Z^{-1}$ given by 
\begin{align*}
Z^{-1}= \begin{bmatrix}
 1&   1&  1& 1\\
  \vdots & \text{\reflectbox{$\ddots
  $}} &  1&   \\
  1&\text{\reflectbox{$\ddots$}} & & \\
  1&   &  & \\ 
\end{bmatrix}.
\end{align*}
Then, define a vector $\hat b$ with $\hat b_t=\widetilde b_t$ for $t=1,...,d-2$ and 
\begin{align}\label{eq:helpss}
\hat b_{d-1} =& r^{d-3} - (4+\lambda) r^{d-2} +(6+2\lambda+\tau)r^{d-1} - (4+\lambda) r^d \overset{\eqref{eq:soulc}}= -r^{d+1}\nonumber
\\ \hat b_d =& r^{d-2} - (4+\lambda) r^{d-1} +(5+2\lambda +\tau) r^d \overset{\eqref{eq:soulc}}= -r^d + (4+\lambda) r^{d+1}-r^{d+2}.
\end{align}
Then, it can be verified that $\hat x$ satisfies the following equations 
\begin{align*}
(2+\lambda +\tau)\hat x_1 - \hat (3+\lambda) x_2 +\hat x_3 &= \hat b_1 \nonumber
\\ -(3+\lambda) \hat x_1 +(6+2\lambda +\tau) \hat x_2 -(4+\lambda) \hat x_3 + \hat x_4 &= \hat b_2 \nonumber
\\ \hat x_t - (4+\lambda)\hat x_{t+1} + (6+2\lambda +\tau) \hat x_{t+2} -(4+\lambda) \hat x_{t+3}+\hat x_{t+4} &= \hat  b_{t+2}, \mbox{ for } 1\leq t\leq d-4  \nonumber
\\ \hat x_{d-3} - (4+\lambda)\hat x_{d-2} +(6+2\lambda +\tau) \hat x_{d-1} -(4+\lambda)\hat x_d &= \hat b_{d-1}\nonumber
\\\hat x_{d-2} -(4+\lambda) \hat x_{d-1} + (5+2\lambda+\tau) \hat x_d &= \hat b_d, 
\end{align*}
which, in conjunction with the forms of  $Z^2$ and $Z^4$ in \cref{matrices_coupling}, yields 
\begin{align*}
Z^4\hat x + \lambda Z^2 \hat x + \tau \hat x  = \hat b.
\end{align*}
Noting that $Z^4 x^* + \lambda Z^2 x^* + \tau x^*  = \widetilde b$, we have
\begin{align*}
\tau \|x^*-\hat x \|\leq \|(Z^4+\lambda Z^2 + \tau I)(x^*-\hat x)\| = \|\widetilde b -\hat b\| \overset{(i)}\leq (7+\lambda) r^d
\end{align*}
where $(i)$ follows from the definition of $\hat b $ in \cref{eq:helpss}. 
\end{proof}
\noindent {\bf Step 3: Characterizing subspaces $\mathcal{H}_x^K$ and $\mathcal{H}_y^K$.}
\vspace{0.2cm}

In this step, we characterize the forms of  the subspaces $\mathcal{H}_x^K$ and $\mathcal{H}_y^K$
for bilevel optimization algorithms considered in \Cref{alg_class}. Based on the constructions of $f,g$ in~\cref{str_fg}, we have
\begin{align*}
\nabla_x f(x,y) &= (\alpha Z^2+\mu_x I)x -\frac{\alpha\beta}{\widetilde L_{xy}} Z^3y +\frac{\bar L_{xy}}{2}Zy\nonumber
\\ \nabla_y f(x,y) &= -\frac{\alpha \beta}{\widetilde L_{xy}} Z^3x + \frac{\bar L_{xy}}{2} Zx +L_y y +\frac{\bar L_{xy}}{\widetilde L_{xy}} b - \frac{2\alpha\beta}{\widetilde L_{xy}^2} Z^2 b
\\\nabla_x\nabla_y g(x,y) &= -\frac{\widetilde L_{xy}}{2} Z, \; \nabla_y^2g(x,y) = \beta Z^2 +\mu_y I, \; \nabla_y g(x,y) = (\beta Z^2 + \mu_y I) y - \frac{\widetilde L_{xy}}{2} Zx + b, 
\end{align*}
which, in conjunction with \cref{hxk} and \cref{x_span}, yields 
\begin{align}\label{hxy_steps}
\mathcal{H}_y^0 &= \mbox{Span}\{0\}, ...., \mathcal{H}_y^{s_0} = \mbox{Span}\{Z^{2(s_0-1)}b,...,Z^2b,b\} \nonumber
\\\mathcal{H}_x^0 &= .... \mathcal{H}_x^{s_0-1} =\mbox{Span}\{0\}, \mathcal{H}_x^{s_0} \subseteq\mbox{Span}\{Z^{2(T+s_0)}(Zb),....,Z^2(Zb),(Zb)\}. 
\end{align}
Repeating the same steps as in~\cref{hxy_steps}, it can be verified that 
\begin{align}\label{eq:intimedia}
H_{x}^{s_{Q-1}} \subseteq \mbox{Span}\{Z^{2(s_{Q-1} + QT+Q)}(Zb),...,Z^{2j}(Zb),...,Z^2(Zb),(Zb)\}. 
\end{align}
Recall \cref{x_span} that $\mathcal{H}_x^K = \mathcal{H}_x^{s_{Q-1}}$ and $s_{Q-1}\leq K$. Then, we obtain from \cref{eq:intimedia} that $H_{x}^{K }$ satisfies 
\begin{align}\label{eq:H_xK}
H_{x}^{K }\subseteq \mbox{Span}\{Z^{2(K+QT+Q)}(Zb),....,Z^2(Zb),(Zb) \}.
\end{align}

\noindent {\bf Step 4: Characterizing convergence and complexity.}
\vspace{0.2cm}

Based on the results in Steps 1 and 2, we are now ready to provide a lower bound on the convergence rate and complexity of bilevel optimization algorithms.  Let $M = K+QT+Q+ 2$ and $x_0 = {\bf 0}$, and  
let the dimension $d$ satisfy 
\begin{align}\label{d_conditions}
d> \max\Big\{2M,M+1+\log_{r}\Big(\frac{\tau}{4(7+\lambda)}\Big)\Big\}.
\end{align}
Recall \Cref{le:x_star} that $Zb$ has zeros at all coordinates with $t=3,...,d$. Then, based on the form of subspaces $\mathcal{H}_x^K$ in \cref{eq:H_xK} and using the zero-chain property in \Cref{zero_chain}, we have $x^K$ has zeros at the coordinates with $t=M+1,...,d$, and hence 
\begin{align}\label{x_kssca}
\|x^K-\hat x\| \geq \sqrt{\sum_{i=M+1}^d\|\hat x_i\|}  = r^{M}\sqrt{r^2+...+r^{2(d-M)}} \overset{(i)}\geq \frac{r^M}{\sqrt{2}}\|\hat x -x_0\|,
\end{align}
where $(i)$ follows from \cref{d_conditions}. Then, based on \Cref{le:x_star} and~\cref{d_conditions}, we have 
\begin{align}\label{eq:hatxxs}
\|\hat x - x^*\| \leq \frac{7+\lambda}{\tau} < \frac{r^M}{2\sqrt{2}} r \overset{(i)}\leq \frac{r^M}{2\sqrt{2}} \|\hat x-x_0\|, 
\end{align}
where $(i)$ follows from the fact that $\|\hat x-x_0\|=\|\hat x\| \geq r$. Combining \cref{x_kssca} and \cref{eq:hatxxs} further yields 
\begin{align}\label{eq:supp1}
\|x^K-x^*\| \geq \|x^K-\hat x\| - \|\hat x - x^*\| \geq \frac{r^M}{\sqrt{2}} \|\hat x -x_0\| - \frac{r^M}{2\sqrt{2}}\|\hat x-x_0\| =\frac{r^M}{2\sqrt{2}}\|\hat x-x_0\|.
\end{align}
In addition,  note that 
\begin{align*}
\|x^* -\hat x\| \leq \frac{7+\lambda}{\tau} r^d  \overset{\eqref{d_conditions}}\leq \frac{1}{4} r \leq \frac{1}{4}\|\hat x\| \leq \frac{1}{4}\|\hat x - x^*\| +\frac{1}{4} \|x^*\|,
\end{align*}
which, in conjunction with $\|x_0-\hat x\|\geq \|x^*-x_0\| - \|x^*-\hat x\|$, yields
\begin{align}\label{eq:supp2}
\|x_0-\hat x\|\geq \frac{2}{3} \|x^*-x_0\|.
\end{align}
Combining \cref{eq:supp1} and \cref{eq:supp2} yields 
\begin{align}\label{x_lowbound}
\|x^K-x^*\| \geq \frac{\|x^*-x_0\|}{3\sqrt{2}} r^M.
\end{align}
 Then, since the objective function $\Phi(x)$ is $\mu_x$-strongly-convex, we have $\Phi(x^K)-\Phi(x^*)\geq \frac{\mu_x}{2}\|x^K-x^*\|^2$ and $\|x_0-x^*\|^2\geq \Omega(\mu_y^2)(\Phi(x_0)-\Phi(x^*))$, and hence \cref{x_lowbound} yields
\begin{align}
\Phi(x^K)-\Phi(x^*) \geq \Omega\Big( \frac{\mu_x\mu_y^2(\Phi(x_0)-\Phi(x^*))}{36} r^{2M}\Big). 
\end{align}
Recall that $r$ is the solution of the equation $1 - (4+\lambda)r + (6+2\lambda +\tau) r^2 -(4+\lambda) r^3 +r^4=0$. Based on Lemma 4.2 in~\cite{zhang2019lower}, we have 
\begin{align}\label{eq:prange}
1-\frac{1}{\frac{1}{2}+\sqrt{\frac{\lambda}{2\tau}+\frac{1}{4}}} < r< 1, 
\end{align}
which, in conjunction with the definitions of $\lambda$ and $\tau$ in \cref{z_equations} and the fact $\bar L_{xy}\geq 0$, yields the first result \cref{result:first} in \Cref{thm:low1}. Then, in order to achieve an $\epsilon$-accurate solution, i.e., $\Phi(x^K)-\Phi(x^*) \leq \epsilon$, it requires
\begin{align}\label{eq: mbound}
M&=K+QT+Q + 2 \geq \frac{\log  \frac{\mu_x\mu_y^2(\Phi(x_0)-\Phi(x^*))}{\epsilon} }{2\log \frac{1}{r}} \nonumber
\\& \overset{(i)} \geq \Omega\Big(\sqrt{\frac{\lambda}{2\tau}}\log  \frac{\mu_x\mu_y^2(\Phi(x_0)-\Phi(x^*))}{\epsilon} \Big) \geq \Omega\bigg(\sqrt{\frac{L_y\widetilde L_{xy}^2}{\mu_x\mu_y^2}}\log  \frac{\mu_x\mu_y^2(\Phi(x_0)-\Phi(x^*))}{\epsilon} \bigg),
\end{align}
where $(i)$ follows from \cref{eq:prange}. 
Recall that the complexity measure is given by $\mathcal{C}_{\text{\normalfont fun}}(\mathcal{A},\epsilon) \geq \Omega(n_J+n_H + n_G)$, where the numbers $n_J,n_H$ of Jacobian- and Hessian-vector products  are given by $n_J=Q$ and $n_H=QT$ and the number $n_G$ of gradient evaluations is given by $n_G=K$. Then, the total complexity $\mathcal{C}_{\text{\normalfont fun}}(\mathcal{A},\epsilon)\geq \Omega(Q+QT+K)$, which combined with \cref{eq: mbound} implies 
\begin{align*}
\mathcal{C}_{\text{\normalfont fun}}(\mathcal{A},\epsilon) \geq \Omega\bigg(\sqrt{\frac{L_y\widetilde L_{xy}^2}{\mu_x\mu_y^2}}\log  \frac{\mu_x\mu_y^2(\Phi(x_0)-\Phi(x^*))}{\epsilon}  \bigg).
\end{align*}
Then, the proof is complete. 


\section{Proof of \Cref{main:convex}}\label{apep:mainconvexs}
In this section, we provide the proof for \Cref{main:convex} under the convex-strongly-convex geometry.  
 The proof is divided into the following steps: 
1) constructing the worst-case instance that belongs to the convex-strongly-convex problem class $\mathcal{F}_{csc}$ defined in \Cref{de:pc}; 2) characterizing $x^*\in \argmin_{x\in\mathbb{R}^d}\Phi(x)$; 3) developing the lower bound on the gradient norm $\|\nabla\Phi(x)\|$ when the last several coordinates of $x$ are zeros; 4) characterizing the subspaces $\mathcal{H}_x^k$ and $\mathcal{H}_x^k$; and 5) characterizing the convergence and complexity.

\vspace{0.2cm}
\noindent {\bf Step 1: Constructing the worst-case instance that satisfies \Cref{de:pc}.}
\vspace{0.2cm}

It can be verified that  the constructed $f,g$ in \cref{con_fg} satisfy \cref{def:first} \eqref{df:sec} and \eqref{def:three}  in Assumptions~\ref{fg:smooth} and~\ref{g:hessiansJaco}.  Then, similarly to the proof of \Cref{thm:low1}, we have $y^*(x) = (\beta Z^2 +\mu_y I)^{-1} (\frac{\widetilde L_{xy}}{2} Zx-b)$ and hence $\Phi(x) = f(x,y^*(x))$ takes the form of 
\begin{align*}
\Phi(x) =  \frac{L_x}{8} x^T Z^2 x+ \frac{L_y}{2} \Big (\frac{\widetilde L_{xy}}{2} Zx-b \Big)^T (\beta Z^2 +\mu_y I)^{-2}\Big(\frac{\widetilde L_{xy}}{2} Zx-b\Big),
\end{align*}
which can be verified to be convex. 

\vspace{0.2cm}
\noindent {\bf Step 2: Characterizing $x^*$.}
\vspace{0.2cm}

Note that 
the gradient $\nabla \Phi(x)$ is given by 
\begin{align}\label{conv_gdform}
\nabla \Phi(x) =  \frac{L_x}{4} Z^2 x + \frac{L_y\widetilde L_{xy}}{2} Z (\beta Z^2 +\mu_y I)^{-2}\Big(\frac{\widetilde L_{xy}}{2} Zx-b\Big). 
\end{align}
Then, setting $\nabla \Phi(x^*) = 0$ and using \cref{iterchange}, we have 
\begin{align}\label{x_star_convex}
\Big (\frac{L_x\beta^2}{4} Z^6 +  \frac{L_x\beta^2\beta\mu_y}{2} Z^4 +\Big (\frac{L_y\widetilde L_{xy}^2}{4}+\frac{L_x\mu_y^2}{4}\Big) Z^2 \Big) x^* =  \frac{L_y\widetilde L_{xy}}{2} Zb. 
\end{align}
Let $\widetilde b =\frac{L_y\widetilde L_{xy}}{2} Zb$, and we choose $b$ such that $\widetilde b_t = 0$ for $t=4,...,d$ and 
\begin{align}\label{de:bwidetilde}
\widetilde b_1 = &\frac{B}{\sqrt{d}} \Big( \frac{5}{4} L_x\beta^2+ L_x \beta\mu_y+ \frac{\widetilde L^2_{xy}L_y}{4}+\frac{L_x}{4} \mu_y^2\Big), \nonumber
\\ \widetilde b_2 =& \frac{B}{\sqrt{d}} (-L_x \beta^2 -\frac{L_x\beta }{2}\mu_y),\; \widetilde b_3 = \frac{B}{\sqrt{d}} \frac{L_x\beta^2}{4},
\end{align}
where the selection of $b$ is achievable because $Z$ is invertible with $Z^{-1}$ given by 
\begin{align*}
Z^{-1}= \begin{bmatrix}
 &   &  & -1\\
 &  &  -1& -1  \\
  &\text{\reflectbox{$\ddots$}} &\text{\reflectbox{$\ddots
  $}} & \vdots \\
  -1& -1  &-1  &-1 \\ 
\end{bmatrix}.
\end{align*}
Based on the form of $Z^2$ in~\cref{matrices_coupling_convex} and the forms of $Z^4, Z^6$ given by 
{\footnotesize
\begin{align}\label{coup_matrices_4_6}
Z^4=
\begin{bmatrix}
 5& -4 & 1 &  & &\\
 -4& 6  &-4 &1  & &\\
 1& -4 & 6 & -4 & 1&\\
  &  \ddots&  \ddots&  \ddots &  \ddots & \ddots\\
   &  & 1& -4 &  6& -3 \\
  &  & & 1 & -3 & 2\\ 
\end{bmatrix},\;
Z^6=
\begin{bmatrix}
 14& -14 & 6 & -1 & & & &\\
 -14& 20  &-15 & 6   & -1 & &&\\
 6& -15 & 20 & -15 & 6& -1&&\\
 -1&6& -15 & 20 & -15 & 6& -1&\\
&   \ddots&  \ddots&  \ddots&  \ddots &  \ddots & \ddots& \ddots\\
&& -1 & 6 & -15& 20 &  -15& 5 \\
 &&  &  -1& 6& -15 &  19& -9 \\
  && &  &-1 & 5 & -9 & 5\\
\end{bmatrix},
\end{align}
}
\hspace{-0.2cm} it can be checked from~\cref{x_star_convex}  that $x^* = \frac{B}{\sqrt{d}} {\bf 1}$, where $\bf 1$ denotes the all-one vector and thus $\|x^*\|=B$. 

\vspace{0.2cm}
\noindent {\bf Step 3: Characterizing lower bound on $\|\nabla \Phi(x)\|$.}
\vspace{0.2cm}

Next, we characterize a lower bound on $\|\nabla \Phi(x)\|$ when the last three coordinates of $x$ are zeros, i.e., $x_{d-2}=x_{d-1}=x_d=0$. Let $\Omega = [I_{d-3}, {\bf 0}]^T$ and define $\widetilde x\in\mathbb{R}^{d-3}$ such that $\widetilde x_i=x_i$ for $i=1,...,d-3$. Then for any matrix $H$, $H\Omega$ is equivalent to removing the last three columns of $H$. Then,  based on the form of $\nabla \Phi(x)$ in~\cref{conv_gdform}, we have 
\begin{align}\label{eq:gdnorm1}
\min_{x\in\mathbb{R}^d: x_{d-2}=x_{d-1}=x_d=0} \|\nabla\Phi(x)\|^2 = \min_{\widetilde x\in\mathbb{R}^{d-3}} \|H\Omega \widetilde x -  (\beta Z^2 +\mu_y I)^{-2}\widetilde b\|^2
\end{align}
where the matrix $H$ is given by 
\begin{align}\label{wideHdef}
H =  (\beta Z^2 +\mu_y I)^{-2}\underbrace{\Big (\frac{L_x\beta^2}{4} Z^6 +  \frac{L_x\beta^2\beta\mu_y}{2} Z^4 +\Big (\frac{L_y\widetilde L_{xy}^2}{4}+\frac{L_x\mu_y^2}{4}\Big) Z^2 \Big)}_{\widetilde H}.
\end{align}
Then using an approach similar to (7) in~\cite{carmon2019lower}, we have 
\begin{align}\label{eq:gdnomr2s}
\min_{\widetilde x\in\mathbb{R}^{d-3}} \|H\Omega \widetilde x -  (\beta Z^2 +\mu_y I)^{-2}\widetilde b\|^2 = \big(\widetilde b^T (\beta Z^2 +\mu_y I)^{-2} z\big)^2,
\end{align}
where $z$ is the normalized (i.e., $\|z\|=1$) solution of equation $(H\Omega)^Tz=0$. Next we characterize the solution $z$. Since $H=(\beta Z^2 +\mu_y I)^{-2}\widetilde H$, we have  
\begin{align}
(H\Omega)^Tz = (\widetilde H \Omega)^T(\beta Z^2 +\mu_y I)^{-2} z = 0.
\end{align}
Based on the definition of $\widetilde H $ in \cref{wideHdef} and the forms of $Z^2,Z^4,Z^6$ in \cref{matrices_coupling_convex} and~\cref{coup_matrices_4_6}, we have that the solution $z$ takes the form of $ z= \lambda (\beta Z^2 +\mu_y I)^{2}h$, where $\lambda$ is a factor such that $\|z\|=1$ and $h$ is a vector satisfying $h_t = t$ for $t=1,...,d$. Based on the definition of $Z^2$ in \cref{matrices_coupling_convex}, we have
\begin{align*}
1= \|z\| =& \lambda \sqrt{\sum_{i=1}^{d-2}(i\mu_y^2)^2 + ((d-1)\mu_y^2-\beta^2)^2 + (d\mu_y^2 +\beta^2+2\beta\mu_y)^2} \nonumber
\\\leq & \lambda \sqrt{\sum_{i=1}^{d-2}(i\mu_y^2)^2 + 2(d-1)^2\mu_y^4+2\beta^4 + 2d^2\mu_y^4 +2(\beta^2+2\beta\mu_y)^2} \nonumber
\\<& \lambda\sqrt{\frac{2}{3}\mu_y^4(d+1)^3 +4\beta^4+8\beta^3\mu_y+8\beta^2\mu_y^2},
\end{align*}
which further implies that 
\begin{align}\label{ggnormsacsa}
\lambda > \frac{1}{\sqrt{\frac{2}{3}\mu_y^4(d+1)^3 +4\beta^4+8\beta^3\mu_y+8\beta^2\mu_y^2}}.
\end{align}
Then, combining \cref{eq:gdnorm1}, \cref{eq:gdnomr2s} and \cref{ggnormsacsa} yields
\begin{align}\label{min_gdnorm}
\min_{x: x_{d-2}=x_{d-1}=x_d=0} \|\nabla\Phi(x)\|^2 =&\big(\widetilde b^T (\beta Z^2 +\mu_y I)^{-2} z\big)^2 = (\lambda \widetilde b^T h)^2 = \lambda^2 (\widetilde b_1 + 2\widetilde b_2+3\widetilde b_3)^2 \nonumber
\\\overset{(i)}=& \lambda^2 \frac {B^2}{4d}\Big(\frac{\widetilde L^2_{xy}L_y}{4}+\frac{L_x\mu_y^2}{4}\Big)^2 \nonumber
\\ \geq& \frac{B^2\Big(\frac{\widetilde L^2_{xy}L_y}{4}+\frac{L_x\mu_y^2}{4}\Big)^2}{\frac{8}{3}\mu_y^4d(d+1)^3 +16d\beta^4+32d\beta^3\mu_y+32d\beta^2\mu_y^2} \nonumber
\\\overset{(ii)}\geq &  \frac{B^2\Big(\frac{\widetilde L^2_{xy}L_y}{4}+\frac{L_x\mu_y^2}{4}\Big)^2}{8\mu_y^4d^4 +16d\beta^4+32d\beta^3\mu_y+32d\beta^2\mu_y^2}
\end{align}
where $(i)$ follows from the definition of $\widetilde b$ in~\cref{de:bwidetilde}, and $(ii)$ follows because  $d\geq 3$. 

\vspace{0.2cm}
\noindent {\bf Step 4: Characterizing subspaces $\mathcal{H}_x^k$ and $\mathcal{H}_x^k$ .}
\vspace{0.2cm}

Based on the constructions of $f,g$ in \cref{con_fg}, we have 
\begin{align*}
\nabla_x f(x,y) &= \frac{L_x}{4} Z^2x,\;  \nabla_y f(x,y) = L_y y, \;\nabla_x\nabla_y g(x,y) = -\frac{\widetilde L_{xy}}{2} Z 
\nonumber
\\ \nabla_y^2g(x,y) &= \beta Z^2 +\mu_y I, \; \nabla_y g(x,y) = (\beta Z^2 + \mu_y I) y - \frac{\widetilde L_{xy}}{2} Zx + b, 
\end{align*}
which, in conjunction with \cref{hxk} and \cref{x_span}, yields 
\begin{align*}
\mathcal{H}_y^0 &= \mbox{Span}\{0\}, ...., \mathcal{H}_y^{s_0} = \mbox{Span}\{Z^{2(s_0-1)}b,...,Z^2b,b\} \nonumber
\\\mathcal{H}_x^0 &= .... \mathcal{H}_x^{s_0-1} =\mbox{Span}\{0\}, \mathcal{H}_x^{s_0} =\mbox{Span}\{Z^{2(T+s_0-2)}(Zb),....,Z^2(Zb),(Zb)\}. 
\end{align*}
Repeating the above procedure and noting that $s_{Q-1}\leq K$ yield
\begin{align}\label{eq:subsscas} 
\mathcal{H}_x^K = \mathcal{H}_x^{s_{Q-1}} &=  \mbox{Span}\{Z^{2(s_{Q-1}+QT-Q-1)}(Zb),....,Z^2(Zb),(Zb)\}\nonumber
\\&\subseteq \mbox{Span}\{Z^{2(K+QT-Q)}(Zb),....,Z^2(Zb),(Zb)\}. 
\end{align}

\noindent {\bf Step 5: Characterizing convergence and complexity.}
\vspace{0.2cm}

Let $M = K+QT-Q+3$ and consider the following equation 
\begin{align}\label{d_eqtion}
r^4 +r \Big(\frac{2\beta^4}{\mu_y^4}+\frac{4\beta^3}{\mu_y^3}+\frac{4\beta^2}{\mu_y^2}\Big) = \frac{B^2\Big(\widetilde L^2_{xy}L_y+L_x\mu_y^2\Big)^2}{128\mu_y^4\epsilon^2},
\end{align}
which has a solution denoted as $r^*$.  
We choose $d = \lfloor r^* \rfloor$. Then, based on  \cref{min_gdnorm}, we have
\begin{align}\label{eq:quitescs}
\min_{x: x_{d-2}=x_{d-1}=x_d=0} \|\nabla\Phi(x)\|^2 \geq  \frac{B^2\Big(\frac{\widetilde L^2_{xy}L_y}{4}+\frac{L_x\mu_y^2}{4}\Big)^2}{8\mu_y^4(r^*)^4 +16r^*\beta^4+32r^*\beta^3\mu_y+32r^*\beta^2\mu_y^2} = \epsilon^2.
\end{align}
Then, to achieve $\|\nabla\Phi(x^K)\|< \epsilon$, it requires that $M> d-3$. Otherwise (i.e., if $M\leq d-3$), based on \cref{eq:subsscas} and the fact that $Zb$ has nonzeros only at the first three coordinates, we have $x^K$ has zeros at the last three coordinates, and hence \cref{eq:quitescs} yields 
$\|\nabla \Phi(x^K)\|\geq \epsilon$, which leads to a contradiction. Therefore, we have $M>  \lfloor r^* \rfloor-3$.

To characterize the total complexity, using the metric in~\Cref{complexity_measyre}, we have 
\begin{align*} 
\mathcal{C}_{\text{\normalfont grad}}(\mathcal{A},\epsilon)\geq  \Omega(Q+QT+K) \geq  \Omega(M) \geq  \Omega(r^*).
\end{align*}
 Then, the proof is complete. 
\section{Proof of \Cref{co:co1}}
In this case, the condition number $\kappa_y$ satisfies $\kappa_y=\frac{\widetilde L_y}{\mu_y}\leq \mathcal{O}(1)$.
Then, it can be verified that $r^*$ satisfies $(r^*)^3> \Omega (\frac{2\beta^4}{\mu_y^4}+\frac{4\beta^3}{\mu_y^3}+\frac{4\beta^2}{\mu_y^2})$, and hence it follows from \cref{eq:rsoltion} that 
\begin{align*}
\mathcal{C}_{\text{\normalfont grad}}(\mathcal{A},\epsilon) \geq r^*\geq \Omega \Big(\frac{B^{\frac{1}{2}}(\widetilde L^2_{xy}L_y+L_x\mu_y^2)^{\frac{1}{2}}}{\mu_y\epsilon^{\frac{1}{2}}} \Big).
\end{align*}
\section{Proof of \Cref{co:co2}}
To prove  \Cref{co:co2}, we consider two cases $\mu_y\geq \Omega(\epsilon^{\frac{3}{2}})$ and $\mu_y\leq \mathcal{O}(\epsilon^{\frac{3}{2}})$ separately. 

{\bf Case 1: $\mu_y\geq \Omega(\epsilon^{\frac{3}{2}})$.} For this case, we have 
$\big(\frac{2\beta^4}{\mu_y^4}+\frac{4\beta^3}{\mu_y^3}+\frac{4\beta^2}{\mu_y^2}\big)\leq \mathcal{O}\big(\frac{1}{\mu_y^{3}\epsilon^{3/2}}\big)$. Then, it follows from \cref{eq:rsoltion} that $\mathcal{C}_{\text{\normalfont grad}}(\mathcal{A},\epsilon) \geq r^*\geq \Omega \big(\frac{1}{\mu_y\epsilon^{1/2}}\big)$. 

{\bf Case 2: $\mu_y\leq \mathcal{O}(\epsilon^{\frac{3}{2}})$.} For this case, first suppose $(r^*)^3\leq  \mathcal{O}\big(\frac{2\beta^4}{\mu_y^4}+\frac{4\beta^3}{\mu_y^3}+\frac{4\beta^2}{\mu_y^2}\big) $, and then it follows from \cref{eq:rsoltion}  that $r^* \geq \Omega(\frac{1}{\epsilon^{2}}) $. On the other hand, if $(r^*)^3\geq  \Omega\big(\frac{2\beta^4}{\mu_y^4}+\frac{4\beta^3}{\mu_y^3}+\frac{4\beta^2}{\mu_y^2}\big) $, then we obtain from \cref{eq:rsoltion}  that $r^*\geq \Omega(\frac{1}{\mu_y\epsilon^{1/2}})\geq \Omega(\frac{1}{\epsilon^{2}})$, which yields $\mathcal{C}_{\text{\normalfont grad}}(\mathcal{A},\epsilon)\geq r^*\geq \Omega(\frac{1}{\epsilon^{2}})$.
Then, combining these two cases finishes the proof. 

\section{Proof of \Cref{upper_srsr_withnoB}}\label{proof:upss} 
To simplify the notations, we define the following quantities. 
\begin{align}\label{pf:ntationscs}
\mathcal{M}_k =& \|y^*(x^*)\|+ \frac{\widetilde L_{xy}}{\mu_y}\|x_k-x^*\|, \;\;\mathcal{N}_k = \|\nabla_y f( x^*,y^*(x^*))\|+ \Big(L_{xy}+\frac{L_y\widetilde L_{xy}}{\mu_y}\Big)\|x_k-x^*\| \nonumber
\\\mathcal{M}_* =& \|y^*(x^*)\|+ \frac{3\widetilde L_{xy}}{\mu_y}\sqrt{\frac{2}{\mu_x}(\Phi(0) -\Phi(x^*))+ \|x^*\|^2+\frac{\epsilon}{\mu_x}}
 \nonumber
\\\mathcal{N}_* =&\|\nabla_y f( x^*,y^*(x^*))\|+ 3\Big(L_{xy}+\frac{L_y\widetilde L_{xy}}{\mu_y}\Big)\sqrt{\frac{2}{\mu_x}(\Phi(0) -\Phi(x^*))+ \|x^*\|^2+\frac{\epsilon}{\mu_x}}, 
\end{align}
where $\mathcal{M}_k$ and $\mathcal{N}_k$ change with the optimality gap $\|x_k-x^*\|$ at the $k^{th}$ iteration, and $\mathcal{M}_*$ and $\mathcal{N}_*$ are two positive constants depending on the information of the objective function at the optimal point $x^*$.
We first establish the following lemma to 
upper-bound the hypergradient estimation error $\|\nabla\Phi( x_k)-G_k\|$. 
\begin{lemma}\label{le:hgestr}
Let $G_k$ be the hypergradient estimator used in \Cref{alg:bioNoBG} at iteration $k$. Then, we have 
\begin{align}\label{eq:hgesterr}
\|G_k-\nabla \Phi( x_k)\| \leq &\sqrt{\frac{\widetilde L_y +\mu_y}{\mu_y}} \Big(L_y +\frac{2\widetilde L_{xy}L_y}{\mu_y} +\Big(\frac{\rho_{xy}}{\mu_y}+\frac{\widetilde L_{xy}\rho_{yy}}{\mu_y^2}\Big)\mathcal{N}_k\Big) \mathcal{M}_k \exp\Big(-\frac{N}{2\sqrt{\kappa_y}}\Big) \nonumber
\\&+\frac{\widetilde L_{xy}}{\mu_y}\Big(\frac{\sqrt{\kappa_y}-1}{\sqrt{\kappa_y}+1}\Big)^M\mathcal{N}_k,
\end{align}
where the quantities $\mathcal{M}_k$ and $\mathcal{N}_k$ are defined in \cref{pf:ntationscs}. 
\end{lemma}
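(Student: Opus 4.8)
The plan is to split the total error by inserting an intermediate surrogate that uses the inner iterate $y_k^N$ but the \emph{exact} inverse Hessian. Define $\widehat G_k := \nabla_x f(x_k,y_k^N) - \nabla_x\nabla_y g(x_k,y_k^N)\,[\nabla_y^2 g(x_k,y_k^N)]^{-1}\nabla_y f(x_k,y_k^N)$ and use the triangle inequality $\|G_k-\nabla\Phi(x_k)\|\le\|G_k-\widehat G_k\|+\|\widehat G_k-\nabla\Phi(x_k)\|$. The first term isolates the error of the $M$-step heavy-ball solver for the linear system, and the second isolates the error of the $N$-step AGD approximation of $y^*(x_k)$; these map respectively onto the two summands of the claimed bound. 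Before either estimate, I would record two structural facts used throughout: that $y^*(\cdot)$ is $\tfrac{\widetilde L_{xy}}{\mu_y}$-Lipschitz (by differentiating the stationarity identity $\nabla_y g(x,y^*(x))=0$ and invoking Assumption~\ref{fg:smooth} together with $\mu_y$-strong convexity), which is what makes $\mathcal{M}_k$ an upper bound on $\|y^*(x_k)\|$ and $\mathcal{N}_k$ an upper bound on $\|\nabla_y f(x_k,y^*(x_k))\|$.

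For the heavy-ball term, observe that $v_k^M$ approximates the exact minimizer $v^\ast=[\nabla_y^2 g(x_k,y_k^N)]^{-1}\nabla_y f(x_k,y_k^N)$ of the $\mu_y$-strongly-convex quadratic $Q$, so $\|G_k-\widehat G_k\|=\|\nabla_x\nabla_y g(x_k,y_k^N)(v_k^M-v^\ast)\|\le \widetilde L_{xy}\|v_k^M-v^\ast\|$. With the prescribed $\lambda$ and $\theta$ (which one checks give momentum $\theta=(\tfrac{\sqrt{\kappa_y}-1}{\sqrt{\kappa_y}+1})^2$), the heavy-ball recursion on a quadratic contracts the iterate distance at the optimal rate $(\tfrac{\sqrt{\kappa_y}-1}{\sqrt{\kappa_y}+1})^M$ from the initialization $v_k^0=v_k^1=0$, and $\|v^\ast\|\le \mu_y^{-1}\|\nabla_y f(x_k,y_k^N)\|\le \mu_y^{-1}\mathcal{N}_k$. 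Multiplying these three factors reproduces the second summand $\tfrac{\widetilde L_{xy}}{\mu_y}(\tfrac{\sqrt{\kappa_y}-1}{\sqrt{\kappa_y}+1})^M\mathcal{N}_k$.

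For the AGD term I would expand $\widehat G_k-\nabla\Phi(x_k)$ into the first-order part $\nabla_x f(x_k,y_k^N)-\nabla_x f(x_k,y^*(x_k))$ plus the difference of the triple products $\nabla_x\nabla_y g\cdot[\nabla_y^2 g]^{-1}\cdot\nabla_y f$ evaluated at $y_k^N$ and at $y^*(x_k)$. Writing the triple-product difference as a telescoping sum over its three factors, I would bound the Jacobian difference by $\rho_{xy}\|y_k^N-y^*(x_k)\|$, the inverse-Hessian difference via the identity $B_1-B_2=B_1(B_2^{-1}-B_1^{-1})B_2$ together with $\|[\nabla_y^2 g]^{-1}\|\le\mu_y^{-1}$ and $\rho_{yy}$-Lipschitzness, and the gradient difference by $L_y\|y_k^N-y^*(x_k)\|$, while controlling the residual factor $\|\nabla_y f(x_k,y_k^N)\|$ by $\mathcal{N}_k$. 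Collecting the resulting constants (the Lipschitz constant of $\nabla_x f$ in $y$, the term $\tfrac{\widetilde L_{xy}L_y}{\mu_y}$, and the $\mathcal{N}_k$-weighted terms $\tfrac{\rho_{xy}}{\mu_y}$ and $\tfrac{\widetilde L_{xy}\rho_{yy}}{\mu_y^2}$) yields the first-line coefficient multiplied by the tracking error $\|y_k^N-y^*(x_k)\|$. I would then close the argument with the AGD guarantee: for the $\mu_y$-strongly-convex, $\widetilde L_y$-smooth inner problem, Nesterov's method converts the function-value rate $(1-1/\sqrt{\kappa_y})^N$ into $\|y_k^N-y^*(x_k)\|\le\sqrt{\tfrac{\widetilde L_y+\mu_y}{\mu_y}}\,\|y_k^0-y^*(x_k)\|\exp(-\tfrac{N}{2\sqrt{\kappa_y}})$, and since $y_k^0=0$ the initial distance is $\|y^*(x_k)\|\le\mathcal{M}_k$, giving the first summand.

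The main obstacle is the telescoping estimate of the triple-product difference: handling the inverse-Hessian perturbation cleanly and, crucially, arranging the bookkeeping so that the leftover gradient factor enters \emph{linearly} through $\mathcal{N}_k$ rather than dragging along an additional power of the tracking error $\|y_k^N-y^*(x_k)\|$. Getting this collapse right is exactly what pins down the displayed coefficients. A related subtlety, which surfaces when this lemma is invoked inside \Cref{upper_srsr_withnoB} rather than here, is that $\mathcal{M}_k$ and $\mathcal{N}_k$ grow with the running gap $\|x_k-x^*\|$, so the estimate becomes genuinely useful only after that gap is shown (by the induction in the main proof) to remain bounded along the trajectory.
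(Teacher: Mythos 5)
Your route is, in substance, the paper's own: the paper likewise separates the heavy-ball solver error $\|v_k^M-[\nabla_y^2 g(x_k,y_k^N)]^{-1}\nabla_y f(x_k,y_k^N)\|$ from the error incurred by replacing $y^*(x_k)$ with $y_k^N$ in the three factors of the triple product, then applies the heavy-ball contraction, the AGD rate $\sqrt{\tfrac{\widetilde L_y+\mu_y}{\mu_y}}\exp(-\tfrac{N}{2\sqrt{\kappa_y}})$ from the zero initialization, and the $\tfrac{\widetilde L_{xy}}{\mu_y}$-Lipschitzness of $y^*(\cdot)$ to land on $\mathcal{M}_k$ and $\mathcal{N}_k$. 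Your telescoping of the triple product (with residual factors anchored at $y^*(x_k)$) is also how the paper keeps the leftover gradient factor linear in $\mathcal{N}_k$.

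There is, however, one step that fails as written: in the heavy-ball term you assert $\|v^\ast\|\le \mu_y^{-1}\|\nabla_y f(x_k,y_k^N)\|\le \mu_y^{-1}\mathcal{N}_k$. The second inequality is not justified: by \cref{pf:ntationscs}, $\mathcal{N}_k$ bounds the gradient at the \emph{exact} inner solution, $\|\nabla_y f(x_k,y^*(x_k))\|$, not at the inexact iterate $y_k^N$. The correct estimate is $\|\nabla_y f(x_k,y_k^N)\|\le \|\nabla_y f(x_k,y^*(x_k))\|+L_y\|y_k^N-y^*(x_k)\|\le \mathcal{N}_k+L_y\|y_k^N-y^*(x_k)\|$, so the heavy-ball term produces, besides your second summand, an extra contribution $\tfrac{\widetilde L_{xy}L_y}{\mu_y}\|y_k^N-y^*(x_k)\|$ (after bounding the contraction factor by $1$ on that piece); this is precisely what the paper does in \cref{gg:worimass}. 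The omission is not cosmetic: this extra piece is the \emph{second} copy of $\tfrac{\widetilde L_{xy}L_y}{\mu_y}$ in the lemma's coefficient $L_y+\tfrac{2\widetilde L_{xy}L_y}{\mu_y}+\bigl(\tfrac{\rho_{xy}}{\mu_y}+\tfrac{\widetilde L_{xy}\rho_{yy}}{\mu_y^2}\bigr)\mathcal{N}_k$, whereas your AGD telescoping yields only one copy (from the inverse-Hessian-times-gradient difference). So your bookkeeping, taken literally, "collects" a coefficient smaller than the one claimed, and it does so by relying on the unjustified inequality. Carrying the correction term through and absorbing it into the first summand closes the gap and makes your argument coincide with the paper's proof.
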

\Cref{le:hgestr} shows that the estimation error $\|\nabla\Phi( x_k)-G_k\|$ is bounded given that the optimality gap $\|x_k-x^*\|$ is bounded. We will show in the proof of \Cref{upper_srsr_withnoB} that $\|x_k-x^*\|$ is bounded as the algorithm runs due to the strongly-convex geometry of the objective function $\Phi(x)$. In addition, it can be seen that this error decays exponentially with respect to the number $N$ of inner-level steps and the number $M$ of steps of the heavy-ball method for solving the linear system in  \Cref{alg:bioNoBG}. 
Then, to prove the convergence of \Cref{alg:bioNoBG}, we set $N=M=c\sqrt{\kappa_y}\log (\kappa_y)$ in the proof of \Cref{upper_srsr_withnoB}, where $c$ is a constant independent of $\kappa_y$.

\begin{proof}
Recall line $7$ of \Cref{alg:bioNoBG} that 
\begin{align}\label{laomuzhu}
G_k:= \nabla_x f(x_k,y_k^N) -\nabla_x \nabla_y g( x_k,y_k^N)v_k^M,
\end{align}
where $v_k^M$ is the $M^{th}$ step output of the heavy-ball method for solving $$\min_v Q(v):=\frac{1}{2}v^T\nabla_y^2 g(x_k,y_k^N) v - v^T
\nabla_y f( x_k,y^N_k).$$
Recall the smoothness parameter $\widetilde L_y$ of $g(x,\cdot)$ defined in Assumption~\ref{fg:smooth}. 
Then, based on the convergence result of the heavy-ball method in~\cite{badithela2019analysis} with stepsizes $\lambda=\frac{4}{(\sqrt{\widetilde L_y}+\sqrt{\mu_y})^2}$ and $\theta=\max\big\{\big(1-\sqrt{\lambda\mu_y}\big)^2,\big(1-\sqrt{\lambda\widetilde L_y}\big)^2\big\}$ and noting that $v_k^0=v_k^1=0$, we have 
\begin{align}\label{gg:worimass}
\|v_k^M - \nabla_y^2 &g(x_k,y_k^N)^{-1}\nabla_y f( x_k,y^N_k) \| \nonumber
\\\leq  &\Big(\frac{\sqrt{\kappa_y}-1}{\sqrt{\kappa_y}+1}\Big)^M \Big\| \big(\nabla_y^2 g(x_k,y_k^N)\big)^{-1}\nabla_y f(x_k,y^N_k)\Big\| \nonumber
\\\leq &\frac{L_y}{\mu_y}\Big(\frac{\sqrt{\kappa_y}-1}{\sqrt{\kappa_y}+1}\Big)^M \|y^*(x_k)-y_k^N\|  + \frac{\|\nabla_y f( x_k,y^*(x_k))\|}{\mu_y}\Big(\frac{\sqrt{\kappa_y}-1}{\sqrt{\kappa_y}+1}\Big)^M \nonumber
\\\overset{(i)}\leq &\frac{L_y}{\mu_y} \|y^*(x_k)-y_k^N\|  + \frac{\|\nabla_y f( x_k,y^*(x_k))\|}{\mu_y}\Big(\frac{\sqrt{\kappa_y}-1}{\sqrt{\kappa_y}+1}\Big)^M
\end{align}
where $y^*(x_k)=\argmin_{y\in\mathbb{R}^q} g( x_k,y)$ and $(i)$ follows because $\frac{\sqrt{\kappa_y}-1}{\sqrt{\kappa_y}+1}\leq 1$. 
 Then, based on the forms of $G_k$ and $\nabla\Phi(x)$ in \cref{laomuzhu} and \cref{hyperG}, and using Assumptions~\ref{fg:smooth} and~\ref{g:hessiansJaco}, we have 
\begin{align}\label{jingyikeai}
\|G_k&-\nabla \Phi(x_k)\| \nonumber
\\\overset{(i)}\leq & \| \nabla_x f( x_k,y_k^N) -\nabla_x f( x_k,y^*(x_k))\| + \widetilde L_{xy}\|v_k^M- \nabla_y^2 g(x_k,y^*(x_k))^{-1}\nabla_y f(x_k,y^*(x_k)) \|  \nonumber
\\&+\frac{\|\nabla_y f( x_k,y^*(x_k)) \|}{\mu_y}  \|\nabla_x \nabla_y g( x_k,y_k^N)-\nabla_x \nabla_y g( x_k,y^*(x_k))\| \nonumber
\\\leq & L_y \|y^*(x_k)-y_k^N\| + \widetilde L_{xy} \|v_k^M- \nabla_y^2 g(x_k,y_k^N)^{-1}\nabla_y f( x_k,y^N_k) \| \nonumber
\\&+ \widetilde L_{xy}\big\|\nabla_y^2 g(x_k,y_k^N)^{-1}\nabla_y f( x_k,y^N_k)-\nabla_y^2 g(x_k,y^*(x_k))^{-1}\nabla_y f(x_k,y^*(x_k)) \big\|\nonumber
\\&+\frac{\rho_{xy}}{\mu_y} \|y_k^N-y^*(x_k)\| \|\nabla_y f( x_k,y^*(x_k)) \|\nonumber
\\\leq & \Big(L_y +\frac{\widetilde L_{xy}L_y}{\mu_y} +\frac{\rho_{xy}}{\mu_y}\|\nabla_y f( x_k,y^*(x_k)) \|\Big)\|y_k^N-y^*(x_k)\|  \nonumber
\\&+ \frac{\widetilde L_{xy}\rho_{yy}\|y_k^N-y^*(x_k)\|}{\mu_y^2}\|\nabla_y f( x_k,y^*(x_k)) \| + \widetilde L_{xy} \|v_k^M- \nabla_y^2 g(x_k,y_k^N)^{-1}\nabla_y f( x_k,y^N_k) \|  \nonumber
\\\overset{(ii)}\leq&\Big(L_y +\frac{2\widetilde L_{xy}L_y}{\mu_y} +\Big(\frac{\rho_{xy}}{\mu_y}+\frac{\widetilde L_{xy}\rho_{yy}}{\mu_y^2}\Big)\|\nabla_y f( x_k,y^*(x_k)) \|\Big)\|y_k^N-y^*(x_k)\|  \nonumber
\\&+\frac{\widetilde L_{xy}}{\mu_y}\left(\frac{\sqrt{\kappa_y}-1}{\sqrt{\kappa_y}+1}\right)^M\|\nabla_y f( x_k,y^*(x_k))\|,
\end{align}
where $(i)$ follows from Assumption~\ref{fg:smooth} that $\|\nabla_x\nabla_y g(\cdot,\cdot)\|\leq \widetilde L_{xy}$ and $\|(\nabla_y^2 g(\cdot,\cdot))^{-1}\|\leq \frac{1}{\mu_y}$ and $(ii)$ follows from \cref{gg:worimass}. Note that $y_k^N$ is obtained as the $N$-step output of AGD for minimizing the inner-level loss function $g(x_k,\cdot)$ and recall $y^*(x_k)=\argmin_{y\in\mathbb{R}^q} g( x_k,y)$. Then, based on the analysis in \cite{nesterov2003introductory} for AGD, we have 
\begin{align}\label{eq:ideazhiqian}
\|y_k^N-y^*(x_k)\|\leq &\sqrt{ \frac{\widetilde L_y +\mu_y}{\mu_y} }\|y_k^0-y^*(x_k)\| \exp\Big(-\frac{N}{2\sqrt{\kappa_y}}\Big) \nonumber
\\\leq & \sqrt{\frac{\widetilde L_y +\mu_y}{\mu_y}} \Big(\|y^*(x^*)\| + \frac{\widetilde L_{xy}}{\mu_y}\|x_k-x^*\|\Big) \exp\Big(-\frac{N}{2\sqrt{\kappa_y}}\Big), 
\end{align}
where $x^*=\argmin_{x\in\mathbb{R}^p}\Phi(x)$. 
Moreover, based on Lemma 2.2 in~\cite{ghadimi2018approximation}, we have $\|y^*(x_1)-y^*(x_2)\|\leq \frac{\widetilde L_{xy}}{\mu_y} \|x_1-x_2\|$ for any $x_1,x_2\in\mathbb{R}^p$, and hence 
\begin{align}\label{maoxinadaoxx}
\|\nabla_y &f( x_k,y^*(x_k))\| \leq \|\nabla_y f( x^*,y^*(x^*))\| + \Big( L_{xy}+\frac{L_y\widetilde L_{xy}}{\mu_y}\Big)\|x_k-x^*\|.
\end{align}
Substituting \cref{eq:ideazhiqian}  and \cref{maoxinadaoxx} into \cref{jingyikeai}, and using the definition of  $\mathcal{M}_k$ and $\mathcal{N}_k$ in \cref{pf:ntationscs}, we have 
\begin{align*}
\|G_k-\nabla \Phi( x_k)\| \leq &\sqrt{\frac{\widetilde L_y +\mu_y}{\mu_y}} \Big(L_y +\frac{2\widetilde L_{xy}L_y}{\mu_y} +\Big(\frac{\rho_{xy}}{\mu_y}+\frac{\widetilde L_{xy}\rho_{yy}}{\mu_y^2}\Big)\mathcal{N}_k\Big) \mathcal{M}_k \exp\Big(-\frac{N}{2\sqrt{\kappa_y}}\Big) \nonumber
\\&+\frac{\widetilde L_{xy}}{\mu_y}\Big(\frac{\sqrt{\kappa_y}-1}{\sqrt{\kappa_y}+1}\Big)^M\mathcal{N}_k,
\end{align*}
which completes the proof. 
\end{proof}
We then establish the following lemma to characterize the smoothness parameter of the objective function $\Phi(x)$ around the iterate $x_k$. 
Recall \cref{hyperG} that $\nabla \Phi(x)$ is given by 
\begin{align}\label{hgforms}
\nabla \Phi(x) =  \nabla_x f(x,y^*(x)) -\nabla_x \nabla_y g(x,y^*(x)) [\nabla_y^2 g(x,y^*(x)) ]^{-1}\nabla_y f(x,y^*(x)),
\end{align}
where $y^*(x)=\argmin_{y}g(x,\cdot)$ denotes  the minimizer of the inner-level function $g(x,\cdot)$. 
\begin{lemma}\label{smoothness_Phis}
Consider the hypergradient $\nabla \Phi(x)$ given by \cref{hgforms}. For any $x\in\mathbb{R}^p$, we have
\begin{align}\label{maoxizaosscsa}
\|\nabla &\Phi(x)  - \nabla \Phi(x_k)\|  \nonumber
\\&\leq \Big(\underbrace{ L_x+\frac{2L_{xy}\widetilde L_{xy}}{\mu_y} + \frac{L_y\widetilde L^2_{xy}}{\mu_y^2}+\Big(\frac{\widetilde L_{xy} \rho_{yy}}{\mu_y^2} +  \frac{\rho_{xy}}{\mu_y}\Big) \Big( 1+\frac{\widetilde L_{xy}}{\mu_y}  \Big)\mathcal{N}_k}_{L_{\Phi_k}}\Big)\|x-x_k\|, \end{align}
where $\mathcal{N}_k$ is defined in \cref{pf:ntationscs}. Furthermore, \cref{maoxizaosscsa} implies that, for any $x\in\mathbb{R}^p$,  
\begin{align}
\Phi(x)\leq \Phi(x_k)+\langle \nabla \Phi(x_k),x-x_k\rangle +  \frac{L_{\Phi_k}}{2}\|x-x_k\|^2.
\end{align}
\end{lemma}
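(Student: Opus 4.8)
The plan is to separate the hypergradient into its two structural pieces and bound each relative to the fixed reference point $x_k$. Writing \eqref{hgforms} as $\nabla\Phi(x) = \nabla_x f(x,y^*(x)) - A(x)B(x)C(x)$ with $A(x)=\nabla_x\nabla_y g(x,y^*(x))$, $B(x)=[\nabla_y^2 g(x,y^*(x))]^{-1}$, and $C(x)=\nabla_y f(x,y^*(x))$, I would first record the uniform bounds $\|A(x)\|\le\widetilde L_{xy}$ and $\|B(x)\|\le 1/\mu_y$ (the latter from $\mu_y$-strong convexity of $g(x,\cdot)$), together with $\|C(x_k)\|\le\mathcal{N}_k$, which is exactly the estimate \eqref{maoxinadaoxx} against the definition of $\mathcal{N}_k$ in \eqref{pf:ntationscs}. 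I would also invoke the Lipschitzness of the inner solution, $\|y^*(x)-y^*(x_k)\|\le\frac{\widetilde L_{xy}}{\mu_y}\|x-x_k\|$ (Lemma 2.2 of \cite{ghadimi2018approximation}, already used in the excerpt), which upgrades the joint perturbation to $\|(x,y^*(x))-(x_k,y^*(x_k))\|\le\big(1+\frac{\widetilde L_{xy}}{\mu_y}\big)\|x-x_k\|$.

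For the direct term I would apply Assumption~\ref{fg:smooth} to obtain $\|\nabla_x f(x,y^*(x))-\nabla_x f(x_k,y^*(x_k))\|\le L_x\|x-x_k\|+L_{xy}\|y^*(x)-y^*(x_k)\|\le\big(L_x+\frac{L_{xy}\widetilde L_{xy}}{\mu_y}\big)\|x-x_k\|$. The heart of the argument is the product term, which I would telescope about $x_k$ in the specific order $A(x)B(x)C(x)-A(x_k)B(x_k)C(x_k)=A(x)B(x)[C(x)-C(x_k)]+A(x)[B(x)-B(x_k)]C(x_k)+[A(x)-A(x_k)]B(x_k)C(x_k)$. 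This ordering is chosen so that the only iterate-dependent factor $C(x_k)$ (bounded by $\mathcal{N}_k$) multiplies precisely the two increments carrying $\rho_{yy}$ and $\rho_{xy}$. I would then bound each increment: $\|C(x)-C(x_k)\|\le\big(L_{xy}+\frac{L_y\widetilde L_{xy}}{\mu_y}\big)\|x-x_k\|$ by smoothness of $\nabla_y f$; $\|A(x)-A(x_k)\|\le\rho_{xy}\big(1+\frac{\widetilde L_{xy}}{\mu_y}\big)\|x-x_k\|$ from Assumption~\ref{g:hessiansJaco}; and $\|B(x)-B(x_k)\|\le\frac{\rho_{yy}}{\mu_y^2}\big(1+\frac{\widetilde L_{xy}}{\mu_y}\big)\|x-x_k\|$ via the resolvent identity $M_1^{-1}-M_2^{-1}=M_1^{-1}(M_2-M_1)M_2^{-1}$ together with the Hessian-Lipschitz bound. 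Multiplying each piece by the uniform operator-norm bounds and adding the direct term collects exactly into $L_{\Phi_k}$ as stated in \eqref{maoxizaosscsa}.

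Finally, the descent inequality follows from \eqref{maoxizaosscsa} by the fundamental theorem of calculus. Since the Lipschitz bound holds for an arbitrary argument against the fixed reference $x_k$ with the fixed constant $L_{\Phi_k}$, I would apply it along the segment $x_k+t(x-x_k)$ and integrate, so that $\Phi(x)-\Phi(x_k)-\langle\nabla\Phi(x_k),x-x_k\rangle=\int_0^1\langle\nabla\Phi(x_k+t(x-x_k))-\nabla\Phi(x_k),\,x-x_k\rangle\,dt\le\int_0^1 L_{\Phi_k}\,t\,\|x-x_k\|^2\,dt=\frac{L_{\Phi_k}}{2}\|x-x_k\|^2$.

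The main obstacle is the bookkeeping in the telescoping step rather than any single hard inequality. One must keep the factorization ordered so that the $\mathcal{N}_k$-dependent factor lands on the $\rho$-carrying increments; otherwise $\|C(x)\|$ rather than $\|C(x_k)\|$ appears and the clean, purely iterate-dependent constant $L_{\Phi_k}$ is lost. Equally delicate is propagating the joint perturbation factor $\big(1+\frac{\widetilde L_{xy}}{\mu_y}\big)$ correctly through the Jacobian and Hessian Lipschitz estimates, since $A$, $B$, and $C$ all depend on $x$ both directly and through $y^*(x)$.
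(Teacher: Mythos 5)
Your proposal is correct and follows essentially the same route as the paper: the three-term telescoping $A(x)B(x)[C(x)-C(x_k)]+A(x)[B(x)-B(x_k)]C(x_k)+[A(x)-A(x_k)]B(x_k)C(x_k)$ is exactly the paper's decomposition (the paper first splits off the $C$-increment and then splits the remaining $\|A(x)B(x)-A(x_k)B(x_k)\|$ term, which expands to the same three pieces), with the same uniform bounds $\|A\|\le\widetilde L_{xy}$, $\|B\|\le 1/\mu_y$, the same Lipschitz estimate $\|y^*(x)-y^*(x_k)\|\le\frac{\widetilde L_{xy}}{\mu_y}\|x-x_k\|$, the same resolvent bound for the inverse-Hessian increment, the bound $\|C(x_k)\|\le\mathcal{N}_k$ from \cref{maoxinadaoxx}, and the same integration argument for the descent inequality. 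The constants collect into $L_{\Phi_k}$ exactly as in \cref{maoxizaosscsa}, so there is nothing to correct.
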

\Cref{smoothness_Phis} shows that $\nabla\Phi(x)$ is Lipschitz continuous around the iterate $x_k$, i.e., $\Phi(x)$ is smooth, where the smoothness parameter $L_{\Phi_k}$ contains a term proportional to $\|x_k-x^*\|$. 
We will show in the proof of \Cref{upper_srsr_withnoB} that the optimality distance $\|x_k-x^*\|$ is bounded as the algorithm runs, and hence the smoothness parameter $L_{\Phi_k}$ is bounded by $\mathcal{O}(\frac{1}{\mu_y^3})$ during the entire process. 

\begin{proof}
Based on the form of $\nabla \Phi(x)$ in \cref{hgforms}, we have 
{\small
\begin{align*}
\|\nabla &\Phi(x)  - \nabla \Phi(x_k)\|  
\\\leq& \| \nabla_x f(x,y^*(x)) - \nabla_x f(x_k,y^*(x_k)) \|+\frac{\widetilde L_{xy}}{\mu_y} \|\nabla_y f(x,y^*(x))-\nabla_y f(x_k,y^*(x_k))\| 
\\&+ \underbrace{\|\nabla_x \nabla_y g(x,y^*(x)) \nabla_y^2 g(x,y^*(x))^{-1}-\nabla_x \nabla_y g(x_k,y^*(x_k)) \nabla_y^2 g(x_k,y^*(x_k))^{-1}\|}_{P}\|\nabla_y f(x_k,y^*(x_k))\|,
\end{align*}}
\hspace{-0.14cm}which, in conjunction with the inequality
\begin{align*}
P \leq &\frac{\widetilde L_{xy} \rho_{yy}}{\mu_y^2} (\|x-x_k\| + \|y^*(x) -y^*(x_k)\|) + \frac{\rho_{xy}}{\mu_y}(\|x-x_k\| + \|y^*(x) -y^*(x_k)\|) 
\\\overset{(i)}\leq  & \Big(\frac{\widetilde L_{xy} \rho_{yy}}{\mu_y^2} +  \frac{\rho_{xy}}{\mu_y}\Big) \Big( 1+\frac{\widetilde L_{xy}}{\mu_y}  \Big) \|x-x_k\|,
\end{align*}
and using Assumption~\ref{fg:smooth}, yields 
\begin{align}\label{eq:particsasq}
\|\nabla \Phi(x)  - \nabla \Phi(x_k)\| \leq& \Big( L_x+\frac{2L_{xy}\widetilde L_{xy}}{\mu_y} + \frac{L_y\widetilde L^2_{xy}}{\mu_y^2}\Big)\|x-x_k\| \nonumber
\\&+ \Big(\frac{\widetilde L_{xy} \rho_{yy}}{\mu_y^2} +  \frac{\rho_{xy}}{\mu_y}\Big) \Big( 1+\frac{\widetilde L_{xy}}{\mu_y}  \Big)\|\nabla_y f(x_k,y^*(x_k))\| \|x-x_k\|,
\end{align}
where $(i)$ follows from the $\frac{\widetilde L_{xy}}{\mu_y}$-smoothness of $y^*(\cdot)$. 
Substituting \cref{maoxinadaoxx} into \cref{eq:particsasq} and using the definition of $\mathcal{N}_k$ in \cref{pf:ntationscs}, we have
\begin{align}\label{ijj:ntidesaca}
\|\nabla \Phi(x) & - \nabla \Phi(x_k)\| \nonumber
\\\leq& \Big(\underbrace{ L_x+\frac{2L_{xy}\widetilde L_{xy}}{\mu_y} + \frac{L_y\widetilde L^2_{xy}}{\mu_y^2}+\Big(\frac{\widetilde L_{xy} \rho_{yy}}{\mu_y^2} +  \frac{\rho_{xy}}{\mu_y}\Big) \Big( 1+\frac{\widetilde L_{xy}}{\mu_y}  \Big)\mathcal{N}_k}_{L_{\Phi_k}}\Big)\|x-x_k\|. 
\end{align}
Based on \cref{ijj:ntidesaca}, we further obtain
\begin{align*}
|\Phi(x)-\Phi(x_k)-&\langle \nabla \Phi(x_k),x-x_k\rangle| \nonumber
\\=& \Big|\int_0^1 \langle \nabla\Phi(x_k+t(x-x_k)),x-x_k \rangle d t-\langle \nabla \Phi(x_k),x-x_k\rangle\Big| \nonumber
\\\leq& \Big| \int_0^1 \langle \nabla\Phi(x_k+t(x-x_k))-\nabla \Phi(x_k),x-x_k \rangle d t \Big| \nonumber
\\\leq & \Big| \int_0^1 \| \nabla\Phi(x_k+t(x-x_k))-\nabla \Phi(x_k)\| \|x-x_k\| d t \Big| \nonumber
\\\leq &\Big| \int_0^1 L_{\Phi_k}\|(x-x_k)\|^2 td t \Big| = \frac{L_{\Phi_k}}{2}\|x-x_k\|^2.
\end{align*}
Then, the proof is now complete. 
\end{proof}
Based on \Cref{le:hgestr} and \Cref{smoothness_Phis}, we are ready to prove \Cref{upper_srsr_withnoB}. 
\begin{proof}[{\bf Proof of \Cref{upper_srsr_withnoB}}]
 \Cref{alg:bioNoBG} conducts the following updates 
\begin{align}\label{updateRule}
z_{k+1}=&x_k -\frac{1}{L_\Phi} G_k, \nonumber
\\x_{k+1}=&\Big(1+\frac{\sqrt{\kappa_x}-1}{\sqrt{\kappa_x}+1}\Big)z_{k+1} - \frac{\sqrt{\kappa_x}-1}{\sqrt{\kappa_x}+1} z_k,
\end{align}
where the smoothness parameter $L_{\Phi}$ takes the form of 
\begin{align}\label{wodishenamxd}
 L_{\Phi} =& L_x+\frac{2L_{xy}\widetilde L_{xy}}{\mu_y} + \frac{L_y\widetilde L^2_{xy}}{\mu_y^2}  +\Big(\frac{\widetilde L_{xy} \rho_{yy}}{\mu_y^2} +  \frac{\rho_{xy}}{\mu_y}\Big) \Big( 1+\frac{\widetilde L_{xy}}{\mu_y}  \Big)\|\nabla_y f( x^*,y^*(x^*))\|\nonumber
 \\&+3\Big(\frac{\widetilde L_{xy} \rho_{yy}}{\mu_y^2} +  \frac{\rho_{xy}}{\mu_y}\Big) \Big( 1+\frac{\widetilde L_{xy}}{\mu_y}  \Big)\Big(L_{xy}+\frac{L_y\widetilde L_{xy}}{\mu_y}\Big)\sqrt{\frac{2}{\mu_x}(\Phi(0) -\Phi(x^*))+ \|x^*\|^2+\frac{\epsilon}{\mu_x}} \nonumber
 \\= &\Theta\Big(\frac{1}{\mu_y^2}+\Big(\frac{ \rho_{yy}}{\mu_y^3} +  \frac{\rho_{xy}}{\mu_y^2}\Big)\Big(\|\nabla_y f( x^*,y^*(x^*))\|+\frac{\|x^*\|}{\mu_y}+\frac{\sqrt{\Phi(0)-\Phi(x^*)}}{\sqrt{\mu_x}\mu_y}\Big)  \Big),
\end{align}
and  
$\kappa_x = \frac{L_\Phi}{\mu_x}$ is the condition number of the objective function $\Phi(x)$. 

The remaining proof adapts the results in Section 2.2.5 of \cite{nesterov2018lectures}, but with two key differences: we need to (a) prove the boundedness of the iterates as the algorithm runs, and (b) carefully handle the hypergradient estimation error in the convergence analysis for accelerated gradient methods. In specific, we first construct the estimate sequences as follows. 
\begin{align}\label{ssses_seq}
S_0(x) =& \Phi(x_0) +\frac{\mu_x}{2} \|x-x_0\|^2 \nonumber
\\S_{k+1}(x) = & \Big(1 -\frac{1}{\sqrt{\kappa_x}} \Big)S_{k}(x)  +\frac{1}{\sqrt{\kappa_x}} \Big( \Phi(x_k) +\langle G_k,x-x_k\rangle + \frac{\mu_x}{2}\|x - x_k\|^2 +  \frac{\epsilon}{4}\Big).
\end{align}
Note that $\nabla^2S_0(x) = \mu_x I $  and $\nabla^2 S_{k+1}(x) = \big(1 -\frac{1}{\sqrt{\kappa_x}} \big)\nabla^2 S_{k}(x)+\frac{ \mu_x}{\sqrt{\kappa_x}}I$. Then, by induction, it can be verified that $\nabla^2 S_{k}(x)=\mu_x I$ for all $k=0,...,K$. This implies that $S_k(x)$ can be written as $S_k(x) = S_k^* + \frac{\mu_x}{2}\|x-v_k\|^2$, where $v_k = \argmin_{x\in\mathbb{R}^p}S_k(x)$. Next, we show by induction that
\begin{align}
&1.\quad \|z_k-x^*\|\leq \sqrt{\frac{2}{\mu_x}(\Phi(0) -\Phi(x^*))+ \|x^*\|^2+\frac{\epsilon}{\mu_x}} \text{ for all } k=0,...,K. \label{pocasca}
\\&2.\quad S_k^*\geq  \Phi(z_k) \text{ for all } k=0,...,K. \label{dasimaomaoss}
\end{align} 
Combining the first item in~\cref{pocasca} with the updates in~\cref{updateRule} also implies the boundedness of the sequence $x_k,k=0,...,K$ by noting that 
\begin{align}\label{boundednessofx_k}
\|x_{k}-x^*\| \leq &\Big(1+\frac{\sqrt{\kappa_x}-1}{\sqrt{\kappa_x}+1}\Big)\|z_{k} -x^*\|+ \frac{\sqrt{\kappa_x}-1}{\sqrt{\kappa_x}+1} \|z_{k-1}-x^*\| \nonumber
\\ \leq& 3\sqrt{\frac{2}{\mu_x}(\Phi(0) -\Phi(x^*))+ \|x^*\|^2+\frac{\epsilon}{\mu_x}}. 
\end{align}
Next, we prove the above two items given in \cref{pocasca} and \cref{dasimaomaoss} by induction. First, it can be verified that they hold for $k=0$ by noting that $\|z_0-x^*\|=\|x^*\|$ and $S_0^* = \Phi(x_0)$. Then, we suppose that they hold for all $k=0,...,k^\prime$ and prove the $k^\prime+1$ case. 

Based on \Cref{smoothness_Phis}, we have, for all $k=0,...,k^\prime$,  
\begin{align}\label{dotahaonanss}
\Phi(z_{k+1})\leq& \Phi(x_k)+\langle \nabla \Phi(x_k),z_{k+1}-x_k\rangle +  \frac{L_{\Phi_k}}{2}\|z_{k+1}-x_k\|^2 \nonumber
\\\overset{(i)}=& \Phi(x_k)-\frac{1}{L_\Phi}\langle \nabla \Phi(x_k),G_k\rangle +  \frac{L_{\Phi_k}}{2L_\Phi^2}\|G_k\|^2,  
\end{align}
 where $(i)$ follows from the updates in~\cref{updateRule}. Note that for $k=0,...,k^\prime$, it is seen from \cref{boundednessofx_k} that the optimality gap  $\|x_k-x^*\|\leq 3\sqrt{\frac{2}{\mu_x}(\Phi(0) -\Phi(x^*))+ \|x^*\|^2+\frac{\epsilon}{\mu_x}}$, which, combined with the definition of $L_{\Phi_k}$ in \cref{maoxizaosscsa}, 
 yields $L_{\Phi_k} \leq L_\Phi$ for all $k=0,...,k^\prime$, where $L_\Phi$ is given by \cref{wodishenamxd}. Then, we obtain from \cref{dotahaonanss} that for all $k=0,...,k^\prime$, 
 \begin{align}\label{mxdhuaqingduo}
 \Phi(z_{k+1})\leq& \Phi(x_k)-\frac{1}{L_\Phi}\langle \nabla \Phi(x_k),G_k\rangle +  \frac{1}{2L_\Phi}\|G_k\|^2 \nonumber
 \\=& \Phi(x_k)-\frac{1}{L_\Phi}\| \nabla \Phi(x_k)\|^2 - \frac{1}{L_\Phi} \langle \nabla \Phi(x_k),G_k- \nabla \Phi(x_k)\rangle +  \frac{1}{2L_\Phi}\|G_k\|^2 \nonumber
 \\=& \Phi(x_k)-\frac{1}{L_\Phi}\| \nabla \Phi(x_k)\|^2 + \frac{1}{2L_\Phi} \|\nabla \Phi(x_k)\|^2 +  \frac{1}{2L_\Phi} \|G_k-\nabla \Phi(x_k)\|^2 \nonumber
 \\=& \Phi(x_k)-\frac{1}{2L_\Phi}\| \nabla \Phi(x_k)\|^2 +  \frac{1}{2L_\Phi} \|G_k-\nabla \Phi(x_k)\|^2,
 \end{align}
 which, in conjunction with the strong convexity of $\Phi(\cdot)$,  yields
 \begin{align}\label{wodialayacs}
 \Phi(z_{k+1})\leq &\Big( 1- \frac{1}{\sqrt{\kappa_x}}\Big) \Phi(z_k) +\Big( 1- \frac{1}{\sqrt{\kappa_x}}\Big) \langle \nabla\Phi(x_k),x_k-z_k\rangle + \frac{1}{\sqrt{\kappa_x}}\Phi(x_k) \nonumber
 \\&-\frac{1}{2L_\Phi}\| \nabla \Phi(x_k)\|^2 +  \frac{1}{2L_\Phi} \|G_k-\nabla \Phi(x_k)\|^2 \nonumber
 \\\overset{(i)}\leq &\Big( 1- \frac{1}{\sqrt{\kappa_x}}\Big) S_k^* +\Big( 1- \frac{1}{\sqrt{\kappa_x}}\Big) \langle \nabla\Phi(x_k),x_k-z_k\rangle + \frac{1}{\sqrt{\kappa_x}}\Phi(x_k) \nonumber
 \\&-\frac{1}{2L_\Phi}\| \nabla \Phi(x_k)\|^2 +  \frac{1}{2L_\Phi} \|G_k-\nabla \Phi(x_k)\|^2,
 \end{align}
 where $(i)$ follows because $S_k^*\geq  \Phi(z_k)$ for $k=0,...,k^\prime$. Next, based on the definition of $S_k(x)$ in \cref{ssses_seq} and taking derivative w.r.t.~$x$ on both sides of  \cref{ssses_seq}, we have 
 \begin{align}\label{gg_smidasssca}
 \nabla S_{k+1}(x) &\overset{(i)}= \Big( 1- \frac{1}{\sqrt{\kappa_x}} \Big)\nabla S_k(x) + \frac{1}{\sqrt{\kappa_x}} G_k +  \frac{\mu_x}{\sqrt{\kappa_x}}(x-x_k) \nonumber
 \\&=\mu_x\Big( 1- \frac{1}{\sqrt{\kappa_x}} \Big)(x-v_k)+\frac{1}{\sqrt{\kappa_x}} G_k +  \frac{\mu_x}{\sqrt{\kappa_x}}(x-x_k), 
 \end{align}
 where $(i)$ follows because $S_k(x) = S_k^* + \frac{\mu_x}{2}\|x-v_k\|^2$. Noting that $\nabla S_{k+1}(v_{k+1})= 0$, we obtain from  \cref{gg_smidasssca} that 
 \begin{align*}
 \mu_x\Big( 1- \frac{1}{\sqrt{\kappa_x}} \Big)(v_{k+1}-v_k)+\frac{1}{\sqrt{\kappa_x}} G_k +  \frac{\mu_x}{\sqrt{\kappa_x}}(v_{k+1}-x_k) = 0,
 \end{align*}
 which yields 
 \begin{align}\label{mamamiyasscas}
 v_{k+1} = \Big(1-\frac{1}{\sqrt{\kappa_x}} \Big)v_k + \frac{1}{\sqrt{\kappa_x}} x_k - \frac{1}{\mu_x\sqrt{\kappa_x}} G_k.
 \end{align}
 Based on \cref{ssses_seq} and using $S_k(x) = S_k^* + \frac{\mu_x}{2}\|x-v_k\|^2$,  we have 
 \begin{align*}
S_{k+1}^*  + \frac{\mu_x}{2}\|x_k-v_{k+1}\|^2 = \Big(1-\frac{1}{\sqrt{\kappa_x}} \Big)\Big(S_{k}^*  + \frac{\mu_x}{2}\|x_k-v_{k}\|^2 \Big) + \frac{1}{\sqrt{\kappa_x}}\Phi(x_k) + \frac{\epsilon}{4\sqrt{\kappa_x}},
 \end{align*}
 which, in conjunction with \cref{mamamiyasscas}, yields
 \begin{align}\label{xingyunwos}
 S_{k+1}^*  =&  \Big(1-\frac{1}{\sqrt{\kappa_x}} \Big) S_{k}^*  +  \Big(1-\frac{1}{\sqrt{\kappa_x}} \Big)  \frac{\mu_x}{2}\|x_k-v_k\|^2 + \frac{1}{\sqrt{\kappa_x}}\Phi(x_k) + \frac{\epsilon}{4\sqrt{\kappa_x}} \nonumber
 \\&- \Big(1-\frac{1}{\sqrt{\kappa_x}} \Big)^2\frac{\mu_x}{2}\|x_k-v_k\|^2 - \frac{1}{2\mu_x\kappa_x}\|G_k\|^2 +  \Big(1-\frac{1}{\sqrt{\kappa_x}} \Big)\frac{1}{\sqrt{\kappa_x}}\langle v_k-x_k,G_k \rangle \nonumber
 \\ = & \Big(1-\frac{1}{\sqrt{\kappa_x}} \Big) S_{k}^*  + \Big(1-\frac{1}{\sqrt{\kappa_x}} \Big) \frac{1}{\sqrt{\kappa_x}} \frac{\mu_x}{2}\|x_k-v_k\|^2 +\frac{1}{\sqrt{\kappa_x}}\Phi(x_k) + \frac{\epsilon}{4\sqrt{\kappa_x}}\nonumber
 \\&- \frac{1}{2\mu_x\kappa_x}\|G_k\|^2 +  \Big(1-\frac{1}{\sqrt{\kappa_x}} \Big)\frac{1}{\sqrt{\kappa_x}}\langle v_k-x_k,G_k \rangle.
 \end{align}  
 Based on the definition of $\kappa_x$, we simplify \cref{xingyunwos} to 
 \begin{align}\label{yifenyimiaos}
  S_{k+1}^* \geq & \Big(1-\frac{1}{\sqrt{\kappa_x}} \Big) S_{k}^* +\frac{1}{\sqrt{\kappa_x}}\Phi(x_k) + \frac{\epsilon}{4\sqrt{\kappa_x}}- \frac{1}{2L_\Phi}\|G_k\|^2 \nonumber
  \\&+  \Big(1-\frac{1}{\sqrt{\kappa_x}} \Big)\frac{1}{\sqrt{\kappa_x}}\langle v_k-x_k,G_k \rangle.
 \end{align}
 Next, we prove $v_k-x_k = \sqrt{\kappa_x}(x_k-z_k)$ by induction. First note that this equality holds for $k=0$ based on the fact that $v_0-x_0=\sqrt{\kappa_x}(x_0-z_0)=0$. Then, suppose that it holds for iteration $k$, and for iteration $k+1$, we obtain from \cref{mamamiyasscas} that 
 \begin{align}
  v_{k+1} -x_{k+1}= &\Big(1-\frac{1}{\sqrt{\kappa_x}} \Big)v_k + \frac{1}{\sqrt{\kappa_x}} x_k-x_{k+1} - \frac{1}{\mu_x\sqrt{\kappa_x}} G_k \nonumber
  \\ \overset{(i)}=& \Big(1-\frac{1}{\sqrt{\kappa_x}} \Big)\Big(1+\sqrt{\kappa_x} \Big) x_k -\Big(1-\frac{1}{\sqrt{\kappa_x}} \Big)\sqrt{\kappa_x}z_k +\frac{1}{\sqrt{\kappa_x}} x_k -x_{k+1} -\frac{1}{\mu_x\sqrt{\kappa_x}} G_k \nonumber
  \\=& \sqrt{\kappa_x} \Big(x_k-\frac{1}{L_\Phi}G_k\Big) -(\sqrt{\kappa_x}-1) z_k -x_{k+1} \nonumber
  \\\overset{(ii)}=&\sqrt{\kappa_x} (x_{k+1}-z_{k+1}),
 \end{align}
 where $(i)$ follows because $v_k-x_k = \sqrt{\kappa_x}(x_k-z_k)$ and $(ii)$ follows from the updating step in \cref{updateRule}. Then, by induction, we have that $v_k-x_k = \sqrt{\kappa_x}(x_k-z_k)$ holds for all iterations. 
 
 \noindent Combining this equality with \cref{yifenyimiaos}, we have
 {\small\begin{align}\label{wocayoudiandaca}
 S_{k+1}^* \geq & \Big(1-\frac{1}{\sqrt{\kappa_x}} \Big) S_{k}^* +\frac{1}{\sqrt{\kappa_x}}\Phi(x_k) + \frac{\epsilon}{4\sqrt{\kappa_x}}- \frac{1}{2L_\Phi}\|G_k\|^2 +  \Big(1-\frac{1}{\sqrt{\kappa_x}} \Big)\langle x_k-z_k,G_k \rangle \nonumber
 \\=& \Big(1-\frac{1}{\sqrt{\kappa_x}} \Big) S_{k}^* +\frac{1}{\sqrt{\kappa_x}}\Phi(x_k) + \frac{\epsilon}{4\sqrt{\kappa_x}}- \frac{1}{2L_\Phi}\|\nabla\Phi(x_k)\|^2 +\Big(1-\frac{1}{\sqrt{\kappa_x}} \Big)\langle x_k-z_k,\nabla \Phi(x_k) \rangle  \nonumber
 \\& +  \Big(1-\frac{1}{\sqrt{\kappa_x}} \Big)\langle x_k-z_k,G_k-\nabla \Phi(x_k)\rangle - \frac{1}{2L_\Phi}\|G_k-\nabla\Phi(x_k)\|^2 - \frac{1}{L_\Phi}\langle G_k-\nabla \Phi(x_k), \nabla\Phi(x_k)\rangle \nonumber
\\\overset{(i)}\geq& \Big(1-\frac{1}{\sqrt{\kappa_x}} \Big) S_{k}^* +\frac{1}{\sqrt{\kappa_x}}\Phi(x_k) - \frac{1}{2L_\Phi}\|\nabla\Phi(x_k)\|^2 +\Big(1-\frac{1}{\sqrt{\kappa_x}} \Big)\langle x_k-z_k,\nabla \Phi(x_k) \rangle  + \frac{\epsilon}{4\sqrt{\kappa_x}}\nonumber
 \\& -  \Big(1-\frac{1}{\sqrt{\kappa_x}} \Big)\| x_k-z_k\|\|G_k-\nabla \Phi(x_k)\| - \frac{1}{2L_\Phi}\|G_k-\nabla\Phi(x_k)\|^2 \nonumber
 \\&-\| G_k-\nabla \Phi(x_k)\|\| x_k-x^*\|
 \end{align}}
 \hspace{-0.14cm}where $(i)$ follows from \Cref{smoothness_Phis} with  $L_{\Phi_k} \leq L_\Phi$ for $k=0,...,k^\prime$. Based on $\|z_k-x^*\|\leq\sqrt{\frac{2}{\mu_x}(\Phi(0) -\Phi(x^*))+ \|x^*\|^2+\frac{\epsilon}{\mu_x}}$ and $\|x_k-x^*\|<3\sqrt{\frac{2}{\mu_x}(\Phi(0) -\Phi(x^*))+ \|x^*\|^2+\frac{\epsilon}{\mu_x}}$ for $k=0,...,k^\prime$, and using  $\|x_k-z_k\|\leq \|z_k-x^*\|+\|x_k-x^*\|$, we obtain from \cref{wocayoudiandaca} that 
 \begin{align}\label{wocaolalascs}
 S_{k+1}^* \geq & \Big(1-\frac{1}{\sqrt{\kappa_x}} \Big) S_{k}^* +\frac{1}{\sqrt{\kappa_x}}\Phi(x_k) - \frac{1}{2L_\Phi}\|\nabla\Phi(x_k)\|^2 +\Big(1-\frac{1}{\sqrt{\kappa_x}} \Big)\langle x_k-z_k,\nabla \Phi(x_k) \rangle  \nonumber
 \\& + \frac{\epsilon}{4\sqrt{\kappa_x}}-  \Big(7-\frac{4}{\sqrt{\kappa_x}} \Big)\sqrt{\frac{2}{\mu_x}(\Phi(0) -\Phi(x^*))+ \|x^*\|^2+\frac{\epsilon}{\mu_x}}\|G_k-\nabla \Phi(x_k)\|  \nonumber
 \\&- \frac{1}{2L_\Phi}\|G_k-\nabla\Phi(x_k)\|^2.
 \end{align}
Next, we upper-bound the hypergradient estimation error $\|G_k-\nabla\Phi(x_k)\|$ in \cref{wocaolalascs}.  Based on \Cref{le:hgestr}, we have 
\begin{align*}
 \|G_k-\nabla \Phi( x_k)\| \leq &\sqrt{\frac{\widetilde L_y +\mu_y}{\mu_y}} \Big(L_y +\frac{2\widetilde L_{xy}L_y}{\mu_y} +\Big(\frac{\rho_{xy}}{\mu_y}+\frac{\widetilde L_{xy}\rho_{yy}}{\mu_y^2}\Big)\mathcal{N}_k\Big) \mathcal{M}_k \exp\Big(-\frac{N}{2\sqrt{\kappa_y}}\Big) 
 \\&+\frac{\widetilde L_{xy}}{\mu_y}\Big(\frac{\sqrt{\kappa_y}-1}{\sqrt{\kappa_y}+1}\Big)^M\mathcal{N}_k, \nonumber
\end{align*}
which, combined with $\|x_{k}-x^*\|\leq 3\sqrt{\frac{2}{\mu_x}(\Phi(0) -\Phi(x^*))+ \|x^*\|^2+\frac{\epsilon}{\mu_x}}$ for $k=0,...,k^\prime$ and the definitions of $\mathcal{M}_k,\mathcal{N}_k$ in \cref{pf:ntationscs}, yields 
\begin{align*}
 \|G_k-\nabla \Phi( x_k)\| \leq &\sqrt{\frac{\widetilde L_y +\mu_y}{\mu_y}} \Big(L_y +\frac{2\widetilde L_{xy}L_y}{\mu_y} +\Big(\frac{\rho_{xy}}{\mu_y}+\frac{\widetilde L_{xy}\rho_{yy}}{\mu_y^2}\Big)\mathcal{N}_*\Big) \mathcal{M}_* \exp\Big(-\frac{N}{2\sqrt{\kappa_y}}\Big) \nonumber
 \\&+\frac{\widetilde L_{xy}}{\mu_y}\Big(\frac{\sqrt{\kappa_y}-1}{\sqrt{\kappa_y}+1}\Big)^M\mathcal{N}_*,
\end{align*}
where the constants $\mathcal{M}_*$ and $\mathcal{N}_*$ are defined in \cref{pf:ntationscs}. We choose 
\begin{align}\label{wangdehuas}
N&=\Theta\Big(\sqrt{\kappa_y}\log \Big(\frac{\mathcal{M}_*(\mathcal{N}_*+\mu_y)}{\mu_x^{0.25}\mu_y^{2.5}\sqrt{\epsilon L_\Phi}}+\frac{\mathcal{M}_*(\mathcal{N}_*+\mu_y)\sqrt{L_\Phi}(\Phi(0) -\Phi(x^*)+\mu_x^{0.5}\|x^*\|+\epsilon)}{\mu_x\mu_y^{2.5}\epsilon}\Big)\Big), \nonumber
\\M&=\Theta\Big(\sqrt{\kappa_y}\log \Big(\frac{\mathcal{N}_*}{\mu_x^{0.25}\mu_y\sqrt{\epsilon L_\Phi}}+\frac{\mathcal{N}_*\sqrt{L_\Phi}(\Phi(0) -\Phi(x^*)+\mu_x^{0.5}\|x^*\|+\epsilon)}{\mu_x\mu_y\epsilon}\Big)\Big).
\end{align} 
In other words, $M$ and $N$ scale linearly with $\sqrt{\kappa_y}$ and depend only logarithmically on other constants such as $\mu_x,\mu_y,\|x^*\|,\|y^*(x^*)\|$, $\Phi(0) -\Phi(x^*)$ and $\epsilon$. Then, we have 
\begin{align*}
&\|G_k-\nabla \Phi( x_k)\| \leq \frac{\sqrt{\epsilon L_\Phi}}{2\sqrt{2}\kappa_x^{1/4}}
\\&\big(7-\frac{4}{\sqrt{\kappa_x}} \big)\sqrt{\frac{2}{\mu_x}(\Phi(0) -\Phi(x^*))+ \|x^*\|^2+\frac{\epsilon}{\mu_x}}\|G_k-\nabla \Phi( x_k)\| \leq \frac{\epsilon}{8\sqrt{\kappa_x}}. 
\end{align*}
  Substituting these two inequalities into \cref{wocaolalascs} yields, for any $k=0,...,k^\prime$,
\begin{align}\label{jigyiscasesc}
 S_{k+1}^* \geq & \Big(1-\frac{1}{\sqrt{\kappa_x}} \Big) S_{k}^* +\frac{1}{\sqrt{\kappa_x}}\Phi(x_k) - \frac{1}{2L_\Phi}\|\nabla\Phi(x_k)\|^2 \nonumber
 \\&+\Big(1-\frac{1}{\sqrt{\kappa_x}} \Big)\langle x_k-z_k,\nabla \Phi(x_k) \rangle  + \frac{\epsilon}{16\sqrt{\kappa_x}} \nonumber
 \\\overset{(i)}\geq &\Phi(z_{k+1}),
\end{align}
where $(i)$ follows from $\|G_k-\nabla \Phi( x_k)\| \leq \frac{\sqrt{\epsilon L_\Phi}}{2\sqrt{2}\kappa_x^{1/4}}$ in \cref{wodialayacs}, 
which, by induction, finishes the proof of the second item \cref{dasimaomaoss}. To prove the first item \cref{pocasca}, letting $x=x^*$ in \cref{ssses_seq} yields, for $x=0,...,k^\prime$, 
\begin{align}\label{dalaoqiuings}
S_{k+1}(x^*) = & \Big(1 -\frac{1}{\sqrt{\kappa_x}} \Big)S_{k}(x^*)  +\frac{1}{\sqrt{\kappa_x}} \Big( \Phi(x_k) +\langle \nabla\Phi(x_k),x^*-x_k\rangle + \frac{\mu_x}{2}\|x^* - x_k\|^2 + \frac{ \epsilon}{4}\Big) \nonumber
\\&+\frac{1}{\sqrt{\kappa_x}}\langle G_k-\nabla\Phi(x_k),x^*-x_k\rangle \nonumber
\\\leq&  \Big(1 -\frac{1}{\sqrt{\kappa_x}} \Big)S_{k}(x^*) +\frac{1}{\sqrt{\kappa_x}} \Phi(x^*) +\frac{\epsilon}{4\sqrt{\kappa_x}}  + \frac{1}{\sqrt{\kappa_x}} \|x_k-x^*\|\|G_k-\nabla\Phi(x_k)\|\nonumber
\\\overset{(i)}\leq & \Big(1 -\frac{1}{\sqrt{\kappa_x}} \Big)S_{k}(x^*) +\frac{1}{\sqrt{\kappa_x}} \Phi(x^*) +\frac{\epsilon}{2\sqrt{\kappa_x}},  
\end{align} 
where $(i)$ follows because $\|x_k-x^*\|\|G_k-\nabla\Phi(x_k)\|\leq  \frac{\epsilon}{8\sqrt{\kappa_x}}/(7-\frac{4}{\sqrt{\kappa_x}})<\frac{\epsilon}{24\sqrt{\kappa_x}}<\frac{\epsilon}{4}$. Subtracting both sides of \cref{dalaoqiuings} by $\Phi(x^*)$ yields, for all $k=0,...,k^\prime$,
\begin{align}\label{aihenqingchous}
S_{k+1}(x^*) - \Phi(x^*) \leq \Big(1 -\frac{1}{\sqrt{\kappa_x}} \Big)(S_{k}(x^*)-\Phi(x^*)) +\frac{\epsilon}{2\sqrt{\kappa_x}}.
\end{align}
Telescoping \cref{aihenqingchous} over $k$ from $0$ to $k^\prime$ and using $S_0(x^*) = \Phi(0)+\frac{\mu_x}{2} \|x^*\|^2$, we have 
\begin{align*}
S_{k^\prime+1}(x^*) - \Phi(x^*) &\leq  \Big(1 -\frac{1}{\sqrt{\kappa_x}} \Big)^{k^\prime+1}( \Phi(0) -\Phi(x^*)+\frac{\mu_x}{2} \|x^*\|^2)+\frac{\epsilon}{2} \nonumber
\\& \leq \Phi(0) -\Phi(x^*)+\frac{\mu_x}{2} \|x^*\|^2+\frac{\epsilon}{2},
\end{align*} 
which, in conjunction with $S_{k^\prime+1}(x^*)\geq S_{k^\prime+1}^*\geq \Phi(z_{k^\prime+1})$ and $\Phi(z_{k^\prime+1})-\Phi(x^*)\geq \frac{\mu_x}{2}\|z_{k^\prime+1}-x^*\|^2$, yields
\begin{align*}
\|z^{k^\prime+1}-x^*\|\leq \sqrt{\frac{2}{\mu_x}\Phi(0) -\Phi(x^*)+\|x^*\|^2+\frac{\epsilon}{\mu_x}}.
\end{align*}
Then, by induction, we finish the proof of the first item \cref{pocasca}.  Therefore, based on  \cref{pocasca} and \cref{dasimaomaoss} and using an approach similar to \cref{aihenqingchous}, we have
\begin{align}\label{heiyeibaizhoussc}
\Phi(z_K)- \Phi(x^*)\leq S_{K}(x^*) - \Phi(x^*) \leq \Big(1 -\frac{1}{\sqrt{\kappa_x}} \Big)^{K}(\Phi(0) -\Phi(x^*)+\frac{\mu_x}{2} \|x^*\|^2) +\frac{\epsilon}{2}.
\end{align}
In order to achieve $\Phi(z_K)- \Phi(x^*)\leq S_{K}(x^*) - \Phi(x^*) \leq \epsilon$, it requires at most 
\begin{align}\label{anqilababascs}
K &\leq \mathcal{O}\Big( \sqrt{\frac{L_\Phi}{\mu_x}}\log\Big(\frac{\Phi(0) -\Phi(x^*)+\frac{\mu_x}{2} \|x^*\|^2}{\epsilon}\Big)\Big) \nonumber
\\\leq& \mathcal{\widetilde O}\Big(\frac{1}{\mu_x^{0.5}\mu_y}+\Big(\frac{ \sqrt{\rho_{yy}}}{\mu_x^{0.5}\mu_y^{1.5}} +  \frac{\sqrt{\rho_{xy}}}{\mu_x^{0.5}\mu_y}\Big)\sqrt{\|\nabla_y f( x^*,y^*(x^*))\|+\frac{\|x^*\|}{\mu_y}+\frac{\sqrt{\Phi(0)-\Phi(x^*)}}{\sqrt{\mu_x}\mu_y}}\Big). 
\end{align}
Following from the choice of $M=N=\Theta(\sqrt{\kappa_y})$, the complexity of \Cref{alg:bioNoBG} is given by 
\begin{align*}
&\mathcal{C}_{\text{\normalfont fun}}(\mathcal{A},\epsilon) \leq \mathcal{O}(n_J+n_H + n_G)\leq \mathcal{O}(K+KM+KN)   \nonumber
\\&\leq \mathcal{\widetilde O}\Big(\frac{\widetilde L_y^{0.5}}{\mu_x^{0.5}\mu_y^{1.5}}+\big(\frac{ (\rho_{yy}\widetilde L_y)^{0.5}}{\mu_x^{0.5}\mu_y^{2}} +  \frac{(\rho_{xy}\widetilde L_y)^{0.5}}{\mu_x^{0.5}\mu_y^{1.5}}\big)\sqrt{\|\nabla_y f( x^*,y^*(x^*))\|+\frac{\|x^*\|}{\mu_y}+\frac{\sqrt{\Phi(0)-\Phi(x^*)}}{\sqrt{\mu_x}\mu_y}}\Big)
\end{align*}
which finishes the proof. 
\end{proof}

\section{Proof of \Cref{coro:quadaticSr}}\label{proof:coUsrwB}
The proof follows a procedure similar to that for \Cref{upper_srsr_withnoB} except that the smoothness parameter of $\Phi(\cdot)$ at iterate $x_k$  and the hypergradient estimation error $\|G_k-\nabla\Phi(x_k)\|$ are different. In specific, for the quadratic inner problem, we have that $\nabla_y^2 g (x,y) \equiv H, \nabla_x\nabla_y g(x,y) \equiv J, \forall  x\in\mathbb{R}^p, y\in \mathbb{R}^q$. Then, based on the form of $\nabla \Phi(x)$ in \cref{hgforms}, we have 
\begin{align*}
\|\nabla\Phi(x_1&) - \nabla\Phi(x_2)\| \nonumber
\\\leq& \|\nabla_x f(x_1,y^*(x_1)) -\nabla_x f(x_2,y^*(x_2))  \| \nonumber
\\&+ \|JH^{-1}\nabla_y f(x_1,y^*(x_1)) -JH^{-1}\nabla_y f(x_2,y^*(x_2))\| \nonumber
\\  \leq & L_x\|x_1-x_2\| + L_{xy}\|y^*(x_1)-y^*(x_2)\| + \frac{\widetilde L_{xy}}{\mu_y}(L_{xy}\|x_1-x_2\|+L_y\|y^*(x_1)-y^*(x_2)\|)
\end{align*}
which, in conjunction with $\|y^*(x_1)-y^*(x_2)\|\leq \frac{\widetilde L_{xy}}{\mu_y} \|x_1-x_2\|$, yields 
\begin{align}\label{co:qiguqiaopwc}
\|\nabla\Phi(x_1) - \nabla\Phi(x_2)\| \leq \Big(\underbrace{ L_x + \frac{2\widetilde L_{xy}L_{xy}}{\mu_y} +\frac{L_y\widetilde L_{xy}^2}{\mu_y^2} }_{L_\Phi} \Big) \|x_1-x_2\|.
\end{align}
Note that \cref{co:qiguqiaopwc} shows that the objective function $\Phi(\cdot)$ is globally smooth, i.e., the smoothness parameter is bounded at all $x\in\mathbb{R}^p$. This is different from the proof in \Cref{upper_srsr_withnoB}, where the smoothness parameter is unbounded over $x\in\mathbb{R}^p$, but can be bounded at all iterates $x_k,k=0,...,K$ along the optimization path of the algorithm. Therefore, the proof for such a quadratic special case is simpler. 

We next upper-bound the hypergradient estimation error $\|G_k-\nabla \Phi(x_k)\|$. Using an approach similar to \cref{jingyikeai}, we have 
\begin{align}\label{co:huxiuwancong}
\|G_k-\nabla& \Phi(x_k)\| \nonumber
\\\leq & L_y \|y^*(x_k)-y_k^N\| + \widetilde L_{xy} \|v_k^M- H^{-1}\nabla_y f( x_k,y^N_k) \| \nonumber
\\&+ \widetilde L_{xy}\big\|H^{-1}\nabla_y f( x_k,y^N_k)-H^{-1}\nabla_y f(x_k,y^*(x_k)) \big\|\nonumber
\\\leq & \Big(L_y +\frac{\widetilde L_{xy}L_y}{\mu_y} \Big)\|y_k^N-y^*(x_k)\| + \widetilde L_{xy} \|v_k^M- H^{-1}\nabla_y f( x_k,y^N_k) \|  \nonumber
\\\leq& \Big(L_y +\frac{\widetilde L_{xy}L_y}{\mu_y} \Big)\|y_k^N-y^*(x_k)\|+\frac{\widetilde L_{xy}}{\mu_y}\left(\frac{\sqrt{\kappa_y}-1}{\sqrt{\kappa_y}+1}\right)^M\|\nabla_y f( x_k,y^*(x_k))\| \nonumber
\\\leq &\sqrt{\frac{\widetilde L_y +\mu_y}{\mu_y}} \Big(L_y +\frac{\widetilde L_{xy}L_y}{\mu_y}\Big) \mathcal{M}_* \exp\Big(-\frac{N}{2\sqrt{\kappa_y}}\Big) +\frac{\widetilde L_{xy}}{\mu_y}\Big(\frac{\sqrt{\kappa_y}-1}{\sqrt{\kappa_y}+1}\Big)^M\mathcal{N}_*,
\end{align}
where $\mathcal{M}_*$ and $\mathcal{N}_*$ are given by \cref{pf:ntationscs}. 
Based on \cref{co:qiguqiaopwc} and \cref{co:huxiuwancong}, we choose
\begin{itemize}
\item $N=\Theta(\sqrt{\kappa_y}\log (\frac{\mathcal{M}_*}{\mu_x^{0.25}\mu_y^{1.5}\sqrt{\epsilon L_\Phi}}+\frac{\mathcal{M}_*\sqrt{L_\Phi}(\Phi(0) -\Phi(x^*)+\mu_x^{0.5}\|x^*\|+\epsilon)}{\mu_x\mu_y^{1.5}\epsilon}))$
\item $M=\Theta(\sqrt{\kappa_y}\log (\frac{\mathcal{N}_*}{\mu_x^{0.25}\mu_y\sqrt{\epsilon L_\Phi}}+\frac{\mathcal{N}_*\sqrt{L_\Phi}(\Phi(0) -\Phi(x^*)+\mu_x^{0.5}\|x^*\|+\epsilon)}{\mu_x\mu_y\epsilon})).$
\end{itemize} 
Then, using an approach similar to \cref{heiyeibaizhoussc} with $\rho_{xy}=\rho_{yy}=0$, we have 
\begin{align}
\Phi(z_K)- \Phi(x^*) \leq \Big(1 -\sqrt{\frac{\mu_x}{L_\Phi}} \Big)^{K}(\Phi(0) -\Phi(x^*)+\frac{\mu_x}{2} \|x^*\|^2) +\frac{\epsilon}{2},
\end{align}
where $L_\Phi$ is given in \cref{co:qiguqiaopwc}. Then, in order to achieve $\Phi(z_K)- \Phi(x^*) \leq \epsilon$, it requires at most 
\begin{align*}
\mathcal{C}_{\text{\normalfont fun}}(\mathcal{A},\epsilon) \leq &\mathcal{O}(n_J+n_H + n_G)\leq \mathcal{O}(K+KM+KN)   \nonumber
\\\leq& \mathcal{O}\Big(\sqrt{\frac{\widetilde L_y}{\mu_x\mu_y^3}}\log\, {\small \text{poly}(\mu_x,\mu_y,\|x^*\|,\Phi(0)-\Phi(x^*),\|\nabla_y f(x^*,y^*(x^*))\|)}\Big),
\end{align*}
which finishes the proof. 


\section{Proof of \Cref{th:upper_csc1sc}}

Recall that $\widetilde \Phi(\cdot)=\widetilde f(x,y^*(x))$ with  $\widetilde f(x,y) = f(x,y) +\frac{\epsilon}{2R} \|x\|^2$. Then, we have $\widetilde\Phi(x) = \Phi(x) +\frac{\epsilon}{2R} \|x\|^2$ is strongly-convex with parameter $\mu_x=\frac{\epsilon}{R}$. Note that the smoothness parameters of $\widetilde f(x,y)$ are the same as those of $f(x,y)$ except that $L_x$ in \cref{def:first} becomes $L_x+ \frac{\epsilon}{R}$ for $\widetilde f(x,y)$. 
Let $x^*\in\argmin_{x\in\mathbb{R}^p}\Phi(x)$ be one minimizer of the original objective function $\Phi(\cdot)$ and let $\widetilde x^*=\argmin_{x\in\mathbb{R}^p}\widetilde\Phi(x)$ be the minimizer of the regularized objective function $\widetilde \Phi(\cdot)$.  We next characterize some useful inequalities between $x^*$ and $\widetilde x^*$. Based on the definition of $x^*$ and $\widetilde x^*$, we have $\nabla \widetilde \Phi(\widetilde x^*) = 0$ and $\nabla\widetilde\Phi( x^*) = \nabla\Phi(x^*) + \frac{\epsilon}{R} x^* = \frac{\epsilon}{R} x^*  $, which, combined with the strong convexity of $\widetilde \Phi(\cdot)$,  implies that $\frac{\epsilon}{R}\|x^*-\widetilde x^*\|\leq\|\nabla \widetilde \Phi(\widetilde x^*) -\nabla\widetilde\Phi( x^*) \|=  \frac{\epsilon}{R} \|x^*\|$ and hence $\|\widetilde x^*\|\leq2\|x^*\|$. Similarly, the following  (in)equalities hold:
\begin{align}\label{ggsmidacposcs1}
\|y^*(\widetilde x^*)\|&\leq \|y^*(x^*)\|+\frac{3\widetilde L_{xy}}{\mu_y}\|x^*\|, \nonumber
\\\|\nabla_y \widetilde f(\widetilde x^*,y^*(\widetilde x^*))\|&\leq \|\nabla_y f(\widetilde x^*,y^*(\widetilde x^*))\| + \frac{\epsilon}{R}\|\widetilde x^*\| \nonumber
\\&\leq\|\nabla_y f(x^*,y^*(x^*))\| + \Big( 3L_{xy}+\frac{3L_y\widetilde L_{xy}}{\mu_y}+\frac{2\epsilon}{R}\Big)\|x^*\|  \nonumber
\\\widetilde \Phi(0) -\widetilde \Phi(\widetilde x^*) &= \Phi(0)-\Phi(\widetilde x^*) - \frac{\epsilon}{2R}\|\widetilde x^*\|^2 \overset{(i)}\leq \Phi(0) -\Phi(x^*),
\end{align}
where $(i)$ follows from the definition of $x^*\in\argmin_x\Phi(x)$.  

Let $L_{\widetilde \Phi}$ be one smoothness parameter of the function $\widetilde \Phi(\cdot)$, which takes the same form as $L_\Phi$ in \cref{wodishenamxd} except that $L_x,f,x^*$ and $\Phi$ become $L_x+\frac{\epsilon}{R},\widetilde f,\widetilde x^*$ and $\widetilde \Phi$ in \cref{wodishenamxd}, respectively. Similarly to \cref{wangdehuas}, we choose  
\begin{align}\label{michiganletgo}
N=&\Theta(\sqrt{\kappa_y}\log (\text{poly}(\epsilon,\mu_x,\mu_y,\|\widetilde x^*\|,\|y^*(\widetilde x^*)\|,\|\nabla_y \widetilde f(\widetilde x^*,y^*(\widetilde x^*))\|,\widetilde \Phi(0) -\widetilde \Phi(\widetilde x^*)))), \nonumber
\\M=&\Theta(\sqrt{\kappa_y}\log (\text{poly}(\epsilon,\mu_x,\mu_y,\|\widetilde x^*\|,\|y^*(\widetilde x^*)\|,\|\nabla_y \widetilde f(\widetilde x^*,y^*(\widetilde x^*))\|,\widetilde \Phi(0) -\widetilde \Phi(\widetilde x^*) ))).
\end{align} 

We first prove the case when the convergence is measured in term of the suboptimality gap. Note that in this case we choose $R=B^2$. Using an approach similar to \cref{heiyeibaizhoussc} in the proof of \Cref{upper_srsr_withnoB} with $\epsilon$ and $\mu_x$ being replaced by $\epsilon/2$ and  $\frac{\epsilon}{B^2}$, respectively, we have 
\begin{align*}
\widetilde\Phi(z_K)- \widetilde\Phi(\widetilde x^*) \leq \Big(1 -\sqrt{\frac{\epsilon}{B^2L_{\widetilde \Phi}}} \Big)^{K}(\widetilde\Phi(0) -\widetilde\Phi(\widetilde x^*)+\frac{\epsilon}{2B^2} \|\widetilde x^*\|^2) +\frac{\epsilon}{4},
\end{align*}
which, in conjunction with $\widetilde\Phi(z_K)\geq\Phi(z_K)$ and $\widetilde\Phi(\widetilde x^*)\leq\widetilde\Phi(x^*)=\Phi(x^*)+\frac{\epsilon}{2B^2}\|x^*\|^2$, yields
\begin{align}\label{ca:youdiannaoketongs}
\Phi(z_K) - \Phi(x^*)\leq \Big(1 -\sqrt{\frac{\epsilon}{B^2L_{\widetilde \Phi}}} \Big)^{K}(\widetilde\Phi(0) -\widetilde\Phi(\widetilde x^*)+\frac{\epsilon}{2B^2} \|\widetilde x^*\|^2) +\frac{\epsilon}{4}+\frac{\epsilon}{2B^2}\|x^*\|^2.
\end{align}
Recall $\|x^*\|=B$. Similarly to \cref{anqilababascs}, we choose
{\footnotesize
\begin{align}\label{Kwocayoudianda}
K =& \Theta\Big( \sqrt{\frac{B^2L_{\widetilde \Phi}}{\epsilon}}\log\Big(\frac{\widetilde\Phi(0) -\widetilde\Phi(\widetilde x^*)+\frac{\epsilon}{2B^2} \|\widetilde x^*\|^2}{\epsilon}\Big)\Big) \nonumber
\\=&\widetilde \Theta \Big(\sqrt{\frac{B^2}{\epsilon\mu_y^2}}+\Big(\sqrt{\frac{B^2\rho_{yy}}{\epsilon\mu_y^{3}}} +  \sqrt{\frac{B^2\rho_{xy}}{\epsilon\mu_y^2}}\Big)\sqrt{\|\nabla_y \widetilde f(\widetilde x^*,y^*(\widetilde x^*))\|+\frac{\|\widetilde x^*\|}{\mu_y}+\frac{\sqrt{B^2(\widetilde\Phi(0)-\widetilde\Phi(\widetilde x^*))}}{\sqrt{\epsilon}\mu_y}}\Big).
\end{align}}
\hspace{-0.17cm}Then,  we obtain from \cref{ca:youdiannaoketongs} that $\Phi(z_K) - \Phi(x^*)\leq \epsilon$, and the complexity $\mathcal{C}_{\text{\normalfont fun}}(\mathcal{A},\epsilon)$ after substituting \cref{ggsmidacposcs1} into \cref{michiganletgo} and \cref{Kwocayoudianda} is given by 
\begin{align}
\mathcal{C}_{\text{\normalfont fun}}&(\mathcal{A},\epsilon) \leq \mathcal{O}(n_J+n_H + n_G)\leq \mathcal{O}(K+KM+KN)   \nonumber
\\\leq& \mathcal{O}\Big( \Big( \sqrt{\frac{B^2\widetilde L_y}{\epsilon\mu_y^3}}+\Big(\sqrt{\frac{B^2\rho_{yy}\widetilde L_y}{\epsilon\mu_y^{4}}} +  \sqrt{\frac{B^2\rho_{xy}\widetilde L_y}{\epsilon\mu_y^3}}\Big)\sqrt{\Delta^*_{\text{\normalfont\tiny CSC}}}\Big)\log\, {\small \text{poly}(\epsilon,\mu_x,\mu_y,\Delta^*_{\text{\normalfont\tiny CSC}})}\Big).
\end{align}
 Next, we characterize the convergence rate and complexity under the gradient norm metric.  Note that in this case we choose $R=B$. 
Using eq. (9.14) in \cite{boyd2004convex}, we have $\|\nabla\widetilde \Phi (z_k)\|^2\leq 2L_{\widetilde \Phi}(\widetilde\Phi(z_K)- \widetilde\Phi(\widetilde x^*)) $, which, combined with  
$\|\nabla\widetilde \Phi (z_k)\|^2 \geq \frac{1}{2}\|\nabla\Phi (z_k)\|^2 - \frac{\epsilon^2}{B^2}\|z_k\|^2\geq  \frac{1}{2}\|\nabla\Phi (z_k)\|^2 -  \frac{\epsilon^2}{B^2}(2\|z_k-\widetilde x^*\|^2 + 2\|\widetilde x^*\|^2)$
yields
\begin{align}\label{haofantoutongsc}
\|\nabla \Phi (z_k)\|^2\leq& 4L_{\widetilde \Phi}(\widetilde\Phi(z_K)- \widetilde\Phi(\widetilde x^*)) +\frac{4\epsilon^2}{B^2}\|z_k-\widetilde x^*\|^2 +\frac{4\epsilon^2}{B^2} \|\widetilde x^*\|^2 \nonumber
\\\overset{(i)}\leq&4L_{\widetilde \Phi}(\widetilde\Phi(z_K)- \widetilde\Phi(\widetilde x^*)) + \frac{8\epsilon}{B}(\widetilde\Phi(z_K)- \widetilde\Phi(\widetilde x^*)) + \frac{16\epsilon^2}{B^2} \| x^*\|^2  \nonumber
\\=& \Big(4L_{\widetilde \Phi}+ \frac{8\epsilon}{B}\Big)(\widetilde\Phi(z_K)- \widetilde\Phi(\widetilde x^*)) + \frac{16\epsilon^2}{B^2} \| x^*\|^2,  
\end{align}
where $(i)$ follows from the strong convexity of $\widetilde \Phi(\cdot)$ and $\|\widetilde x^*\|\leq 2\|x^*\|$, and $L_{\widetilde \Phi}$ takes the same form as $L_\Phi$ in \cref{wodishenamxd} except that $L_x,f,x^*$ and $\Phi$ become $L_x+\frac{\epsilon}{B},\widetilde f,\widetilde x^*$ and $\widetilde \Phi$ in \cref{wodishenamxd}, respectively.  Then, using an approach similar to \cref{heiyeibaizhoussc} in the proof of \Cref{upper_srsr_withnoB} with $\epsilon$ and $\mu_x$ being replaced by $\epsilon^2/(4L_{\widetilde \Phi}+ \frac{8\epsilon}{B})$ and $\frac{\epsilon}{B}$, respectively, we have
\begin{align*}
\widetilde\Phi(z_K)- \widetilde\Phi(\widetilde x^*) \leq \Big(1 -\sqrt{\frac{\epsilon}{BL_{\widetilde \Phi}}} \Big)^{K}(\widetilde\Phi(0) -\widetilde\Phi(\widetilde x^*)+\frac{\epsilon}{2B} \|\widetilde x^*\|^2) +\frac{\epsilon^2}{2(4L_{\widetilde \Phi}+ \frac{8\epsilon}{B})},
\end{align*}
which, in conjunction with \cref{wangdehuas} and \cref{haofantoutongsc}, yields 
\begin{align*}
\|\nabla \Phi (z_k)\|^2\leq \Big(1 -\sqrt{\frac{\epsilon}{BL_{\widetilde \Phi}}} \Big)^{K}\Big(\widetilde\Phi(0) -\widetilde\Phi(\widetilde x^*)+\frac{\epsilon}{2B} \|\widetilde x^*\|^2\Big) \Big(4L_{\widetilde \Phi}+ \frac{8\epsilon}{B}\Big)  + \frac{\epsilon^2}{2} + \frac{16\epsilon^2}{B^2} \| x^*\|^2.
\end{align*}
Note that $\|x^*\|=B$. Then, to achieve $\|\nabla \Phi (z_k)\|\leq 5\epsilon$, it suffices to  choose $M,N$ as in \cref{michiganletgo} by replacing $\epsilon$ with $\epsilon^2/(4L_{\widetilde \Phi}+ \frac{8\epsilon}{B})$, and choose
\begin{align*}
K =& \Theta\Big( \sqrt{\frac{BL_{\widetilde \Phi}}{\epsilon}}\log\Big(\frac{(\widetilde\Phi(0) -\widetilde\Phi(\widetilde x^*)+\frac{\epsilon}{2B} \|\widetilde x^*\|^2)(4L_{\widetilde \Phi}+ \frac{8\epsilon}{B})}{\epsilon}\Big)\Big).
\end{align*}
This in conjunction with \cref{ggsmidacposcs1} yields
\begin{align*}
\mathcal{C}_{\text{\normalfont grad}}(\mathcal{A},\epsilon) \leq &\mathcal{O}(n_J+n_H + n_G)\leq \mathcal{O}(K+KM+KN)   \nonumber
\\\leq& \mathcal{O}\Big( \Big( \sqrt{\frac{B\widetilde L_y}{\epsilon\mu_y^3}}+\Big(\sqrt{\frac{B\rho_{yy}\widetilde L_y}{\epsilon\mu_y^{4}}} +  \sqrt{\frac{B\rho_{xy}\widetilde L_y}{\epsilon\mu_y^3}}\Big)\sqrt{\Delta^*_{\text{\normalfont\tiny CSC}}}\Big)\log\, {\small \text{poly}(\epsilon,\mu_x,\mu_y,\Delta^*_{\text{\normalfont\tiny CSC}})}\Big),
\end{align*}
which finishes the proof. 
\section{Proof of \Cref{coro:quadaticConv}}
Note that for the quadratic inner problem, the Jacobians $\nabla_x\nabla_y g(x,y)$ and Hessians $\nabla_y^2 g(x,y)$ are {\bf constant} matrices, which imply that the parameters $\rho_{xx}=\rho_{xy}=0$ in Assumption \ref{g:hessiansJaco}. Then, letting $\rho_{xx}=\rho_{xy}=0$ in the results of \Cref{th:upper_csc1sc} finishes the proof.

\section{Proof of \Cref{upper_srsr}}\label{proof:upss_wb}
Based on the update in line 9 of \Cref{alg:bio}, we  have,  for any $x\in\mathbb{R}^p$
\begin{align}\label{e1:ggmida}
\langle \beta_k G_k,x_{k+1}-x\rangle = \tau_k\underbrace{\langle x-x_{k+1}, x_{k+1} -\widetilde x_k\rangle}_{P} + (1-\tau_k)\underbrace{\langle x-x_{k+1},x_{k+1}-x_k\rangle}_{Q}.
\end{align}
Note that $P$ in the above \cref{e1:ggmida} satisfies 
\begin{align*}
P &= \langle \widetilde x_k - x_{k+1}, x -\widetilde x_k \rangle + \|x-\widetilde x_k\|^2 - \|x-x_{k+1}\|^2
\\&=-P + \|x-\widetilde x_k\|^2 -\|\widetilde x_k-x_{k+1}\|^2 -\|x-x_{k+1}\|^2,
\end{align*}
which yields $P =\frac{1}{2}(\|x-\widetilde x_k\|^2 -\|\widetilde x_k-x_{k+1}\|^2 -\|x-x_{k+1}\|^2)$. Taking an approach similar to the derivation of $P$, we can obtain $Q=\frac{1}{2}(\|x- x_k\|^2 -\| x-x_{k+1}\|^2 -\|x_{k}-x_{k+1}\|^2)$. Then,  substituting  the forms of $P,Q$ to \cref{e1:ggmida} and using the choices of $\tau_k$ and $\beta_k$, we have 
\begin{align}\label{eq:gdangle}
\big\langle G_k, \frac{\sqrt{\alpha\mu_x}}{2}(x_{k+1}-x)\big\rangle &=  \frac{\sqrt{\alpha\mu_x}\mu_x}{8} (\|x-\widetilde x_k\|^2 -\|\widetilde x_k-x_{k+1}\|^2 -\|x-x_{k+1}\|^2)  \nonumber
\\+&\frac{2\mu_x-\sqrt{\alpha\mu_x}\mu_x}{8}(\|x- x_k\|^2 -\| x-x_{k+1}\|^2 -\|x_{k}-x_{k+1}\|^2).
\end{align}
Based on the update $z_{k+1}= \widetilde x_k -\alpha_k G_k $ and the choice of $\alpha_k=\alpha$, we have, for any $x^\prime\in\mathbb{R}^p$,
\begin{align}\label{eq:changeone}
\langle z_{k+1}-x^\prime, G_k \rangle  =& \frac{1}{\alpha}\langle x^\prime-z_{k+1}, z_{k+1}-\widetilde x_k \rangle \nonumber
\\ = & \frac{1}{2\alpha} (\|x^\prime-\widetilde x_k\| - \|x^\prime-z_{k+1}\|^2 - \|z_{k+1}-\widetilde x_k\|^2).
\end{align}
Let  $x^{\prime} = (1-\frac{\sqrt{\alpha\mu_x}}{2})z_k + \frac{\sqrt{\alpha \mu_x}}{2}$ and recall $\widetilde x_k = \eta_kx_k + (1-\eta_k)z_k$. Then, we have
\begin{align}\label{eq:xprimes}
\|x^\prime - \widetilde x_k\|^2 = & \Big\|\frac{\sqrt{\alpha\mu_x}}{2}(x_{k+1}-z_k)+ \frac{\sqrt{\alpha\mu_x}}{\sqrt{\alpha\mu_x}+2}(z_k-x_k)\Big\|^2  \nonumber
\\ =& \Big\| \frac{\sqrt{\alpha\mu_x}}{2}(x_{k+1}-x_k)+ \frac{\alpha\mu_x}{2(\sqrt{\alpha\mu_x}+2)}(z_k-x_k)  \Big\|^2 \nonumber
\\ \overset{(i)}= & \frac{\alpha\mu_x}{4} \Big\|(1-\frac{\sqrt{\alpha\mu_x}}{2})(x_{k+1}-x_k) + \frac{\sqrt{\alpha\mu_x}}{2} (x_{k+1}-\widetilde x_k)\Big\|^2 \nonumber
\\ \leq &  \frac{\alpha\mu_x}{4} \big(1-\frac{\sqrt{\alpha\mu_x}}{2}\big) \|x_{k+1}-x_k\|^2 +   \frac{\alpha\mu_x\sqrt{\alpha\mu_x}}{8}\|x_{k+1}-\widetilde x_k\|^2,
\end{align}
where $(i)$ follows because $\widetilde x_k - x_k=\frac{2}{2+\sqrt{\alpha\mu_x} }(z_k-x_k)$.  Then, substituting \cref{eq:xprimes} into \cref{eq:changeone}, adding \cref{eq:gdangle} and \cref{eq:changeone},  and cancelling out several negative terms, we have
\begin{align}\label{eq:gkquick}
\big \langle G_k, &\frac{\sqrt{\alpha\mu_x}}{2}(z_{k+1}-x) + (1-\frac{\sqrt{\alpha\mu_x}}{2}) (z_{k+1}-z_k) \big\rangle \nonumber
\\\leq& \frac{\sqrt{\alpha\mu_x}\mu_x}{8}\|x-\widetilde x_k\|^2-\frac{1}{2\alpha} \|z_{k+1}-\widetilde x_k\|^2 - \frac{\mu_x\sqrt{\alpha\mu_x}}{16} \|x_{k+1}-\widetilde x_k\|^2\nonumber
\\&-\frac{\mu_x}{4} \|x-x_{k+1}\|^2 -\frac{2\mu_x -\sqrt{\alpha\mu_x}\mu_x}{16} \|x_k-x_{k+1}\|^2.
\end{align}
Next, we characterize the smoothness property of $\Phi(x)$. Using the form of $\nabla \Phi(x)$ in \cref{hyperG}, and based on Assumptions~\ref{fg:smooth},~\ref{g:hessiansJaco} and Assumption~\ref{bounded_f_assump} that $\|\nabla_yf(\cdot,\cdot)\|\leq U$, we have, for any $x_1,x_2\in\mathbb{R}^p$, 
\begin{align*}
\|\nabla\Phi(&x_1) - \nabla\Phi(x_2)\| \nonumber
\\\leq& \|\nabla_x f(x_1,y^*(x_1)) -\nabla_x f(x_2,y^*(x_2))  \| \nonumber
\\&+ \|\nabla_x \nabla_y g(x_1,y^*(x_1)) \nabla_y^2 g(x_1,y^*(x_1))^{-1}\nabla_y f(x_1,y^*(x_1)) \nonumber
\\&\hspace{0.5cm}-\nabla_x \nabla_y g(x_2,y^*(x_2)) \nabla_y^2 g(x_2,y^*(x_2))^{-1}\nabla_y f(x_2,y^*(x_2))\| \nonumber
\\  \leq & L_x\|x_1-x_2\| + L_{xy}\|y^*(x_1)-y^*(x_2)\| + \frac{\widetilde L_{xy}}{\mu_y} (L_{xy}\|x_1-x_2\|
+L_y\|y^*(x_1)-y^*(x_2)\|)  \nonumber
\\&+ \Big(\frac{U\rho_{xy}}{\mu_y}+\frac{\widetilde L_{xy}U\rho_{yy}}{\mu_y^2}\Big)(\|x_1-x_2\|+\|y^*(x_1)-y^*(x_2)\|),
\end{align*}
which, combined with Lemma 2.2 in~\cite{ghadimi2018approximation} that $\|y^*(x_1)-y^*(x_2)\|\leq \frac{\widetilde L_{xy}}{\mu_y} \|x_1-x_2\|$, yields 
\begin{align}\label{smooth_phi}
\|&\nabla\Phi(x_1) - \nabla\Phi(x_2)\| \nonumber
\\&\leq \Big (\underbrace{L_x+ \frac{2L_{xy}\widetilde L_{xy}}{\mu_y} +\Big(\frac{U\rho_{xy}}{\mu_y}+\frac{U\widetilde L_{xy}\rho_{yy}}{\mu_y^2}\Big)\Big(1+\frac{\widetilde L_{xy}}{\mu_y}\Big)+\frac{\widetilde L^2_{xy}L_y}{\mu^2_y}}_{L_\Phi}\Big) \|x_1-x_2\|.
\end{align}
Then, based on the above $L_\Phi$-smoothness of $\Phi(\cdot)$, we have 
\begin{align}\label{eq:phismooth}
\Phi(z_{k+1}) \leq & \Phi(\widetilde x_k) + \langle \nabla \Phi(\widetilde x_k), z_{k+1}-\widetilde x_k\rangle  + \frac{L_\Phi}{2} \|z_{k+1}-\widetilde x_k\|^2 \nonumber
\\=& \big(1-\frac{\sqrt{\alpha\mu_x}}{2}\big)(\Phi(\widetilde x_k) + \langle \nabla \Phi(\widetilde x_k), z_{k+1}-\widetilde x_k\rangle) \nonumber
\\&+ \frac{\sqrt{\alpha\mu_x}}{2}(\Phi(\widetilde x_k) + \langle \nabla \Phi(\widetilde x_k), z_{k+1}-\widetilde x_k\rangle)+\frac{L_\Phi}{2} \|z_{k+1}-\widetilde x_k\|^2.
\end{align}
Adding \cref{eq:gkquick} and~\cref{eq:phismooth} yields
\begin{align*}
\Phi(z_{k+1})\leq &  \big(1-\frac{\sqrt{\alpha\mu_x}}{2}\big)(\Phi(\widetilde x_k) + \langle \nabla \Phi(\widetilde x_k), z_{k}-\widetilde x_k\rangle) + \frac{\sqrt{\alpha\mu_x}}{2}(\Phi(\widetilde x_k) + \langle \nabla \Phi(\widetilde x_k), x-\widetilde x_k\rangle)  \nonumber
\\&+\big \langle \nabla\Phi(\widetilde x_k)-G_k, \frac{\sqrt{\alpha\mu_x}}{2}(z_{k+1}-x) + (1-\frac{\sqrt{\alpha\mu_x}}{2}) (z_{k+1}-z_k) \big\rangle  \nonumber
\\&+ \frac{\sqrt{\alpha\mu_x}\mu_x}{8}\|x-\widetilde x_k\|^2-\frac{1}{2\alpha}(1-\alpha L_\Phi) \|z_{k+1}-\widetilde x_k\|^2 - \frac{\mu_x\sqrt{\alpha\mu_x}}{16} \|x_{k+1}-\widetilde x_k\|^2\nonumber
\\&-\frac{\mu_x}{4} \|x-x_{k+1}\|^2 -\frac{2\mu_x -\sqrt{\alpha\mu_x}\mu_x}{16} \|x_k-x_{k+1}\|^2,
\end{align*}
which, in conjunction with the strong-convexity of $\Phi(\cdot)$, $\sqrt{\alpha \mu_x}\leq 1$ and $\alpha\leq \frac{1}{2L_\Phi}$, yields 
\begin{align}\label{eq:orginalsca}
\Phi(z_{k+1})\leq& \big(1-\frac{\sqrt{\alpha\mu_x}}{2}\big) \big(\Phi(z_k) -\frac{\mu_x}{2}\|z_k-\widetilde x_k\|^2\big) + \frac{\sqrt{\alpha\mu_x}}{2} \big(\Phi(x) -\frac{\mu_x}{2}\|x-\widetilde x_k\|^2 \big)   \nonumber
\\&+\big \langle \nabla\Phi(\widetilde x_k)-G_k, \frac{\sqrt{\alpha\mu_x}}{2}(z_{k+1}-x) + (1-\frac{\sqrt{\alpha\mu_x}}{2}) (z_{k+1}-z_k) \big\rangle \nonumber
\\&+\frac{\sqrt{\alpha\mu_x}\mu_x}{8}\|x-\widetilde x_k\|^2-\frac{1}{4\alpha} \|z_{k+1}-\widetilde x_k\|^2 - \frac{\mu_x\sqrt{\alpha\mu_x}}{16} \|x_{k+1}-\widetilde x_k\|^2.
\end{align}
Note that  we have the equality that 
\begin{align}\label{eq:xzk1}
\frac{\sqrt{\alpha\mu_x}}{2}(z_{k+1}-x) + &(1-\frac{\sqrt{\alpha\mu_x}}{2}) (z_{k+1}-z_k) \nonumber
\\& =  (z_{k+1}-\widetilde x_k) + \frac{\sqrt{\alpha\mu_x}}{2} (\widetilde x_k - x) + (1-\frac{\sqrt{\alpha\mu_x}}{2})(\widetilde x_k-z_k).
\end{align}
Then, using \cref{eq:xzk1} and the Cauchy-Schwarz inequality, we have
\begin{align}\label{eq:gkdistance}
\big \langle \nabla\Phi&(\widetilde x_k)-G_k, \frac{\sqrt{\alpha\mu_x}}{2}(z_{k+1}-x) + (1-\frac{\sqrt{\alpha\mu_x}}{2}) (z_{k+1}-z_k) \big\rangle \nonumber
\\\leq & \Big(2\alpha+\frac{1}{2\mu_x} +\frac{\sqrt{\alpha\mu_x}}{4\mu_x}\Big)\|\nabla\Phi(\widetilde x_k)-G_k\|^2 + \frac{1}{8\alpha}\|z_{k+1}-\widetilde x_k\|^2
+   \frac{\sqrt{\alpha\mu_x}\mu_x}{8} \|\widetilde x_k - x\| \nonumber
\\&  + (1-\frac{\sqrt{\alpha\mu_x}}{2})\frac{\mu_x}{2} \|z_k-\widetilde x_k\|^2.
\end{align}
Substituting \cref{eq:gkdistance} into \cref{eq:orginalsca} and cancelling out negative terms, we have 
\begin{align}\label{cocoinasca}
\Phi(z_{k+1})\leq & \big(1-\frac{\sqrt{\alpha\mu_x}}{2}\big) \Phi(z_k) + \frac{\sqrt{\alpha\mu_x}}{2} \Phi(x) -\frac{1}{8\alpha} \|z_{k+1}-\widetilde x_k\|^2 - \frac{\mu_x\sqrt{\alpha\mu_x}}{16} \|x_{k+1}-\widetilde x_k\|^2 \nonumber
\\&+ \Big(2\alpha+\frac{1}{2\mu_x} +\frac{\sqrt{\alpha\mu_x}}{4\mu_x}\Big)\|\nabla\Phi(\widetilde x_k)-G_k\|^2.
\end{align}
We next upper-bound the hypergradient estimation error $\|\nabla\Phi(\widetilde x_k)-G_k\|^2$. Recall that 
\begin{align}
G_k:= \nabla_x f(\widetilde x_k,y_k^N) -\nabla_x \nabla_y g(\widetilde x_k,y_k^N)v_k^M,
\end{align}
where $v_k^M$ is the $M^{th}$ step output of the heavy-ball method for solving $$\min_v Q(v):=\frac{1}{2}v^T\nabla_y^2 g(\widetilde x_k,y_k^N) v - v^T
\nabla_y f(\widetilde x_k,y^N_k)$$
Then, based on the convergence result of the heavy-ball method in~\cite{badithela2019analysis} with the stepsizes $\lambda=\frac{4}{(\sqrt{\widetilde L_y}+\sqrt{\mu_y})^2}$ and $\theta=\max\big\{\big(1-\sqrt{\lambda\mu_y}\big)^2,\big(1-\sqrt{\lambda\widetilde L_y}\big)^2\big\}$, we have 
\begin{align}\label{gg:opascas}
\|v_k^M - \nabla_y^2 g(\widetilde x_k,y_k^N)^{-1}\nabla_y f(\widetilde x_k,y^N_k) \| \leq  &\Big(\frac{\sqrt{\kappa_y}-1}{\sqrt{\kappa_y}+1}\Big)^M \Big\| \nabla_y^2 g(\widetilde x_k,y_k^N)^{-1}\nabla_y f(\widetilde x_k,y^N_k)\Big\| \nonumber
\\\overset{(i)}\leq& \frac{U}{\mu_y}\Big(\frac{\sqrt{\kappa_y}-1}{\sqrt{\kappa_y}+1}\Big)^M, 
\end{align}
where $(i)$ follows from Assumption~\ref{bounded_f_assump} that  $\|\nabla_y f(\cdot,\cdot)\|\leq U$. Let $y_k^*=\argmin_{y} g(\widetilde x_k,y)$. Then, based on the form of $\nabla\Phi(x)$ in \cref{hyperG}, we have 
\begin{align}\label{jingyisang}
\|G_k-&\nabla \Phi(\widetilde x_k)\| \nonumber
\\\leq & \| \nabla_x f( \widetilde x_k,y_k^N) -\nabla_x f(\widetilde x_k,y^*_k)\| + \widetilde L_{xy}\|v_k^M- \nabla_y^2 g(\widetilde x_k,y^*_k)^{-1}\nabla_y f(\widetilde x_k,y^*_k) \|  \nonumber
\\&+\frac{\|\nabla_y f(\widetilde  x_k,y^*_k) \|}{\mu_y}  \|\nabla_x \nabla_y g(\widetilde  x_k,y_k^N)-\nabla_x \nabla_y g(\widetilde x_k,y^*_k)\| \nonumber
\\\leq & L_y \|y^*_k-y_k^N\| + \widetilde L_{xy} \|v_k^M- \nabla_y^2 g(\widetilde x_k,y_k^N)^{-1}\nabla_y f(\widetilde x_k,y^N_k) \| \nonumber
\\&+ \widetilde L_{xy}\big\|\nabla_y^2 g(\widetilde x_k,y_k^N)^{-1}\nabla_y f(\widetilde x_k,y^N_k)-\nabla_y^2 g(\widetilde x_k,y^*_k)^{-1}\nabla_y f(\widetilde x_k,y^*_k) \big\|+\frac{U\rho_{xy}}{\mu_y} \|y_k^N-y^*_k\| \nonumber
\\\overset{(i)}\leq&\Big(L_y +\frac{\widetilde L_{xy}L_y}{\mu_y} +\Big(\frac{\rho_{xy}}{\mu_y}+\frac{\widetilde L_{xy}\rho_{yy}}{\mu_y^2}\Big)U\Big)\|y_k^N-y^*_k\| + \frac{U\widetilde L_{xy}}{\mu_y}\Big(\frac{\sqrt{\kappa_y}-1}{\sqrt{\kappa_y}+1}\Big)^M,
\end{align}
where $(i)$ follows from \cref{gg:opascas}. Note that $y_k^N$ is obtained as the $N^{th}$ step output of AGD. Then, based on the analysis in \cite{nesterov2003introductory} for AGD, we have 
\begin{align}\label{eq:taun}
\|y_k^N-y_k^*\|^2\leq & \frac{\widetilde L_y +\mu_y}{\mu_y} \|y_k^0-y^*_k\|^2 \exp\Big(-\frac{N}{\sqrt{\kappa_y}}\Big) =  \frac{\widetilde L_y +\mu_y}{\mu_y} \|y_{k-1}^N-y^*_k\|^2 \exp\Big(-\frac{N}{\sqrt{\kappa_y}}\Big) \nonumber
\\\leq &\frac{2(\widetilde L_y +\mu_y)}{\mu_y} \exp\Big(-\frac{N}{\sqrt{\kappa_y}}\Big) (\|y_{k-1}^N-y_{k-1}^*\|^2 + \|y_{k-1}^*-y_k^*\|^2) \nonumber
\\\leq &\underbrace{\frac{2(\widetilde L_y +\mu_y)}{\mu_y} \exp\Big(-\frac{N}{\sqrt{\kappa_y}}\Big) }_{\tau_N}(\|y_{k-1}^N-y_{k-1}^*\|^2 + \kappa_y\|\widetilde x_k - \widetilde x_{k-1}\|^2),
\end{align}
which, in conjunction with $\widetilde x_k - \widetilde x_{k-1}= \eta_k (x_k-\widetilde x_{k-1}) + (1-\eta_k)(z_k - \widetilde x_{k-1})$, yields
\begin{align}\label{eq:telejjs}
\|y_k^N-y_k^*\|^2 \leq & \tau_N \|y_{k-1}^N-y_{k-1}^*\|^2+ \kappa_y\eta_k\tau_N\|x_k-\widetilde x_{k-1}\|^2 \nonumber
\\&+\kappa_y(1-\eta_k)\tau_N \|z_k-\widetilde x_{k-1}\|^2.
\end{align} 
Telescoping \cref{eq:telejjs} over $k$ yields
\begin{align*}
\|y_k^N-y_k^*\|^2\leq \tau_N^k\| y_0^N-y_0^*  \|^2 + \sum_{i=0}^{k-1}\tau_N^{k-i}\kappa_y\eta_k\|x_{i+1}-\widetilde x_{i}\|^2  +\sum_{i=0}^{k-1}\tau_N^{k-i}\kappa_y(1-\eta_k)\|z_{i+1}-\widetilde x_{i}\|^2, 
\end{align*}
which, in conjunction with \cref{cocoinasca} and \cref{jingyisang} and letting $x=x^*$, yields
\begin{align}\label{tele:bigeq}
\Phi(z_{k+1})-\Phi(x^*)\leq & \big(1-\frac{\sqrt{\alpha\mu_x}}{2}\big) (\Phi(z_k) -\Phi(x^*)-\frac{1}{8\alpha} \|z_{k+1}-\widetilde x_k\|^2 - \frac{\mu_x\sqrt{\alpha\mu_x}}{16} \|x_{k+1}-\widetilde x_k\|^2 \nonumber
\\&+ \lambda\sum_{i=0}^{k-1}\tau_N^{k-i}\kappa_y\eta_k\|x_{i+1}-\widetilde x_{i}\|^2  +\lambda\sum_{i=0}^{k-1}\tau_N^{k-i}\kappa_y(1-\eta_k)\|z_{i+1}-\widetilde x_{i}\|^2 \nonumber
\\&+\Delta+ \lambda \tau_N^k \|y_0^*-y_0^N\|^2,
\end{align}
where $\Delta$ and $\lambda$ are given by 
\begin{small}
\begin{align}\label{def:lambda}
\Delta =& \Big(4\alpha+\frac{1}{\mu_x} +\frac{\sqrt{\alpha\mu_x}}{2\mu_x}\Big)\frac{U^2\widetilde L^2_{xy}}{\mu^2_y}\Big(\frac{\sqrt{\kappa_y}-1}{\sqrt{\kappa_y}+1}\Big)^{2M} \nonumber
\\\lambda =& \Big(4\alpha+\frac{1}{\mu_x} +\frac{\sqrt{\alpha\mu_x}}{2\mu_x}\Big) \Big(L_y +\frac{\widetilde L_{xy}L_y}{\mu_y} +\Big(\frac{\rho_{xy}}{\mu_y}+\frac{\widetilde L_{xy}\rho_{yy}}{\mu_y^2}\Big)U\Big)^2.
\end{align}
\end{small}
\hspace{-0.12cm} Telescoping \cref{tele:bigeq} over $k$ from $0$ to $K-1$ and noting that $0<\eta_k\leq 1$, we have 
\begin{small}
\begin{align}
\Phi(z_{K})-\Phi(x^*) \leq & \big(1-\frac{\sqrt{\alpha\mu_x}}{2}\big)^K(\Phi(z_{0})-\Phi(x^*)) - \frac{1}{8\alpha}\sum_{k=0}^{K-1} \big(1-\frac{\sqrt{\alpha\mu_x}}{2}\big)^{K-1-k}\|z_{k+1}-\widetilde x_k\|^2 \nonumber
\\&- \frac{\mu_x\sqrt{\alpha\mu_x}}{16}\sum_{k=0}^{K-1} \big(1-\frac{\sqrt{\alpha\mu_x}}{2}\big)^{K-1-k}\|x_{k+1}-\widetilde x_k\|^2 + \frac{2\Delta}{\sqrt{\alpha\mu_x}} \nonumber
\\&+\sum_{k=0}^{K-1} \big(1-\frac{\sqrt{\alpha\mu_x}}{2}\big)^{K-1-k}\lambda \tau_N^k \|y_0^*-y_0^N\|^2 \nonumber
\\&+ \lambda\sum_{k=0}^{K-1} \big(1-\frac{\sqrt{\alpha\mu_x}}{2}\big)^{K-1-k}\sum_{i=0}^{k-1}\tau_N^{k-i}\kappa_y\|x_{i+1}-\widetilde x_{i}\|^2  \nonumber
\\&+\lambda\sum_{k=0}^{K-1} \big(1-\frac{\sqrt{\alpha\mu_x}}{2}\big)^{K-1-k}\sum_{i=0}^{k-1}\tau_N^{k-i}\kappa_y\|z_{i+1}-\widetilde x_{i}\|^2, \nonumber
\end{align}
\end{small}
\hspace{-0.12cm}which, in conjunction with the fact that $k\leq K-1$, yields
\begin{align}\label{eq:zkcgsaca}
\Phi(z_{K})-&\Phi(x^*) \leq  \big(1-\frac{\sqrt{\alpha\mu_x}}{2}\big)^K(\Phi(z_{0})-\Phi(x^*)) - \frac{1}{8\alpha}\sum_{k=0}^{K-1} \big(1-\frac{\sqrt{\alpha\mu_x}}{2}\big)^{K-1-k}\|z_{k+1}-\widetilde x_k\|^2 \nonumber
\\&- \frac{\mu_x\sqrt{\alpha\mu_x}}{16}\sum_{k=0}^{K-1} \big(1-\frac{\sqrt{\alpha\mu_x}}{2}\big)^{K-1-k}\|x_{k+1}-\widetilde x_k\|^2 + \frac{2\Delta}{\sqrt{\alpha\mu_x}} \nonumber
\\&+\sum_{k=0}^{K-1} \big(1-\frac{\sqrt{\alpha\mu_x}}{2}\big)^{K-1-k}\lambda \tau_N^k \|y_0^*-y_0^N\|^2 \nonumber
\\&+ \frac{2\tau_N\lambda\kappa_y}{\sqrt{\alpha\mu_x}}\sum_{i=0}^{K-2}\tau_N^{K-2-i}\|x_{i+1}-\widetilde x_{i}\|^2  + \frac{2\tau_N\lambda\kappa_y}{\sqrt{\alpha\mu_x}}\sum_{i=0}^{K-2}\tau_N^{K-2-i}\|z_{i+1}-\widetilde x_{i}\|^2.
\end{align}
Recall the definition of $\tau_N$ in \cref{eq:taun}. Then, choose $N$ such that 
\begin{align}\label{eq:nrequire}
\tau_N=\frac{2(\widetilde L_y +\mu_y)}{\mu_y} \exp\Big(-\frac{N}{\sqrt{\kappa_y}}\Big) \leq \min\Big\{ \frac{\sqrt{\mu_x}}{16\lambda\kappa_y\sqrt{\alpha}}, \frac{\alpha\mu_x^2}{32\lambda\kappa_y},\big(1-\frac{\sqrt{\alpha\mu_x}}{2}\big)^2\Big\}, 
\end{align}
which, in conjunction with \cref{eq:zkcgsaca}, yields  
\begin{align*}
\Phi(z_{K})-&\Phi(x^*) \leq  \big(1-\frac{\sqrt{\alpha\mu_x}}{2}\big)^K\Big(\Phi(z_{0})-\Phi(x^*)+\frac{2\lambda  \|y_0^*-y_0^N\|^2}{\sqrt{\alpha\mu_x}}\Big)+ \frac{2\Delta}{\sqrt{\alpha\mu_x}}. 
\end{align*}
Then, based on the definitions of $\lambda$ and $\Delta$ in~\cref{def:lambda} and $L_\Phi$ in~\cref{smooth_phi}, to achieve $\Phi(z^K)-\Phi(x^*)\leq \epsilon$, we have 
\begin{align}\label{eq:com_km}
K&\leq \mathcal{O}\Big( \sqrt{\frac{1}{\mu_x\mu_y^3}}\log\frac{\mbox{\small poly}(\mu_x,\mu_y,U,\Phi(x_{0})-\Phi(x^*))}{\epsilon} \Big) \nonumber
\\M &\leq \mathcal{O} \Big(\sqrt{\frac{\widetilde L_y}{\mu_y}}\log \frac{\mbox{\small poly}(\mu_x,\mu_y,U)}{\epsilon}\Big).
\end{align}
In addition, it follows from \cref{eq:nrequire} that 
\begin{align}\label{eq:com_n}
N\leq \mathcal{O}\Big(\sqrt{\frac{\widetilde L_y}{\mu_y}}\log (\mbox{\small poly}(\mu_x,\mu_y,U))\Big).
\end{align} 
Based on \cref{eq:com_km} and \cref{eq:com_n}, the total complexity is given by 
\begin{align*}
\mathcal{C}_{\text{\normalfont fun}}(\mathcal{A},\epsilon) &\leq \mathcal{O}(n_J+n_H + n_G)\leq \mathcal{O}(K+KM+KN) \nonumber
\\&\leq \mathcal{O}\Big(\sqrt{\frac{\widetilde L_y}{\mu_x\mu_y^4}}\log\frac{\mbox{\small poly}(\mu_x,\mu_y,U,\Phi(x_{0})-\Phi(x^*))}{\epsilon}  \log \frac{\mbox{\small poly}(\mu_x,\mu_y,U)}{\epsilon} \Big), 
\end{align*}
which finishes the proof. 

\section{Proof of \Cref{convex_upper_BG}}
Let $\widetilde x^*$ be the minimizer of $\widetilde \Phi(\cdot)$. Then, applying the results in \Cref{upper_srsr} to $\widetilde\Phi(x)$ with the strongly-convex parameter $\mu_x = \frac{\epsilon}{B^2}$ and choosing $N=\Theta \Big(\sqrt{\frac{\widetilde L_y}{\mu_y}}\log (\mbox{\small poly}(B,\epsilon,\mu_y,U))\Big)$, we have 
\begin{align*}
\widetilde\Phi(z_{K})-&\widetilde\Phi(\widetilde x^*) \leq  \big(1-\frac{\sqrt{\epsilon}}{2\sqrt{2L_{\widetilde\Phi}}B}\big)^K\Big(\widetilde \Phi(z_{0})-\widetilde \Phi(\widetilde x^*)+\frac{2\sqrt{2L_{\widetilde\Phi}}B\widetilde\lambda  \|y_0^*-y_0^N\|^2}{\sqrt{\alpha\epsilon}}\Big)+ \frac{2\widetilde\Delta\sqrt{2L_{\widetilde\Phi}}B}{\sqrt{\epsilon}},
\end{align*}
where $\widetilde \Delta$ and $\widetilde \lambda$ take the same forms as $\Delta$ and $\lambda$ in \cref{def:lambda} with $\mu_x$ being replaced by $\frac{\epsilon}{B^2}$. By choosing $M = \Theta \Big(\sqrt{\frac{\widetilde L_y}{\mu_y}}\log \frac{\mbox{\small poly}(B,\epsilon,\mu_y,U)}{\epsilon}\Big)$ in $\widetilde \Delta$,  we have $\frac{2\widetilde\Delta\sqrt{2L_{\widetilde\Phi}}B}{\sqrt{\epsilon}}\leq \frac{\epsilon}{4}$, and hence
\begin{align*}
\widetilde\Phi(z_{K})-&\widetilde\Phi(\widetilde x^*) \leq  \big(1-\frac{\sqrt{\epsilon}}{2\sqrt{2L_{\widetilde\Phi}}B}\big)^K\Big(\widetilde \Phi(z_{0})-\widetilde \Phi(\widetilde x^*)+\frac{2\sqrt{2L_{\widetilde\Phi}}B\widetilde\lambda  \|y_0^*-y_0^N\|^2}{\sqrt{\alpha\epsilon}}\Big)+ \frac{\epsilon}{4},
\end{align*}
which, in conjunction with $\widetilde\Phi(z_K)\geq\Phi(z_K)$, $\widetilde\Phi(\widetilde x^*)\leq\widetilde\Phi(x^*)=\Phi(x^*)+\frac{\epsilon}{2B^2}\|x^*\|^2$ and $z_0=0$, yields 
\begin{align}\label{maiyigelunhuiba}
\Phi(z_{K})-\Phi( x^*) \leq & \big(1-\frac{\sqrt{\epsilon}}{2\sqrt{2L_{\widetilde\Phi}}B}\big)^K\Big(\Phi(0)-\widetilde \Phi(\widetilde x^*)+\frac{2\sqrt{2L_{\widetilde\Phi}}B\widetilde\lambda  \|y_0^*-y_0^N\|^2}{\sqrt{\alpha\epsilon}}\Big)\nonumber
\\&+ \frac{\epsilon}{4} +\frac{\epsilon}{2B^2}\|x^*\|^2.
\end{align}
Based on \cref{ggsmidacposcs1}, we have $\Phi(0)-\widetilde \Phi(\widetilde x^*) \leq \Phi(0)-\Phi(x^*)$, which, combined with $\|x^*\|= B$ and $K=\Theta \Big(B \sqrt{\frac{1}{\epsilon\mu_y^3}}\log\frac{\mbox{\small poly}(\epsilon,\mu_y,B,U,\Phi(x_{0})-\Phi(x^*))}{\epsilon} \Big)$, yields  $\Phi(z_{K})-\Phi( x^*) \leq \epsilon$. Then, the total complexity satisfies 
\begin{align}
\mathcal{C}_{\text{\normalfont fun}}(\mathcal{A},\epsilon) &\leq \mathcal{O}(n_J+n_H + n_G)\leq \mathcal{O}(K+KM+KN) \nonumber
\\&\leq \mathcal{O}\Big(B\sqrt{\frac{\widetilde L_y}{\epsilon\mu_y^4}}\log\frac{\mbox{\small poly}(\epsilon,\mu_y,B,U,\Phi(x_{0})-\Phi(x^*))}{\epsilon}  \log \frac{\mbox{\small poly}(B,\epsilon,\mu_y,U)}{\epsilon} \Big), 
\end{align}
which finishes the proof. 

\vskip 0.2in

\bibliography{bilevel}

\begin{thebibliography}{58}
\providecommand{\natexlab}[1]{#1}
\providecommand{\url}[1]{\texttt{#1}}
\expandafter\ifx\csname urlstyle\endcsname\relax
  \providecommand{\doi}[1]{doi: #1}\else
  \providecommand{\doi}{doi: \begingroup \urlstyle{rm}\Url}\fi

\bibitem[Aiyoshi and Shimizu(1984)]{aiyoshi1984solution}
Eitaro Aiyoshi and Kiyotaka Shimizu.
\newblock A solution method for the static constrained stackelberg problem via
  penalty method.
\newblock \emph{IEEE Transactions on Automatic Control}, 29\penalty0
  (12):\penalty0 1111--1114, 1984.

\bibitem[Al-Khayyal et~al.(1992)Al-Khayyal, Horst, and Pardalos]{al1992global}
Faiz~A Al-Khayyal, Reiner Horst, and Panos~M Pardalos.
\newblock Global optimization of concave functions subject to quadratic
  constraints: an application in nonlinear bilevel programming.
\newblock \emph{Annals of Operations Research}, 34\penalty0 (1):\penalty0
  125--147, 1992.

\bibitem[Arora et~al.(2020)Arora, Du, Kakade, Luo, and
  Saunshi]{arora2020provable}
Sanjeev Arora, Simon~S Du, Sham Kakade, Yuping Luo, and Nikunj Saunshi.
\newblock Provable representation learning for imitation learning via bi-level
  optimization.
\newblock In \emph{Proc. International Conference on Machine Learning (ICML)},
  2020.

\bibitem[Badithela and Seiler(2019)]{badithela2019analysis}
Apurva Badithela and Peter Seiler.
\newblock Analysis of the heavy-ball algorithm using integral quadratic
  constraints.
\newblock In \emph{2019 American Control Conference (ACC)}, pages 4081--4085.
  IEEE, 2019.

\bibitem[Bae and Grosse(2020)]{bae2020delta}
Juhan Bae and Roger Grosse.
\newblock {Delta-STN}: Efficient bilevel optimization for neural networks using
  structured response {Jacobians}.
\newblock \emph{arXiv preprint arXiv:2010.13514}, 2020.

\bibitem[Bertinetto et~al.(2018)Bertinetto, Henriques, Torr, and
  Vedaldi]{bertinetto2018meta}
Luca Bertinetto, Joao~F Henriques, Philip Torr, and Andrea Vedaldi.
\newblock Meta-learning with differentiable closed-form solvers.
\newblock In \emph{International Conference on Learning Representations
  (ICLR)}, 2018.

\bibitem[Boyd et~al.(2004)Boyd, Boyd, and Vandenberghe]{boyd2004convex}
Stephen Boyd, Stephen~P Boyd, and Lieven Vandenberghe.
\newblock \emph{Convex {Optimization}}.
\newblock Cambridge University Press, 2004.

\bibitem[Bracken and McGill(1973)]{bracken1973mathematical}
Jerome Bracken and James~T McGill.
\newblock Mathematical programs with optimization problems in the constraints.
\newblock \emph{Operations Research}, 21\penalty0 (1):\penalty0 37--44, 1973.

\bibitem[Carmon et~al.(2019)Carmon, Duchi, Hinder, and
  Sidford]{carmon2019lower}
Yair Carmon, John~C Duchi, Oliver Hinder, and Aaron Sidford.
\newblock Lower bounds for finding stationary points i.
\newblock \emph{Mathematical Programming}, pages 1--50, 2019.

\bibitem[Chen et~al.(2021)Chen, Sun, and Yin]{chen2021single}
Tianyi Chen, Yuejiao Sun, and Wotao Yin.
\newblock A single-timescale stochastic bilevel optimization method.
\newblock \emph{arXiv preprint arXiv:2102.04671}, 2021.

\bibitem[Cutkosky and Orabona(2019)]{cutkosky2019momentum}
Ashok Cutkosky and Francesco Orabona.
\newblock Momentum-based variance reduction in non-convex sgd.
\newblock In \emph{Advances in Neural Information Processing Systems
  (NeurIPS)}, 2019.

\bibitem[Davis and Drusvyatskiy(2019)]{davis2019stochastic}
Damek Davis and Dmitriy Drusvyatskiy.
\newblock Stochastic model-based minimization of weakly convex functions.
\newblock \emph{SIAM Journal on Optimization}, 29\penalty0 (1):\penalty0
  207--239, 2019.

\bibitem[Domke(2012)]{domke2012generic}
Justin Domke.
\newblock Generic methods for optimization-based modeling.
\newblock In \emph{Artificial Intelligence and Statistics (AISTATS)}, pages
  318--326, 2012.

\bibitem[Edmunds and Bard(1991)]{edmunds1991algorithms}
Thomas~Arthur Edmunds and Jonathan~F Bard.
\newblock Algorithms for nonlinear bilevel mathematical programs.
\newblock \emph{IEEE Transactions on Systems, Man, and Cybernetics},
  21\penalty0 (1):\penalty0 83--89, 1991.

\bibitem[Fallah et~al.(2020)Fallah, Mokhtari, and
  Ozdaglar]{fallah2020convergence}
Alireza Fallah, Aryan Mokhtari, and Asuman Ozdaglar.
\newblock On the convergence theory of gradient-based model-agnostic
  meta-learning algorithms.
\newblock In \emph{International Conference on Artificial Intelligence and
  Statistics (AISTATS)}, pages 1082--1092. PMLR, 2020.

\bibitem[Feurer and Hutter(2019)]{feurer2019hyperparameter}
Matthias Feurer and Frank Hutter.
\newblock Hyperparameter optimization.
\newblock In \emph{Automated Machine Learning}, pages 3--33. Springer, Cham,
  2019.

\bibitem[Finn et~al.(2017)Finn, Abbeel, and Levine]{finn2017model}
Chelsea Finn, Pieter Abbeel, and Sergey Levine.
\newblock Model-agnostic meta-learning for fast adaptation of deep networks.
\newblock In \emph{Proc. International Conference on Machine Learning (ICML)},
  pages 1126--1135, 2017.

\bibitem[Foo et~al.(2008)Foo, Do, and Ng]{foo2008efficient}
Chuan-sheng Foo, Chuong~B Do, and Andrew~Y Ng.
\newblock Efficient multiple hyperparameter learning for log-linear models.
\newblock In \emph{Advances in Neural Information Processing Systems
  (NeurIPS)}, pages 377--384, 2008.

\bibitem[Franceschi et~al.(2017)Franceschi, Donini, Frasconi, and
  Pontil]{franceschi2017forward}
Luca Franceschi, Michele Donini, Paolo Frasconi, and Massimiliano Pontil.
\newblock Forward and reverse gradient-based hyperparameter optimization.
\newblock In \emph{International Conference on Machine Learning (ICML)}, pages
  1165--1173, 2017.

\bibitem[Franceschi et~al.(2018)Franceschi, Frasconi, Salzo, Grazzi, and
  Pontil]{franceschi2018bilevel}
Luca Franceschi, Paolo Frasconi, Saverio Salzo, Riccardo Grazzi, and
  Massimiliano Pontil.
\newblock Bilevel programming for hyperparameter optimization and
  meta-learning.
\newblock In \emph{International Conference on Machine Learning (ICML)}, pages
  1568--1577, 2018.

\bibitem[Ghadimi and Lan(2016)]{ghadimi2016accelerated}
Saeed Ghadimi and Guanghui Lan.
\newblock Accelerated gradient methods for nonconvex nonlinear and stochastic
  programming.
\newblock \emph{Mathematical Programming}, 156\penalty0 (1-2):\penalty0 59--99,
  2016.

\bibitem[Ghadimi and Wang(2018)]{ghadimi2018approximation}
Saeed Ghadimi and Mengdi Wang.
\newblock Approximation methods for bilevel programming.
\newblock \emph{arXiv preprint arXiv:1802.02246}, 2018.

\bibitem[Grazzi et~al.(2020)Grazzi, Franceschi, Pontil, and
  Salzo]{grazzi2020iteration}
Riccardo Grazzi, Luca Franceschi, Massimiliano Pontil, and Saverio Salzo.
\newblock On the iteration complexity of hypergradient computation.
\newblock In \emph{Proc. International Conference on Machine Learning (ICML)},
  2020.

\bibitem[Griewank(1993)]{griewank1993some}
Andreas Griewank.
\newblock Some bounds on the complexity of gradients, jacobians, and hessians.
\newblock In \emph{Complexity in Numerical Optimization}, pages 128--162. World
  Scientific, 1993.

\bibitem[Guo et~al.(2021)Guo, Xu, Yin, Jin, and Yang]{guo2021stochastic}
Zhishuai Guo, Yi~Xu, Wotao Yin, Rong Jin, and Tianbao Yang.
\newblock On stochastic moving-average estimators for non-convex optimization.
\newblock \emph{arXiv preprint arXiv:2104.14840}, 2021.

\bibitem[Hansen et~al.(1992)Hansen, Jaumard, and Savard]{hansen1992new}
Pierre Hansen, Brigitte Jaumard, and Gilles Savard.
\newblock New branch-and-bound rules for linear bilevel programming.
\newblock \emph{SIAM Journal on Scientific and Statistical Computing},
  13\penalty0 (5):\penalty0 1194--1217, 1992.

\bibitem[He et~al.(2020)He, Ye, Shen, and Zhang]{he2020milenas}
Chaoyang He, Haishan Ye, Li~Shen, and Tong Zhang.
\newblock Milenas: Efficient neural architecture search via mixed-level
  reformulation.
\newblock In \emph{Proceedings of the IEEE/CVF Conference on Computer Vision
  and Pattern Recognition (CVPR)}, pages 11993--12002, 2020.

\bibitem[Hong et~al.(2020)Hong, Wai, Wang, and Yang]{hong2020two}
Mingyi Hong, Hoi-To Wai, Zhaoran Wang, and Zhuoran Yang.
\newblock A two-timescale framework for bilevel optimization: Complexity
  analysis and application to actor-critic.
\newblock \emph{arXiv preprint arXiv:2007.05170}, 2020.

\bibitem[Huang and Huang(2021)]{huang2021biadam}
Feihu Huang and Heng Huang.
\newblock Biadam: Fast adaptive bilevel optimization methods.
\newblock \emph{arXiv preprint arXiv:2106.11396}, 2021.

\bibitem[Jenni and Favaro(2018)]{jenni2018deep}
Simon Jenni and Paolo Favaro.
\newblock Deep bilevel learning.
\newblock In \emph{Proceedings of the European conference on computer vision
  (ECCV)}, pages 618--633, 2018.

\bibitem[Ji et~al.(2020{\natexlab{a}})Ji, Lee, Liang, and
  Poor]{ji2020convergence}
Kaiyi Ji, Jason~D Lee, Yingbin Liang, and H~Vincent Poor.
\newblock Convergence of meta-learning with task-specific adaptation over
  partial parameter.
\newblock In \emph{Advances in Neural Information Processing Systems
  (NeurIPS)}, 2020{\natexlab{a}}.

\bibitem[Ji et~al.(2020{\natexlab{b}})Ji, Yang, and Liang]{ji2020multi}
Kaiyi Ji, Junjie Yang, and Yingbin Liang.
\newblock Multi-step model-agnostic meta-learning: Convergence and improved
  algorithms.
\newblock \emph{arXiv preprint arXiv:2002.07836}, 2020{\natexlab{b}}.

\bibitem[Ji et~al.(2021)Ji, Yang, and Liang]{ji2021bilevel}
Kaiyi Ji, Junjie Yang, and Yingbin Liang.
\newblock Bilevel optimization: Convergence analysis and enhanced design.
\newblock In \emph{International Conference on Machine Learning (ICML)}, pages
  4882--4892. PMLR, 2021.

\bibitem[Khanduri et~al.(2021)Khanduri, Zeng, Hong, Wai, Wang, and
  Yang]{khanduri2021near}
Prashant Khanduri, Siliang Zeng, Mingyi Hong, Hoi-To Wai, Zhaoran Wang, and
  Zhuoran Yang.
\newblock A near-optimal algorithm for stochastic bilevel optimization via
  double-momentum.
\newblock \emph{arXiv preprint arXiv:2102.07367}, 2021.

\bibitem[Li et~al.(2020)Li, Gu, and Huang]{li2020improved}
Junyi Li, Bin Gu, and Heng Huang.
\newblock Improved bilevel model: Fast and optimal algorithm with theoretical
  guarantee.
\newblock \emph{arXiv preprint arXiv:2009.00690}, 2020.

\bibitem[Lin et~al.(2020)Lin, Jin, and Jordan]{lin2020near}
Tianyi Lin, Chi Jin, and Michael~I Jordan.
\newblock Near-optimal algorithms for minimax optimization.
\newblock In \emph{Conference on Learning Theory (COLT)}, pages 2738--2779.
  PMLR, 2020.

\bibitem[Liu et~al.(2019)Liu, Simonyan, and Yang]{liu2018darts}
Hanxiao Liu, Karen Simonyan, and Yiming Yang.
\newblock Darts: Differentiable architecture search.
\newblock In \emph{International Conference on Learning Representations
  (ICLR)}, 2019.

\bibitem[Liu et~al.(2020)Liu, Mu, Yuan, Zeng, and Zhang]{liu2020generic}
Risheng Liu, Pan Mu, Xiaoming Yuan, Shangzhi Zeng, and Jin Zhang.
\newblock A generic first-order algorithmic framework for bi-level programming
  beyond lower-level singleton.
\newblock In \emph{International Conference on Machine Learning (ICML)}, 2020.

\bibitem[Liu et~al.(2021)Liu, Liu, Yuan, Zeng, and Zhang]{liu2021value}
Risheng Liu, Xuan Liu, Xiaoming Yuan, Shangzhi Zeng, and Jin Zhang.
\newblock A value-function-based interior-point method for non-convex bi-level
  optimization.
\newblock In \emph{Proc. International Conference on Machine Learning (ICML)},
  2021.

\bibitem[Lv et~al.(2007)Lv, Hu, Wang, and Wan]{lv2007penalty}
Yibing Lv, Tiesong Hu, Guangmin Wang, and Zhongping Wan.
\newblock A penalty function method based on {Kuhn--Tucker} condition for
  solving linear bilevel programming.
\newblock \emph{Applied Mathematics and Computation}, 188\penalty0
  (1):\penalty0 808--813, 2007.

\bibitem[Mackay et~al.(2018)Mackay, Vicol, Lorraine, Duvenaud, and
  Grosse]{mackay2018self}
Matthew Mackay, Paul Vicol, Jonathan Lorraine, David Duvenaud, and Roger
  Grosse.
\newblock Self-tuning networks: Bilevel optimization of hyperparameters using
  structured best-response functions.
\newblock In \emph{International Conference on Learning Representations
  (ICLR)}, 2018.

\bibitem[Maclaurin et~al.(2015)Maclaurin, Duvenaud, and
  Adams]{maclaurin2015gradient}
Dougal Maclaurin, David Duvenaud, and Ryan Adams.
\newblock Gradient-based hyperparameter optimization through reversible
  learning.
\newblock In \emph{International Conference on Machine Learning (ICML)}, pages
  2113--2122, 2015.

\bibitem[Mehra and Hamm(2019)]{mehra2019penalty}
Akshay Mehra and Jihun Hamm.
\newblock Penalty method for inversion-free deep bilevel optimization.
\newblock \emph{arXiv preprint arXiv:1911.03432}, 2019.

\bibitem[Moore(2010)]{moore2010bilevel}
Gregory~M Moore.
\newblock \emph{Bilevel programming algorithms for machine learning model
  selection}.
\newblock Rensselaer Polytechnic Institute, 2010.

\bibitem[Nesterov(2003)]{nesterov2003introductory}
Yurii Nesterov.
\newblock \emph{{Introductory Lectures on Convex Optimization: A Basic
  Course}}, volume~87.
\newblock Springer Science \& Business Media, 2003.

\bibitem[Nesterov et~al.(2018)]{nesterov2018lectures}
Yurii Nesterov et~al.
\newblock \emph{{Lectures on Convex Optimization}}, volume 137.
\newblock Springer, 2018.

\bibitem[Okuno et~al.(2018)Okuno, Takeda, and Kawana]{okuno2018hyperparameter}
Takayuki Okuno, Akiko Takeda, and Akihiro Kawana.
\newblock Hyperparameter learning via bilevel nonsmooth optimization.
\newblock \emph{arXiv preprint arXiv:1806.01520}, 2018.

\bibitem[Ouyang and Xu(2019)]{ouyang2019lower}
Yuyuan Ouyang and Yangyang Xu.
\newblock Lower complexity bounds of first-order methods for convex-concave
  bilinear saddle-point problems.
\newblock \emph{Mathematical Programming}, pages 1--35, 2019.

\bibitem[Pedregosa(2016)]{pedregosa2016hyperparameter}
Fabian Pedregosa.
\newblock Hyperparameter optimization with approximate gradient.
\newblock In \emph{International Conference on Machine Learning (ICML)}, pages
  737--746, 2016.

\bibitem[Raghu et~al.(2019)Raghu, Raghu, Bengio, and Vinyals]{raghu2019rapid}
Aniruddh Raghu, Maithra Raghu, Samy Bengio, and Oriol Vinyals.
\newblock Rapid learning or feature reuse? towards understanding the
  effectiveness of {MAML}.
\newblock \emph{International Conference on Learning Representations (ICLR)},
  2019.

\bibitem[Rajeswaran et~al.(2019)Rajeswaran, Finn, Kakade, and
  Levine]{rajeswaran2019meta}
Aravind Rajeswaran, Chelsea Finn, Sham~M Kakade, and Sergey Levine.
\newblock Meta-learning with implicit gradients.
\newblock In \emph{Advances in Neural Information Processing Systems
  (NeurIPS)}, pages 113--124, 2019.

\bibitem[Shaban et~al.(2019)Shaban, Cheng, Hatch, and
  Boots]{shaban2019truncated}
Amirreza Shaban, Ching-An Cheng, Nathan Hatch, and Byron Boots.
\newblock Truncated back-propagation for bilevel optimization.
\newblock In \emph{International Conference on Artificial Intelligence and
  Statistics (AISTATS)}, pages 1723--1732, 2019.

\bibitem[Shi et~al.(2005)Shi, Lu, and Zhang]{shi2005extended}
Chenggen Shi, Jie Lu, and Guangquan Zhang.
\newblock An extended {Kuhn--Tucker} approach for linear bilevel programming.
\newblock \emph{Applied Mathematics and Computation}, 162\penalty0
  (1):\penalty0 51--63, 2005.

\bibitem[Sinha et~al.(2020)Sinha, Khandait, and Mohanty]{sinha2020gradient}
Ankur Sinha, Tanmay Khandait, and Raja Mohanty.
\newblock A gradient-based bilevel optimization approach for tuning
  hyperparameters in machine learning.
\newblock \emph{arXiv preprint arXiv:2007.11022}, 2020.

\bibitem[Xie et~al.(2018)Xie, Zheng, Liu, and Lin]{xie2018snas}
Sirui Xie, Hehui Zheng, Chunxiao Liu, and Liang Lin.
\newblock Snas: stochastic neural architecture search.
\newblock In \emph{International Conference on Learning Representations
  (ICLR)}, 2018.

\bibitem[Yang et~al.(2021)Yang, Ji, and Liang]{yang2021provably}
Junjie Yang, Kaiyi Ji, and Yingbin Liang.
\newblock Provably faster algorithms for bilevel optimization.
\newblock \emph{arXiv preprint arXiv:2106.04692}, 2021.

\bibitem[Yu and Zhu(2020)]{yu2020hyper}
Tong Yu and Hong Zhu.
\newblock Hyper-parameter optimization: A review of algorithms and
  applications.
\newblock \emph{arXiv preprint arXiv:2003.05689}, 2020.

\bibitem[Zhang et~al.(2019)Zhang, Hong, and Zhang]{zhang2019lower}
Junyu Zhang, Mingyi Hong, and Shuzhong Zhang.
\newblock On lower iteration complexity bounds for the saddle point problems.
\newblock \emph{arXiv preprint arXiv:1912.07481}, 2019.

\end{thebibliography}

\end{document}